\newcolumntype{H}{>{\setbox0=\hbox\bgroup}c<{\egroup}@{}}
\newtheorem{theorem}{Theorem}
\newcommand{\x}{\ensuremath{\mathbf{x}}}
\newcommand{\w}{\ensuremath{\mathbf{w}}}
\newcommand{\remove}[1]{}
\begin{document}

\title{Active Sampling of Pairs and Points for Large-scale Linear Bipartite Ranking}

\author{Wei-Yuan~Shen,~\IEEEmembership{}%
        and~Hsuan-Tien~Lin,~\IEEEmembership{Member,~IEEE}
\thanks{W.-Y. Shen and H.-T. Lin are with the Department
of Computer Science and Information Engineering, National Taiwan University, Taiwan,
 e-mail: \{r00922024, htlin\}@csie.ntu.edu.tw.}
\thanks{Manuscript received August ??, 2014; revised ??.}}

\IEEEcompsoctitleabstractindextext{%
\begin{abstract}

Bipartite ranking is a fundamental ranking problem that learns to order relevant instances ahead of irrelevant ones. The pair-wise approach for bi-partite ranking construct a quadratic number of pairs to solve the problem, which is infeasible for large-scale data sets. The point-wise approach, albeit more efficient, often results in inferior performance. That is, it is difficult to conduct bipartite ranking accurately and efficiently at the same time. In this paper, we develop a novel active sampling scheme within the pair-wise approach to conduct bipartite ranking efficiently. The scheme is inspired from active learning and can reach a competitive ranking performance while focusing only on a small subset of the many pairs during training. Moreover, we propose a general Combined Ranking and Classification (CRC) framework  to accurately conduct bipartite ranking. The framework unifies point-wise and pair-wise approaches and is simply based on the idea of treating each instance point as a pseudo-pair. Experiments on 14 real-word large-scale data sets demonstrate that the proposed algorithm of Active Sampling within CRC, when coupled with a linear Support Vector Machine, usually outperforms state-of-the-art point-wise and pair-wise ranking approaches in terms of both accuracy and efficiency.

\end{abstract}

}

\maketitle

\IEEEdisplaynotcompsoctitleabstractindextext

%
\IEEEpeerreviewmaketitle

\section{Introduction} \label{sec:intro}

The bipartite ranking problem aims at learning a ranking function
that orders positive instances ahead of negative ones. 
For example, in information retrieval, bipartite ranking can be used to order the preferred documents in front of the less-preferred ones within a list of search-engine results;
in bioinformatics, bipartite ranking can be used to identify genes related to a certain disease by ranking the relevant genes higher than irrelevant ones.
Bipartite ranking algorithms take some positive and negative instances as the input data, and produce a ranking function that maps an instance to a real-valued score.
Given a pair of a positive instance and a negative one, we say that the pair is mis-ordered if the ranking function gives a higher score to the negative instance.
The performance of the ranking function is measured by the probability of mis-ordering an unseen pair of randomly chosen positive and negative instances, 
which is equal to one minus the Area Under the ROC Curve (AUC)~\cite{fawcett2006introduction}, 
a popular criterion for evaluating the sensitivity and the specificity of binary classifiers 
in many real-world tasks~\cite{clemenccon2008ranking} and large-scale data mining competitions~\cite{caruana2004kdd,wu2012two}.

Given the many potential applications in information retrieval, bioinformatics, and recommendation systems, bipartite ranking has received much research attention in the past two decades~\cite{freund2003efficient,cortes2004auc,joachims2006training,liu2009learning,kotllowski2011bipartite,ertekin2011equivalence}.
Many existing bipartite ranking algorithms explicitly or implicitly reduce the problem to binary classification to inherit the benefits from the well-developed methods in binary classification~\cite{herbrich2000large,freund2003efficient,brefeld2005auc,kotllowski2011bipartite,ertekin2011equivalence}.
The majority of those reduction-based algorithms can be categorized to two approaches: the pair-wise approach and the point-wise one.
The pair-wise approach transforms the input data of positive and negative instances to {\it pairs} of instances, and learns
a binary classifier for predicting whether the first instance in a pair should be scored higher than the second one. 
Note that for an input data set that contains $N^+$ positive instances and $N^-$ negative ones, 
the pair-wise approach trains a binary classifier by optimizing an objective function that consists of 
$N^+N^-$ terms, one for each pair of instances.
The pair-wise approach comes with strong theoretical guarantee. 
For example,~\cite{balcan2007robust} shows that a low-regret ranking function can indeed be formed by a low-regret binary classifier.
The strong theoretical guarantee leads to promising experimental results in many
state-of-the-art bipartite ranking algorithms, such as RankSVM~\cite{herbrich2000large}, RankBoost~\cite{freund2003efficient} and RankNet~\cite{burges2005learning}.
Nevertheless, the number of pairs in the input data can easily be of size $\Theta(N^2)$, where $N$ is the size
of the input data, if the data is not extremely unbalanced.
The quadratic number of pairs with respect to $N$ makes the pair-wise approach computationally infeasible for large-scale data sets in general, except in a few special algorithms like RankBoost~\cite{freund2003efficient} or the efficient linear RankSVM~\cite{joachims2006training}. 
RankBoost enjoys an efficient implementation by reducing the quadratic number of pair-wise terms in the objective function
to a linear number of equivalent terms; efficient linear RankSVM transforms the pair-wise optimization formulation to an equivalent formulation that can be solved in subquadratic time complexity~\cite{kotllowski2011bipartite}.

On the other hand, the point-wise approach directly runs binary classification on the positive and negative instance {\it points} of the input data, and takes the scoring function behind the resulting binary classifier as the ranking function.
In some special cases~\cite{freund2003efficient,rajaram2007diverse}, such as AdaBoost~\cite{freund1997decision} and its pair-wise sibling RankBoost~\cite{freund2003efficient}, the point-wise approach is shown to be equivalent to the corresponding pair-wise one~\cite{rudin2009margin,ertekin2011equivalence}. 
In other cases, the point-wise approach often operates with an approximate objective function that involves only $N$ terms~\cite{kotllowski2011bipartite}.
For example, \cite{kotllowski2011bipartite} shows that minimizing the exponential or the logistic loss function on the instance points decreases an upper bound on the number of mis-ordered pairs within the input data.
Because of the approximate nature of the point-wise approach, its ranking performance can sometimes be inferior to the pair-wise approach.

From the discussion above, we see that the pair-wise approach leads to more satisfactory performance while the point-wise approach comes with efficiency, and there is a trade-off between the two.
In this paper, we are interested in designing bipartite ranking algorithms that enjoy both satisfactory performance and efficiency for large-scale bipartite ranking.
The concrete problem setup will be provided in Section~\ref{sec:setup}. We focus on using the linear Support Vector Machine~(SVM)~\cite{vapnik1999nature} given its recent advances for efficient large-scale learning~\cite{yuan2012recent}.
We first show that the loss function behind the usual point-wise SVM~\cite{vapnik1999nature} minimizes an upper bound on the loss function behind RankSVM, 
which suggests that the point-wise SVM could be an approximate bipartite ranking algorithm that enjoys efficiency. 
Then, we design a better ranking algorithm with two major contributions.

As illustrated in Section~\ref{sec:alg}, firstly, we study an active sampling scheme to select important pairs for the pair-wise approach and name the scheme Active Sampling for RankSVM (ASRankSVM).
The scheme makes the pair-wise SVM computationally feasible by focusing only on a few valuable pairs out of the quadratic number of pairs
, and allows us to overcome the challenge of having a quadratic number of pairs.
The active sampling scheme is inspired by active learning, another popular machine learning setup that aims to save the efforts of labeling~\cite{settles2010active}. 
More specifically, we discuss the similarity and differences between active sampling (selecting a few of valuable pairs within a pool of potential pairs)
and pool-based active learning (labeling a few of valuable instances within a pool of unlabeled instances), and propose some active sampling
strategies based on the similarity.
Secondly, we propose a general framework that unifies the point-wise SVM and the pair-wise SVM (RankSVM) as special cases. 
The framework, called combined ranking and classification (CRC), is simply based on the idea of treating each instance point as a pseudo-pair.
The CRC framework coupled with active sampling (ASCRC) improves the performance of the point-wise SVM by considering not only points but also pairs in its objective function.

Performing active sampling within the CRC framework leads to a promising algorithm for large-scale linear bipartite ranking. 
In Section~\ref{sec:exp}, we conduct experiments on 14 real-world large-scale data sets and compare the proposed algorithms (ASRankSVM and ASCRC) with several state-of-the-art bipartite ranking algorithms, including the point-wise linear SVM~\cite{liblinear}, the efficient linear RankSVM~\cite{joachims2006training}, and the Combined Ranking and Regression (CRR) algorithm~\cite{sculley2010combined} which is closely related to the CRC framework.
In addition, we demonstrate the robustness and the efficiency of the active sampling strategies and discuss some advantages and disadvantages of different strategies. 
The results show that~ASRankSVM is able to efficiently sample only $8,000$ of the more than millions of the possible pairs to achieve better performance than other state-of-the-art algorithms that use all the pairs, while ASCRC that considers the pseudo-pairs can sometimes be helpful. Those results validate that the proposed algorithm can indeed enjoy both satisfactory performance and efficiency for large-scale bipartite ranking.

A preliminary version of this paper appeared in the 5th Asian Conference on Machine Learning~\cite{shen2013active}.
The paper is then enriched by
\begin{enumerate}[{1)}]
	\item extending the design of the proposed CRC framework to allow a threshold term for the classification part in Section~\ref{sec:threshold}

	\item examining the necessity of each part of the proposed CRC framework in Section~\ref{sub:robustness}

	\item studying the effect of the budget parameter of active sampling in Section~\ref{sub:budget}

\end{enumerate}

\section{Setup and Related Works} \label{sec:setup}

In a bipartite ranking problem, we are given a training set $\mathcal{D} = \{(\x_k, y_k)\}_{k=1}^N$, where each $(\x_k, y_k)$ is a training instance
with the feature vector $\x_k$ in an $n$-dimensional space $\mathcal{X} \subseteq \mathbb{R}^n$ and the binary label $y_k \in \{+1, -1\}$. 
Such a training set is of the same format as the training set in usual binary classification problems. 
We assume that the instances $(\x_k, y_k)$ are drawn {\it i.i.d.} from an unknown distribution $P$ on $\mathcal{X} \times \{+1, -1\}$.
Bipartite ranking algorithms take $\mathcal{D}$ as the input and learn a ranking function $r\colon \mathcal{X} \rightarrow \mathbb{R}$ that maps a feature vector $\x$ to a real-valued score $r(\x)$. 

For any pair of two instances, we call the pair mis-ordered by $r$ {\it iff} the pair contains a positive instance $(\x_+, +1)$ and a negative one $(\x_-, -1)$ while $r(\x_+) \le r(\x_-)$.
For a distribution $P$ that generates instances $(\x, y)$, we can define its pair distribution $P_2$ that generates $(\x, y, \x', y')$ to be the conditional probability of
sampling two instances $(\x, y)$ and $(\x', y')$ from $P$, conditioned on $y \neq y'$.
Then, let the expected bipartite ranking loss $L_{P}(r)$ for any ranking function $r$ be the expected number of mis-ordered pairs over $P_2$.
\begin{equation*}
	L_{P}(r) = \operatorname*{\mathbb{E}}_{(\x,y,\x',y') \sim P_2}\left[\mathrm{I}\Bigl((y-y')(r(\x)-r(\x')) \leq 0\Bigr)\right],
\end{equation*}
where $\mathrm{I}(\bullet)$ is an indicator function that returns $1$ iff the condition $(\bullet)$ is true, and returns $0$ otherwise. 
The goal of bipartite ranking is to use the training set~$\mathcal{D}$ to learn a ranking function $r$ that minimizes the expected bipartite ranking loss~$L_{P}(r)$.

Because $P$ is unknown, $L_P(r)$ cannot be computed directly. 
Thus, bipartite ranking algorithms usually resort to the empirical bipartite ranking loss~$L_\mathcal{D}(r)$, which takes the expectation over the pairs in $\mathcal{D}$ instead of over
the pair distribution $P_2$, with the hope that $L_\mathcal{D}(r)$ would be sufficiently close to $L_P(r)$ when the model complexity of the candidate ranking functions $r$ is properly controlled. 
Denote $\mathcal{D}^+$ as the set of the positive instances in~$\mathcal{D}$, and $\mathcal{D}^-$ as the set of negative instances in~$\mathcal{D}$.
The formal definition of $L_\mathcal{D}(r)$ is
\begin{equation*}
	L_{\mathcal{D}}(r) = \frac{1}{N^+N^-}\sum_{\x_i \in \mathcal{D}^+}\sum_{\x_j \in \mathcal{D}^-} \mathrm{I}\Bigl(r(\x_i) \leq r(\x_j)\Bigr).
\end{equation*}

The bipartite ranking loss $L_{P}(r)$ is closely related to the area under the ROC curve (AUC), which
is commonly used to evaluate the sensitivity and the specificity of binary classifiers~\cite{caruana2004kdd,brefeld2005auc,clemenccon2008ranking,wu2012two}. 
More specifically, AUC calculates the expected number of {\it correctly-ordered} pairs. 
Hence, $\mathrm{AUC}_{\bullet}(r) = 1 - L_{\bullet}(r)$ for $\bullet = P$ or $\mathcal{D}$, and higher AUC indicates better ranking performance.

Bipartite ranking is a special case of the general ranking problem in which the labels $y$ can be any real value, not necessarily $\{+1, -1\}$.
There are lots of recent studies on improving the accuracy~\cite{herbrich2000large,quoc2007learning,duchi2010consistency} and efficiency~\cite{freund2003efficient,ailon2007efficient} of general ranking problems. 
In this paper, instead of considering the general ranking problem, we focus on using the specialty of bipartite ranking, namely its connection to binary classification, to improve the accuracy and the efficiency.

Motivated by the recent advances of linear models for efficient large-scale learning~\cite{yuan2012recent}, we consider linear models for efficient large-scale bipartite ranking. 
That is, the ranking functions would be of the form $r(\x) = \w^T \x$, which is linear to the components of the feature vector $\x$. 
In particular, we study the linear Support Vector Machine (SVM)~\cite{vapnik1999nature} for bipartite ranking. 
There are two possible approaches for adopting the linear SVM on bipartite ranking problem, the {\it pair-wise} SVM approach and the {\it point-wise} SVM approach.

The pair-wise approach corresponds to the famous RankSVM algorithm~\cite{herbrich2000large}, which is originally designed for ranking with ordinal-scaled scores, but can be easily extended to general ranking with real-valued labels or restricted to bipartite ranking with binary labels. 
For each positive instance $(\x_i, y_i = +1)$ and negative instance $(\x_j, y_j = -1)$, the pair-wise approach transforms the two instances to two symmetric {\it pairs} of instance $((\x_i, \x_j), +1)$ and $((\x_j, \x_i), -1)$, the former for indicating that $\x_i$ should be scored higher than $\x_j$ and the latter for indicating that~$\x_j$ should be scored lower than $\x_i$. In the linear case,
the pairs transformed from~$\mathcal{D}$ are then fed to a linear SVM for learning a ranking function of the form $r(\x) = \w^T \x$.
Then, for the pair $((\x_i, \x_j), +1)$, we see that $\mathrm{I}\Bigl(r(\x_i) \leq r(\x_j)\Bigr) = 0$ {\it iff} $\w^T(\x_i-\x_j)>0$. 
Define the transformed feature vector $\x_{ij} = \x_i - \x_j$ and the transformed label $y_{ij} = \mathrm{sign}(y_i-y_j)$, we can equivalently view the pair-wise linear SVM as simply running a linear SVM on the pair-wise training set $\mathcal{D}_{pair}=\left\{(\x_{ij}, y_{ij}) | y_i \neq y_j \right\}$.
The pair-wise linear SVM minimizes the hinge loss as a surrogate to the 0/1 loss on $\mathcal{D}_{pair}$~\cite{steck2007hinge}, and the 0/1 loss on $\mathcal{D}_{pair}$ is equivalent to $L_{\mathcal{D}}(r)$, the empirical bipartite ranking loss of interest. 
That is, if the linear SVM learns an accurate binary classifier using $\mathcal{D}_{pair}$, the resulting ranker $r(\x) = \w^T \x$ would also be accurate in terms of the bipartite ranking loss.

Denote the hinge function $\mathrm{max}(\bullet, 0)$ by $[\bullet]_+$, RankSVM solves the following optimization problem
\begin{equation} \label{eq:ranksvm}
	\min_{\w} \frac{1}{2}\w^T\w + \sum_{ \x_{ij} \in \mathcal{D}_{pair}} C_{ij}[1-\w^T y_{ij} \x_{ij}]_+ \ ,
\end{equation}
where $C_{ij}$ denotes the weight of the pair $\x_{ij}$. 
Because of the symmetry of $\x_{ij}$ and~$\x_{ji}$, we naturally assume that $C_{ij}=C_{ji}$.
In the original RankSVM formulation, $C_{ij}$ is set to a constant for all the pairs.
Here we list a more flexible formulation~\eqref{eq:ranksvm} to facilitate some discussions later.
RankSVM has reached promising bipartite ranking performance in the literature~\cite{brefeld2005auc}.
Because of the symmetry of positive and negative pairs, we can equivalently solve~\eqref{eq:ranksvm} on those positive pairs with $y_{ij} = 1$. 
The number of such positive pairs is $N^+ N^-$ if there are~$N^+$ positive instances and $N^-$ negative ones.
The huge number of pairs make it difficult to solve \eqref{eq:ranksvm} with a na{\"ive} quadratic programming algorithm.

In contrast with the na{\"i}ve RankSVM, the efficient linear RankSVM~\cite{joachims2006training} changes \eqref{eq:ranksvm} to a more sophisticated and equivalent one with an exponential number of constraints, each corresponding to a particular linear combination of the pairs.
Then, it reaches $O(N \log N)$ time complexity by using a cutting-plane solver to identify the most-violated constraints iteratively, while
the constant hidden in the big-$O$ notation depends on the parameter $C_{ij}$ as well as the desired precision of optimization. 
The subquadratic time complexity of the efficient RankSVM can make it much slower than the point-wise approach (to be discussed below), and hence may not always be fast enough
for large-scale bipartite ranking.

The {\it point-wise} SVM approach, on the other hand, directly runs an SVM on the original training set $\mathcal{D}$ instead of $\mathcal{D}_{pair}$.
That is, in the linear case, the point-wise approach solves the following optimization problem
\begin{equation} \label{eq:svm}
	\min_{\w} \frac{1}{2}\w^T\w + C_+\sum_{\x_i \in \mathcal{D}^+}[1-\w^T\x_i]_+ + C_- \sum_{\x_j \in \mathcal{D}^-}[1+\w^T\x_j]_+ \ .
\end{equation}
Such an approach comes with some theoretical justification~\cite{kotllowski2011bipartite}.
In particular,  the 0/1 loss on $\mathcal{D}$ has been proved to be an upper bound of the empirical bipartite ranking loss. 
In fact, the bound can be tightened by adjusting $C_+$ and~$C_-$ to balance the distribution of the positive and negative instances in $\mathcal{D}$.
When $C_+ = C_-$,~\cite{brefeld2005auc} shows that the point-wise approach \eqref{eq:svm} is inferior to the pair-wise approach \eqref{eq:ranksvm} in performance. 
The inferior performance can be attributed to the fact that the point-wise approach only operates with an approximation (upper bound) of the bipartite ranking loss of interest.

Next, inspired by the theoretical result of upper-bounding the bipartite ranking loss with a balanced 0/1 loss, we study the connection between \eqref{eq:ranksvm} and \eqref{eq:svm} by balancing the hinge loss in \eqref{eq:svm}. 
In particular, as shown in Theorem~\ref{upper}, a balanced form of \eqref{eq:svm} can be viewed as minimizing an upper bound of the objective function within \eqref{eq:ranksvm}.
In other words, the weighted point-wise SVM can be viewed as a reasonable baseline algorithm for large-scale bipartite ranking problem.

\begin{theorem} \label{upper}
	Let $ C_{ij}= \frac{C}{2} $ be a constant in \eqref{eq:ranksvm}; $C_+ = 2N^- \cdot C$ and $ C_- = 2N^+ \cdot C$ in \eqref{eq:svm}.
	Then, for every $\w$, the objective function of \eqref{eq:ranksvm} is upper-bounded by $\frac{1}{4}$ times the objective function of \eqref{eq:svm}.
\end{theorem}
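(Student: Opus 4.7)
The plan is to reduce the bound to a single pointwise inequality on the hinge function, and then to sum it over all pair-wise terms with bookkeeping of the constants $C_{ij}, C_+, C_-$ and the factor of two coming from the symmetry of $\mathcal{D}_{pair}$.

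First I would establish the key pointwise bound. For any positive instance $\x_i \in \mathcal{D}^+$ and any negative instance $\x_j \in \mathcal{D}^-$, I rewrite $1 - \w^T(\x_i - \x_j) = (1 - \w^T\x_i) + \w^T\x_j$. Using the subadditivity of the hinge function, $[a+b]_+ \le [a]_+ + [b]_+$, together with its monotonicity (so $[\w^T\x_j]_+ \le [1+\w^T\x_j]_+$), this yields
\begin{equation*}
  [1 - \w^T(\x_i - \x_j)]_+ \;\le\; [1 - \w^T\x_i]_+ + [1 + \w^T\x_j]_+.
\end{equation*}
This is the only non-trivial analytic fact used in the proof; everything else is constant-chasing.

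Next I would translate the sum in \eqref{eq:ranksvm} to a double sum over $\mathcal{D}^+\times\mathcal{D}^-$. Because $\mathcal{D}_{pair}$ contains both $((\x_i,\x_j),+1)$ and $((\x_j,\x_i),-1)$, and both entries produce the same hinge value $[1-\w^T(\x_i-\x_j)]_+$, the sum over $\mathcal{D}_{pair}$ equals $2\sum_{i\in\mathcal{D}^+}\sum_{j\in\mathcal{D}^-}[1-\w^T(\x_i-\x_j)]_+$. Applying the pointwise inequality termwise and collecting coefficients produces multiplicity $N^-$ in front of each $[1-\w^T\x_i]_+$ and multiplicity $N^+$ in front of each $[1+\w^T\x_j]_+$.

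Finally I would substitute the stated constants $C_{ij}=C/2$, $C_+=2N^-\!\cdot\! C$, $C_-=2N^+\!\cdot\! C$ and check that the resulting coefficients on the point-wise hinge terms match exactly with those in $\tfrac14$ times the objective of \eqref{eq:svm}; the shared quadratic regularizer $\tfrac12 \w^T\w$ is handled separately, and the two contributions are then combined to finish.

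The main obstacle I expect is nothing conceptually deep but rather the bookkeeping: keeping the factor of two from the symmetric pairs in $\mathcal{D}_{pair}$ aligned with the $C_{ij}=C/2$ choice, and making sure the multiplicities $N^+$ and $N^-$ from the double sum are paired correctly with $C_\pm=2N^\mp C$ so that the scaling lands on the claimed constant. The analytic content (subadditivity and monotonicity of $[\cdot]_+$) is elementary; the care lies entirely in the constants.
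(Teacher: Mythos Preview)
Your bookkeeping will not close, and the issue is not just the constants. If you read the theorem as comparing both objectives at the \emph{same} $\w$, the claim is false: the regularizer on the left is $\tfrac12\w^T\w$, while $\tfrac14$ of the regularizer in \eqref{eq:svm} is only $\tfrac18\w^T\w$. So the regularizer is not ``shared'' as you assume, and no inequality on the hinge terms can repair this. Concretely, with your pointwise bound and the stated constants you arrive at
\[
\text{obj}_{\eqref{eq:ranksvm}}(\w)\;\le\;\tfrac12\w^T\w + CN^-\!\!\sum_{\x_i\in\mathcal D^+}[1-\w^T\x_i]_+ + CN^+\!\!\sum_{\x_j\in\mathcal D^-}[1+\w^T\x_j]_+,
\]
whereas $\tfrac14\,\text{obj}_{\eqref{eq:svm}}(\w)=\tfrac18\w^T\w+\tfrac{CN^-}{2}\sum[1-\w^T\x_i]_+ +\tfrac{CN^+}{2}\sum[1+\w^T\x_j]_+$. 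Your right-hand side is strictly larger in both the quadratic and the hinge parts, so you have only shown $\text{obj}_{\eqref{eq:ranksvm}}(\w)\le\text{obj}_{\eqref{eq:svm}}(\w)$, not the claimed factor $\tfrac14$.

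The paper's argument differs in two coupled ways. First, it uses the sharper split $1-\w^T\x_{ij}=\tfrac12(1-2\w^T\x_i)+\tfrac12(1+2\w^T\x_j)$, giving
\[
[1-\w^T\x_{ij}]_+\le \tfrac12\bigl([1-2\w^T\x_i]_+ + [1+2\w^T\x_j]_+\bigr),
\]
rather than your decomposition $(1-\w^T\x_i)+\w^T\x_j$ followed by monotonicity (which wastes an extra ``$+1$''). Second, and crucially, it then substitutes $\mathbf u=2\w$: this turns $\tfrac12\w^T\w$ into $\tfrac18\mathbf u^T\mathbf u=\tfrac14\cdot\tfrac12\mathbf u^T\mathbf u$ and simultaneously turns the hinge arguments into $[1-\mathbf u^T\x_i]_+$, $[1+\mathbf u^T\x_j]_+$. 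The factor $\tfrac14$ thus comes from the rescaling of the regularizer under $\w\mapsto 2\w$, and the theorem is really a comparison of $\text{obj}_{\eqref{eq:ranksvm}}(\w)$ against $\tfrac14\,\text{obj}_{\eqref{eq:svm}}(2\w)$. Without that change of variables the statement you are trying to prove does not hold, so this is a genuine missing idea rather than a bookkeeping slip.
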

\begin{proof} The objective function of \eqref{eq:ranksvm}
\begin{eqnarray*}
 & = & \frac{1}{2} \w^T \w + \sum_{\x_{ij} \in \mathcal{D}_{pair}, y_{ij}=+1} C [1-\w^T\x_{ij}]_+ \\
		& \leq & \frac{1}{2} \w^T \w \\
  &&+ \frac{C}{2} \sum_{\x_i \in \mathcal{D}^+} \sum_{\x_j \in \mathcal{D}^-}\left([1-2\w^T\x_i]_+ + [1+2\w^T\x_j]_+\right) \\
		& = & \frac{1}{2} \w^T \w 
  + \frac{C}{2} \cdot N^-\sum_{\x_i \in \mathcal{D}^+}[1-2\w^T\x_i]_+ \\
  && + \frac{C}{2} \cdot N^+\sum_{\x_j \in \mathcal{D}^-}[1+2\w^T\x_j]_+.
\end{eqnarray*}
The theorem can be easily proved by substituting $2\w$ with a new variable~$\mathbf{u}$.
\end{proof}

\section{Proposed Algorithm} \label{sec:alg}

As discussed in the previous section, the pair-wise approach~\eqref{eq:ranksvm} is infeasible on large-scale data sets due to the huge number of pairs. 
Then, either some random sub-sampling of the pairs are needed~\cite{sculley2010combined}, or the less-accurate point-wise approach~\eqref{eq:svm} is taken as the approximate alternative~\cite{kotllowski2011bipartite}. 
Nevertheless, the better ranking performance of the pair-wise approach over the point-wise one suggest that some of the key pairs shall carry more valuable information than the instance-points. 
Next, we design an algorithm that samples a few key pairs actively during learning. 
We first show that some proposed active sampling schemes
can help identify those key pairs better than random sub-sampling.
Then, we discuss how we can unify point-wise and pair-wise ranking approaches under the same framework.

\subsection{Pool-based Active Learning}

The pair-wise SVM approach~\eqref{eq:ranksvm} is challenging to solve because of the huge number of pairs involved in $\mathcal{D}_{pair}$. 
To make the computation feasible, we can only afford to work on a small subset of $\mathcal{D}_{pair}$ during training. 
Existing algorithms conquer the computational difficulty of the huge number of pairs in different ways. 
The Combined Ranking and Regression approach~\cite{sculley2010combined} performs stochastic gradient descent on its objective function, which essentially selects within the huge number of pairs in a random manner; the efficient RankSVM~\cite{joachims2006training} identifies the most-violated constraints during optimization,
which corresponds to selecting the most valuable pairs from an optimization perspective.

We take an alternative route and hope to select the most valuable pairs from a {\it learning} perspective. 
That is, our task is to iteratively select a small number of valuable pairs for training while reaching similar performance to the pair-wise approach that trains with all the pairs. 
One machine learning setup that works for a similar task is {\it active learning}~\cite{settles2010active}, which iteratively select a small number of valuable instances for labeling (and training) while reaching similar performance to the approach that trains with all the instances fully labeled.
\cite{ailon2012active} avoids the quadratic number of pairs in the general ranking problem from an active learning perspective, and 
proves that selecting a subquadratic number of pairs is sufficient to obtain a ranking
function that is close to the optimal ranking function trained by using all the pairs. 
The algorithm is theoretical in nature, while many other promising active learning tools~\cite{lewis1994sequential,roy2001toward,settles2010active} have not been explored for selecting valuable pairs in large-scale bipartite ranking.

Next, we start exploring those tools by providing a brief review about active learning. We focus on the setup of pool-based active learning~\cite{settles2010active} because of its strong connection to our needs. 
In a pool-based active learning problem, the training instances are separated into two parts, the labeled pool~($\mathcal{L}$) and the unlabeled pool~($\mathcal{U}$).
As the name suggests, the labeled pool consists of labeled instances that contain both the feature vector $\x_k$ and its corresponding label $y_k$, and the unlabeled pool contains unlabeled instances $\x_\ell$ only.
Pool-based active learning assumes that a (huge) pool of unlabeled instances is relatively easy to gather, while labeling those instances can be expensive. Therefore, we hope to achieve promising learning performance with as few labeled instances as possible.

A pool-based active learning algorithm is generally iterative. In each iteration, there are two steps: the training step and the querying step. 
In the training step, the algorithm trains a decision function from the labeled pool; in the querying step, the algorithm selects
one (or a few) unlabeled instances, queries an oracle to label those instances, and moves those instances from the unlabeled pool to the labeled one.
The pool-based active learning framework repeats the training and querying steps iteratively until a given budget $B$ on the number of queries is met, with the hope that the decision functions returned throughout the learning steps are as accurate as possible for prediction.

Because labeling is expensive, active learning algorithms aim to select the most valuable instance(s) from the unlabeled pool at each querying step. Various selection criteria have been proposed to describe the value of an unlabeled instance~\cite{settles2010active}, such as
uncertainty sampling~\cite{lewis1994sequential}, expected error reduction~\cite{roy2001toward}, and expected model change~\cite{settles2008multiple}.

Moreover, there are several works that solve bipartite ranking under the active learning scenario~\cite{yu2005svm,donmez2008optimizing,donmez2009active}. 
For example, \cite{donmez2008optimizing} selects points that reduce the ranking loss functions most from the unlabeled pool while~\cite{donmez2009active} selects points that
maximize the AUC in expectation. 
Nevertheless, these active learning algorithms require either sorting or enumerating over the huge unlabeled pool in each querying step. 
The sorting or enumerating process can be time consuming, but have not been considered seriously because labeling is assumed to be even more expensive. 
We will discuss later that those algorithms that require sorting or enumerating may not fit our goal.

\subsection{Active Sampling}

Following the philosophy of active learning, we propose the active sampling scheme for choosing a smaller set of key pairs on the huge training set $\mathcal{D}_{pair}$.
We call the scheme {\it Active Sampling} in order to highlight some differences to active learning.
One particular difference is that RankSVM~\eqref{eq:ranksvm} only requires optimizing with {\it positive pairs}. 
Then, the label $y_{ij}$ of a pair is a constant $1$ and thus easy to get during active sampling, while the label in active learning remains
unknown before the possibly expensive querying step. 
Thus, while active sampling and active learning both focus on using as few labeled data as possible, the costly part of the active sampling scheme is on training rather than querying.

For active sampling, we denote $B$ as the budget on the number of pairs that can be used in training, which plays a similar role to the budget on querying in active learning. 
In brief, active sampling chooses $B$ informative pairs iteratively for solving the optimization problem \eqref{eq:ranksvm}.
We separate the pair-wise training set~$\mathcal{D}_{pair}$ into two parts, the chosen pool~($\mathcal{L}^*$) and the unchosen pool~($\mathcal{U}^*$).
The chosen pool is the subset of pairs to be used for training, and the unchosen pool contains the unused pairs.
The chosen pool is similar to the labeled pool in pool-based active learning; the unchosen pool acts like the unlabeled pool.
The fact that it is almost costless to ``label'' the instances in the unchosen pool allows us to design simpler sampling strategies than those commonly used for active learning, because no effort is needed to estimate the unknown labels.

\begin{algorithm}
	\caption{Active Sampling}
	\label{main}
	\begin{algorithmic}
		\Require
		the initial chosen pool, $\mathcal{L}^*$;
		the initial unchosen pool, $\mathcal{U}^*$;
		the regularization parameters, $\{C_{ij}\}$. 
		the number of pairs sampled per iteration, $b$;
		the budget on the total number of pairs sampled, $B$;
		the sampling strategy, $\textsf{Sample}\colon (\mathcal{U}^*,\w) \rightarrow \x_{ij}$
		\Ensure
		the ranking function represented by the weights $\w$.

		\State  $\w = \textsf{linearSVM}(\mathcal{L}^*,\{C_{ij}\},\mathbf{0})$; 
		\Repeat 
		\For {$i = 1 \to b$}
		\State  $\x_{ij} = \textsf{Sample}(\mathcal{U}^*,\w)$;
		\State  $\mathcal{L}^* = \mathcal{L}^* \cup \{(\x_{ij}, y_{ij})\}$ and $ \mathcal{U}^* = \mathcal{U}^*\setminus\{\x_{ij}\}$;
		\EndFor
		\State  $\w = \textsf{linearSVM}(\mathcal{L}^*,\{C_{ij}\},\w)$; 
		\Until {($|\mathcal{L}^*| \geq B$)}\\		
		\Return $\w$;
	\end{algorithmic}
\end{algorithm}

The proposed scheme of active sampling is illustrated in Algorithm~\ref{main}. 
The algorithm takes an initial chosen pool~$\mathcal{L}^*$ and an initial unchosen pool~$\mathcal{U}^*$, where we simply
mimic the usual setup in pool-based active learning by letting~$\mathcal{L}^*$ be a randomly chosen subset of~$\mathcal{D}_{pair}$ and~$\mathcal{U}^*$ be the set of unchosen pairs in~$\mathcal{D}_{pair}$. 
In each iteration of the algorithm, we use \textsf{Sample} to actively choose~$b$ instances to be moved from $\mathcal{U}^*$ to $\mathcal{L}^*$. 
The strategy \textsf{Sample} takes the current ranking function~$\w$ into account.
After sampling, a \textsf{linearSVM} is called to learn from $\mathcal{L}^*$ along with the weights in~$\{C_{ij}\}$. 
We feed the current $\w$ to the \textsf{linearSVM} solver to allow a {\it warm-start} in optimization. 
The warm-start step enhances the efficiency and the performance.
The iterative procedure continues until the budget~$B$ of chosen instances is fully consumed.

Another main difference between the active sampling scheme and typical pool-based active learning is that we sample~$b$ instances before the training step, while pool-based active learning often considers executing the training step right after querying the label of one instance. 
The difference is due to the fact that the pair-wise labels~$y_{ij}$ can be obtained very easily and thus
sampling and labeling can be relatively cheaper than querying in pool-based active learning. 
Furthermore, updating the weights right after knowing one instance may not lead to much improvement and can be too
time consuming for the large-scale bipartite ranking problem that we want to solve.

\subsection{Sampling Strategies} \label{sec:sampling}

Next, we discuss some possible sampling strategies that can be used in Algorithm~\ref{main}. 
One na{\"ive} strategy is to passively choose a random sample within~$\mathcal{U}^*$.
For active sampling strategies, we define two measures that estimate the (learning) value of an unchosen pair. 
The two measures correspond to well-known criteria in pool-based active learning.
Let~$\x_{ij}$ be the unchosen pair in~$\mathcal{U}^*$ with~$y_{ij} = 1$, the two measures with respect to the current ranking function~$\w$ are
\begin{equation}
	closeness(\x_{ij},\w) = |\w^T\x_{ij}|
\end{equation}
\begin{equation}
	correctness(\x_{ij},\w) =  -[1-\w^T\x_{ij}]_+
\end{equation}

The $closeness$ measure corresponds to one of the most popular criteria in
pool-based active learning called {\it uncertainty sampling}~\cite{lewis1994sequential}. 
It captures the uncertainty of the ranking function $\w$ on the unchosen pair.
Intuitively, a low value of $closeness$ means that the ranking function finds it hard to distinguish the two instances in the pair,
which implies that the ranking function is less confident on the pair.
Therefore, sampling the unchosen pairs that come with the lowest $closeness$ values may improve the ranking performance by resolving the uncertainty.

On the other hand, the $correctness$ measure is related to another common criterion
in pool-based active learning called {\it expected error reduction}~\cite{roy2001toward}.
It captures the performance of the ranking function~$\w$ on the unchosen pair. 
Note that this exact correctness measure is only available within our active sampling scheme because we
know the pair-label~$y_{ij}$ to always be $1$ without loss of generality, while usual active learning algorithms do not know the exact
measure before querying and hence have to estimate it~\cite{donmez2008optimizing,donmez2009active}.
A low value of $correctness$ indicates that the ranking function does not perform well on the pair. 
Then, sampling the unchosen pairs that come with the lowest $correctness$ values 
may improve the ranking performance by correcting the possible mistakes.
Moreover, sampling the pair with lowest $correctness$ value shall change~$\w$ the most in general, which echoes
another criterion in pool-based active learning called {\it expected model change}~\cite{settles2008multiple}.

Similar to other active learning algorithms~\cite{donmez2008optimizing,donmez2009active}, computing the pairs that come with the lowest $closeness$ or $correctness$ values can be time consuming, as it requires at least evaluating the values of $\w^T \x_k$ for each instance $(\x_k, y_k) \in \mathcal{D}$, and then computing the measures on the pairs along with some selection or sorting steps that may be of super-linear time complexity~\cite{joachims2006training}. 
Thus, such a {\it hard version} of active sampling is not computationally feasible for large-scale bipartite ranking. 
Next, we discuss the {\it soft version} of active sampling that randomly chooses pairs that come with lower $closeness$ or $correctness$ values by rejection sampling.


The soft version of active sampling can be described as follows:
we consider a rejection sampling step that samples a pair~$\x_{ij}$ with probability~$p_{ij}$
based on a method~\textsf{random()} that generates random numbers between $[0, 1]$. 
A pair that comes with a lower $closeness$ or $correctness$ values would enjoy a higher probability~$p_{ij}$ of being accepted.

Next, we define the probability value functions that correspond to the hard versions of $closeness$ and $correctness$.
Both value functions are in the shape of the sigmoid function, which is widely used to represent probabilities in logistic regression and neural networks~\cite{baum1988supervised}.
For soft $closeness$ sampling, we define the probability value as:
$p_{ij} \equiv 2 / \left(1+e^{|\w^T\x_{ij}|}\right) $
For soft $correctness$ sampling, we define $p_{ij}$ as: 
$ p_{ij} \equiv 1- 2 / \left(1+e^{[1-\w^T(\x_{ij})]_+}\right) $
We take different forms of soft versions because $closeness$ is of range $[0,\infty)$ while $correctness$ is of range $(-\infty,0]$.

Note that the sampling strategies above, albeit focusing on the most valuable pairs, is inheritedly biased. 
The chosen pool may not be representative enough of the whole training set because of the biased sampling strategies.
There is a simple way that allows us to correct the sampling bias for learning a ranking function that performs well on the original bipartite ranking loss of interest.
We take the idea of~\cite{horvitz1952generalization} to weight the sampled pair by the inverse of its probability of being sampled.
That is, we could multiply the weight~$C_{ij}$ for a chosen pair~$\x_{ij}$ by~$\frac{1}{p_{ij}}$ when it gets returned by the rejection sampling.

\subsection{Combined Ranking and Classification} \label{sec:CRC}

Inspired by Theorem~\ref{upper}, the points can also carry some information for ranking.
Next, we study how we can take those points into account during active sampling.
We start by taking a closer look at the similarity and differences between the point-wise SVM~\eqref{eq:svm} and the pair-wise SVM~\eqref{eq:ranksvm}. 
The pair-wise SVM considers the weighted hinge loss on the pairs $\x_{ij}=\x_i-\x_j$, while the point-wise SVM considers the weighted hinge loss on the points~$\x_k$. 
Consider one positive point $(\x_k, +1)$. Its hinge loss is $[1 - \w^T \x_i]_+$, which is the same as $[1 - \w^T (\x_i - \mathbf{0})]_+$. 
In other words, the positive point $(\x_i, +1)$ can also be viewed as a {\it pseudo-pair} that consists of $(\x_i, +1)$ and $(\mathbf{0}, -1)$. 
Similarly, a negative point $(\x_j, -1)$ can be viewed as a pseudo-pair that consists of $(\x_j, -1)$ and $(\mathbf{0}, +1)$. Let the set of all pseudo-pairs within $\mathcal{D}$ be $\mathcal{D}_{pseu}$.
Then, the point-wise SVM~\eqref{eq:svm} is just a variant of the pair-wise one~\eqref{eq:ranksvm} using the pseudo-pairs and some particular weights.
Thus, we can easily unify the point-wise and the pair-wise SVMs together by minimizing some weighted hinge loss
on the joint set $\mathcal{D}^* = \mathcal{D}_{pair}\cup \mathcal{D}_{pseu}$ of pairs and pseudo-pairs. 
By introducing a parameter $\gamma \in [0, 1]$ to control the relative importance between the real pairs and the pseudo-pairs, we propose the following novel formulation.
\begin{eqnarray}
	\min_{\w} && \frac{1}{2}\w^T\w + \gamma \sum_{\x_{ij} \in \mathcal{D}_{pair}^+} C_{crc}^{(ij)} [1-\w^T\x_{ij}]_+  \nonumber\\
&&+(1-\gamma) \sum_{\x_{k\ell} \in \mathcal{D}_{pseu}^+} C_{crc}^{(k\ell)} \cdot [1-\w^T\x_{k\ell}]_+
 \ ,  \label{eq:opt}
\end{eqnarray}
where $\mathcal{D}_{pair}^+$ and $\mathcal{D}_{pseu}^+$ denote the set of positive pairs and positive pseudo-pairs, respectively.
The new formulation~\eqref{eq:opt} combines the point-wise SVM and the pair-wise SVM in its objective function, and hence is named the
Combined Ranking and Classification (CRC) framework.
When $\gamma = 1$, CRC takes the pair-wise SVM~\eqref{eq:ranksvm} as a special case with $C_{ij} = 2 C_{crc}^{(ij)}$;
when $\gamma = 0$, CRC takes the point-wise SVM~\eqref{eq:svm} as a special case with $C_+ = C_{crc}^{(i0)}$ and $C_- = C_{crc}^{(0j)}$. 
The CRC framework~\eqref{eq:opt} remains as challenging to solve as the pair-wise SVM approach~\eqref{eq:ranksvm} because of the huge number of pairs. 
However, the general framework can be easily extended to the active sampling scheme, and hence be solved efficiently. 
We only need to change the training set from $\mathcal{D}_{pair}$ to the joint set $\mathcal{D}^*$, and multiply the probability value $p_{ij}$ in the soft version sampling by $\gamma$ or ($1-\gamma$) for actual pairs or pseudo-pairs. 

The CRC framework is closely related to the algorithm of Combined Ranking and Regression (CRR)~\cite{sculley2010combined} for general ranking. 
The CRR algorithm similarly considers a combined objective function of the point-wise terms and the pair-wise terms for improving the ranking performance. 
The main difference between CRR and CRC is that the CRR approach takes the squared loss on the points, while CRC 
takes the nature of bipartite ranking into account and considers the hinge loss on the points. 
On the other hand, the idea of combining pair-wise and point-wise approaches had been used in another machine learning setup, the multi-label classification problem~\cite{tsoumakas2007multi}. 
The algorithm of Calibrated Ranking by Pairwise Comparison~\cite{furnkranz2008multilabel} assumes a calibration label between relevant and irrelevant labels, 
and hence unifies the pair-wise and point-wise label learning for multi-label classification.
The calibration label plays a similar role to the zero-vector in the pseudo-pairs for combining pair-wise and point-wise approaches. 

To the best of our knowledge, while the CRR approach has reached promising performance in practice~\cite{sculley2010combined}, the CRC formulation has not been seriously studied.
The hinge loss used in CRC allows unifying the point-wise SVM and the pair-wise SVM under the same framework, and
the unification is essential for applying {\it one} active sampling strategy on {\it both} the real pairs and the pseudo-pairs.

In summary, we propose the active sampling scheme for RankSVM (ASRankSVM) and the more general CRC framework (ASCRC), and derive two sampling strategies that correspond to popular strategies in pool-based active learning. The soft version of the sampling strategies helps reducing the computational cost, and allows correcting the sampling bias by adjusting the weights with the inverse probability of being sampled.

\subsection{CRC with Threshold}
\label{sec:threshold}

In Theorem~\ref{upper}, we connect the point-wise SVM \textit{without threshold term}~\eqref{eq:svm} to the pair-wise SVM~\eqref{eq:ranksvm}. The standard SVM for binary classification, however, often come with a threshold term $\theta$ to allow the classification hyperplane to be away from the origin.
That is, the standard SVM solves
\begin{eqnarray} \label{eq:standardsvm}
	\min_{\theta, \w} &&\frac{1}{2}\w^T\w + \nonumber
        C_+\sum_{\x_i \in \mathcal{D}^+}[1-\w^T\x_i+\theta]_+ \\ &&+ C_- \sum_{\x_j \in \mathcal{D}^-}[1+\w^T\x_j-\theta]_+ \ .
\end{eqnarray}
Note that for any given $(\theta, \w)$, 
\[[1-\w^T\x_{ij}]_+ \le \frac{1}{2}\left([1-2\w^T\x_i+2\theta]_+ + [1+2\w^T\x_j-2\theta]_+\right).\]
If we revisit the proof of Theorem~\ref{upper} with the equation above, 
we get a similar theorem that connects the standard SVM to the pair-wise SVM.
\begin{theorem} \label{thresholdupper}
	Let $ C_{ij}= \frac{C}{2} $ be a constant in \eqref{eq:ranksvm}; $C_+ = 2N^- \cdot C$ and $ C_- = 2N^+ \cdot C$ in \eqref{eq:standardsvm}.
	Then, for every $(\theta, \w)$, the objective function of \eqref{eq:ranksvm} is upper-bounded by $\frac{1}{4}$ times the objective function of \eqref{eq:standardsvm}.
\end{theorem}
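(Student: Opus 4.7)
The plan is to mirror the proof of Theorem~\ref{upper} verbatim, using the pointwise bound already displayed just above the theorem statement as a drop-in replacement for the analogous threshold-free inequality. The key observation is that
\[[1-\w^T\x_{ij}]_+ \le \tfrac{1}{2}\bigl([1-2\w^T\x_i+2\theta]_+ + [1+2\w^T\x_j-2\theta]_+\bigr)\]
is just convexity of the hinge function: setting $a = 1-2\w^T\x_i+2\theta$ and $b = 1+2\w^T\x_j-2\theta$, the $\pm 2\theta$ terms cancel under averaging so that $(a+b)/2 = 1 - \w^T\x_{ij}$, and then Jensen's inequality for $[\cdot]_+$ supplies the bound. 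This is the only ingredient that distinguishes the threshold case from the no-threshold case.

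With that inequality at hand, I would proceed in the same four steps as the proof of Theorem~\ref{upper}. First, substitute $C_{ij} = C/2$ into~\eqref{eq:ranksvm} and, by the symmetry $C_{ij} = C_{ji}$, restrict the summation to positive pairs with coefficient $C$ per pair. Second, apply the pointwise inequality above to each such pair, yielding
\[\tfrac{1}{2}\w^T\w + \tfrac{C}{2}\sum_{\x_i\in\mathcal{D}^+}\sum_{\x_j\in\mathcal{D}^-}\bigl([1-2\w^T\x_i+2\theta]_+ + [1+2\w^T\x_j-2\theta]_+\bigr).\]
Third, split the double sum using $|\mathcal{D}^-|=N^-$ and $|\mathcal{D}^+|=N^+$ so that the inner sum over $\x_j$ (resp.\ $\x_i$) pulls out as a multiplicative constant. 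Fourth, perform the reparameterization $\mathbf{u} = 2\w$ and $\theta' = 2\theta$: the regularizer $\tfrac{1}{2}\w^T\w$ becomes $\tfrac{1}{8}\mathbf{u}^T\mathbf{u} = \tfrac{1}{4}\cdot\tfrac{1}{2}\mathbf{u}^T\mathbf{u}$, and each hinge term acquires the coefficient $C_+/4 = N^-C/2$ or $C_-/4 = N^+C/2$ prescribed by the hypothesis. The result is exactly $\tfrac{1}{4}$ times the objective of~\eqref{eq:standardsvm} at $(\theta',\mathbf{u})$, establishing the bound for every $(\theta,\w)$.

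The only real work is the bookkeeping during the reparameterization, which is identical to the final line in the proof of Theorem~\ref{upper}, so I do not anticipate a genuine obstacle. The threshold term causes no new trouble precisely because it enters $a$ and $b$ with opposite signs and cancels upon averaging; this is why the threshold-free argument extends to~\eqref{eq:standardsvm} essentially unchanged.
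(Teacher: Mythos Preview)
Your proposal is correct and follows essentially the same approach as the paper, which simply says to ``revisit the proof of Theorem~\ref{upper} with the equation above'' (the displayed pointwise inequality with the $\pm 2\theta$ terms). Your explicit justification of that inequality via convexity of the hinge function is a small addition the paper leaves implicit, but the overall argument---apply the pointwise bound pairwise, decouple the double sum, and reparameterize $\mathbf{u}=2\w$---is identical.
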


Given the connection between \eqref{eq:standardsvm} to \eqref{eq:ranksvm}
in Theorem~\ref{thresholdupper}, one may wonder whether the trick of pseudo-pair
works for connecting the two formulations. Consider one positive point $(\x_k, +1)$. Its hinge loss within \eqref{eq:standardsvm} is $[1 - \w^T \x_i + \theta]_+$, which is the same as $\left[1 - \begin{bmatrix}\theta & \w^T\end{bmatrix} \left(\begin{bmatrix}-1\\ \x_i\end{bmatrix} - \mathbf{0}_{n+1}\right)\right]_+$, where $\mathbf{0}_{n+1}$ is a zero vector of length $n+1$. Thus, the positive point $(\x_i, +1)$ can also be viewed as an extended {\it pseudo-pair} that consists of $\left(\begin{bmatrix}-1\\ \x_i\end{bmatrix}, +1\right)$ and $(\mathbf{0}_{n+1}, -1)$, ranked by the extended vector $\begin{bmatrix}\theta \\ \w\end{bmatrix}$. 
We will denote the extended vector $\begin{bmatrix}-1\\ \x_i\end{bmatrix}$
as $\tilde{\x}_i$.
Similarly, a negative point $(\x_j, -1)$ can be viewed as an extended pseudo-pair that consists of $\left(\tilde{\x}_j, -1\right)$ and $(\mathbf{0}_{n+1}, +1)$.

Note that if we consider all the extended vectors $\tilde{\x}_i$, ranking pair-wise extended vectors by $\begin{bmatrix}\theta\\ \w\end{bmatrix}$ means calculating
\begin{eqnarray*}
  \begin{bmatrix}\theta\\ \w\end{bmatrix}^T (\tilde{\x}_i - \tilde{\x}_j) = \begin{bmatrix}\theta\\ \w\end{bmatrix}^T \left(\begin{bmatrix}-1\\ \x_i\end{bmatrix} - \begin{bmatrix}-1\\ \x_j\end{bmatrix}\right) = \w^T (\x_i - \x_j)
\end{eqnarray*}
That is, the hinge loss on extended pairs is exactly the same as the hinge loss on the original pairs. 

Based on the discussions above, if we define extended pairs $\tilde{\x}_{ij}$ and extended pseudo-pairs $\tilde{\x}_{k\ell}$ based on the extended vectors $\tilde{\x}_i$ and $\mathbf{0}_{n+1}$,  we can combine the pair-wise SVM and the standard SVM with threshold term to design a variant of the CRC formulation. In the variant, we take
one common trick to include  We use one trick (as taken by LIBLINEAR~\cite{liblinear}) that includes $\theta$ in the regularization term to allow simpler design of optimization routines. That is, we solve
\begin{eqnarray}
	\min_{\theta, \w} && \frac{1}{2}(\theta^T \theta + \w^T\w) \nonumber\\
&&+ 
\gamma \sum_{\x_{ij} \in \mathcal{D}_{pair}^+} C_{crc}^{(ij)} [1-\begin{bmatrix}\theta & \w^T\end{bmatrix}\x_{ij}]_+  \nonumber\\
&&+(1-\gamma) \sum_{\x_{k\ell} \in \mathcal{D}_{pseu}^+} C_{crc}^{(k\ell)} \cdot [1-\begin{bmatrix}\theta & \w^T\end{bmatrix}\tilde{\x}_{k\ell}]_+
 \   \label{eq:optthreshold2}
\end{eqnarray}
in our study. We call this formulation \eqref{eq:optthreshold2} CRC-threshold, which is simply equivalent to the original CRC formulation~\eqref{eq:opt} applied to the extended vectors. The equivalence allows us to easily test whether the flexibility of $\theta$ (through using the extended vectors $\tilde{\x}_i$) can improve the original CRC formulation.

\section{Experiments} \label{sec:exp}

In this section, we study the performance and efficiency of our proposed {ASCRC} algorithm on real-world large-scale data sets.
We compare ASCRC with random-CRC, which does random sampling under the CRC framework.
In addition, we compare ASCRC with three other state-of-the-art algorithms for large-scale bipartite ranking: 
the point-wise weighted linear SVM~\eqref{eq:svm} (WSVM), an efficient implementation~\cite{joachims2006training} of the pair-wise linear RankSVM \eqref{eq:ranksvm} (ERankSVM), and the combined ranking and regression (CRR)~\cite{sculley2010combined} algorithm for general ranking.

We use 14 data sets from the LIBSVM Tools%
\footnote{\url{http://www.csie.ntu.edu.tw/~cjlin/libsvmtools/}} and the UCI Repository~\cite{uci} in the experiments. 
Table~\ref{tab:data} shows the statistics of the data sets, which contains more than ten-thousands of instances and more than ten-millions of pairs.
The data sets are definitely too large for a na{\"i}ve implementation of RankSVM~\eqref{eq:ranksvm}.
The data sets marked with $(*)$ are originally multi-class data sets, and we take the sub-problem of ranking the first class
ahead of the other classes as a bipartite ranking task.
For data sets that come with a moderate-sized test set, we report the test AUC. 
Otherwise we perform a $5$-fold cross validation and report the cross-validation AUC.

\begin{table}
	\caption{Data Sets Statistics}
	\label{tab:data}
	\centering
{\small
	\begin{tabular}{| c | HH c | c | c | c |}
		\hline
		Data & Positive & Negative & Points & Pairs & Dimension & AUC \\
		\hline
		MQ-08 & 931 & 14280 & 15211 & 26589360 & 10042 & CV \\
		\hline
		letter* & 789 & 19211 & 20000 & 30314958 & 16 & CV\\
		\hline
		yahoo2  & 638 & 34177 & 34815 & 43609852 & 1966 & test \\
		\hline
		yahoo2\_v2 & 1899 & 32916 & 34815 & 125014968 & 1966 & test\\
		\hline
		protein* & 8198 & 9568 & 17766 & 156876928 & 357 & test \\
		\hline
		news20 & 9999 & 9997 & 19996 & 199920006 & 1355191 & CV\\
		\hline
		rcv1 & 10491 & 9751 & 20242 & 204595482 & 47236 & CV\\
		\hline
		a9a & 7841 & 24720 & 32561 & 387659040 & 123 & test\\
		\hline
		bank & 5289 & 39922 & 45211 & 422294916 & 51 & CV\\
		\hline
		ijcnn1 & 4853 & 45137 & 49990 & 438099722 & 22 & CV\\
		\hline
		MQ-07 & 3863 & 65760 & 69623  & 508061760 & 10036 & CV \\
		\hline
		shuttle* & 34108 & 9392 & 43500 & 640684672 & 9 & test\\
		\hline
		mnist* & 5923 & 54077 & 60000 & 640596142 & 780 & test\\
		\hline
		connect* & 44473 & 23084 & 67557 & 2053229464 & 126 & CV\\
		\hline
		acoustic* & 18261 & 60562 & 78823 & 2211845364 & 50 & test\\
		\hline
		real-sim & 22238 & 50071 & 72309 & 2226957796 & 20958 & CV\\
		\hline
		yahoo1 & 8941 & 464193 & 473134 & 8300699226 & 20643 & test \\
		\hline
		yahho1\_v2 & 45111 & 428023 & 473134 & 38617091106 & 20643 & test \\
		\hline
		covtype & 297711 & 283301 & 581012  & 168683648022 & 54 & CV\\
		\hline
		url & 792145 & 1603985 & 2396130 & 2541177395650 & 3231961 & CV\\
		\hline
	\end{tabular}
}
\end{table}

\subsection{Experiment Settings}

Given a budget $B$ on the number of pairs to be used in each algorithm and a global regularization parameter $C$, 
we set the instance weights for each algorithm to fairly maintain the numerical scale between the regularization term and the loss terms.
The global regularization parameter $C$ is fixed to $0.1$ in all the experiments. 
In particular, the setting below ensures that the total $C^{(ij)}$, summed over all the pairs (or pseudo-pairs), would be $C \cdot B$ for all the algorithms.

\begin{itemize}	
	\item WSVM: As discussed in Section~\ref{sec:setup}, $C_+$ and $C_-$ shall be inverse-proportional to $N^+$ and $N^-$ to
make the weighted point-wise SVM a reasonable baseline for bipartite ranking. Thus, we set $C_+ = \frac{B}{2N^+} \cdot C$ and $C_-= \frac{B}{2N^-} \cdot C$ in~\eqref{eq:svm}. We solve the weighted SVM by the LIBLINEAR~\cite{liblinear} package with its extension on instance weights.
	
	\item ERankSVM: We use the $SVM^{perf}$~\cite{joachims2006training} package to efficiently solve the linear RankSVM~\eqref{eq:ranksvm} with the AUC optimization option. We set the regularization parameter $C_{perf} = \frac{B}{100} \cdot C$ where the $100$ comes from a suggested value of the $SVM^{perf}$ package.

	\item CRR: We use the package sofia-ml~\cite{sculley2010combined} with the {\it sgd-svm} learner type, {\it combined-ranking} loop type and the default number of iterations that SGD takes to solve the problem.
          We set its regularization parameter $\lambda = \frac{1}{B\cdot C}$.
	
  	\item ASCRC (ASRankSVM): We initialize $|\mathcal{L}^*|$ to $b$, and assign $C_{crc}^{(ij)}=\frac{\Gamma |\mathcal{L}^*|}{p_{ij} Z} \cdot C$
in each iteration, where $\Gamma$ equals to either $\gamma$ or $(1-\gamma)$ for either real or pseudo pairs and $Z$ is a normalization constant $\sum_{\x_{ij} \in \mathcal{L}^*} \frac{1}{p_{ij}}$ that prevents $C_{crc}^{(ij)}$ from being too large. We solve the \textsf{linearSVM} within ASCRC by
the LIBLINEAR~\cite{liblinear} package with its extension on instance weights.

	\item random-CRC: random-CRC simply corresponds to ASCRC with $p_{ij} = 1$ for all the pairs. That is, random-CRC samples uniformly within the unlabeled pool.\\ 
\\
To evaluate the average performance of ASCRC and random-CRC algorithms, we average their results over $10$ different initial pools.

\end{itemize}

\subsection{Performance Comparison and Robustness}
\label{sub:robustness}
Next, we examine the necessity of three key designs within the active sampling framework: soft-version versus hard-version, sampling bias correction within soft-version of active sampling, and the choice of soft-version value functions.
We first set $\gamma = 1$ in ASCRC and random-CRC, which makes ASCRC equivalent to ASRankSVM.
We let $b = 100$ and $B = 8000$, which is a relatively small budget out of the millions of pairs.

\begin{figure}[th]
	\centering
	\subfloat[real-sim]{\includegraphics[clip=true,trim= 2cm 7cm 1.5cm 9cm,width=0.23\textwidth]{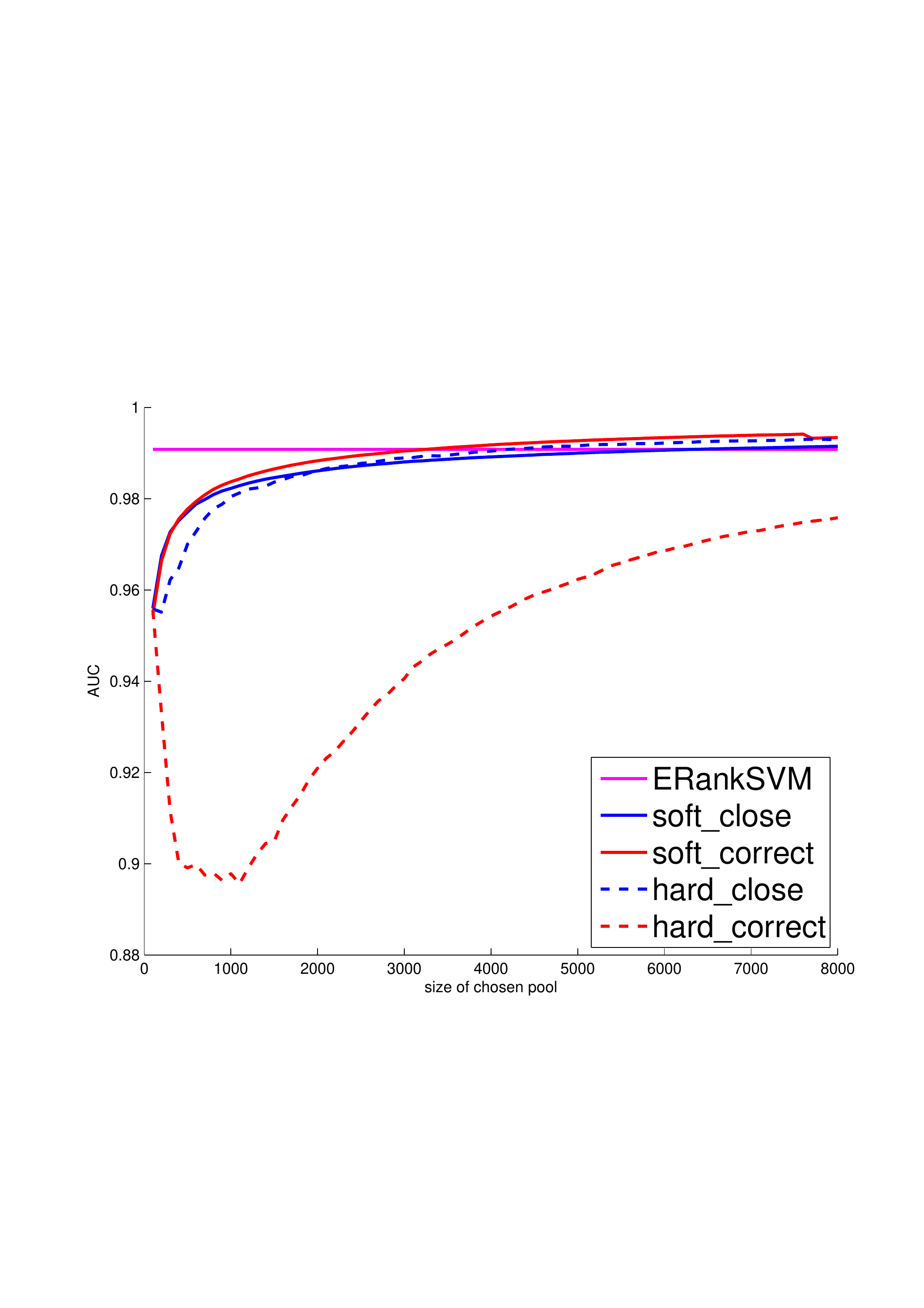}}
	\subfloat[acoustic]{\includegraphics[clip=true,trim= 2cm 7cm 1.5cm 9cm,width=0.23\textwidth]{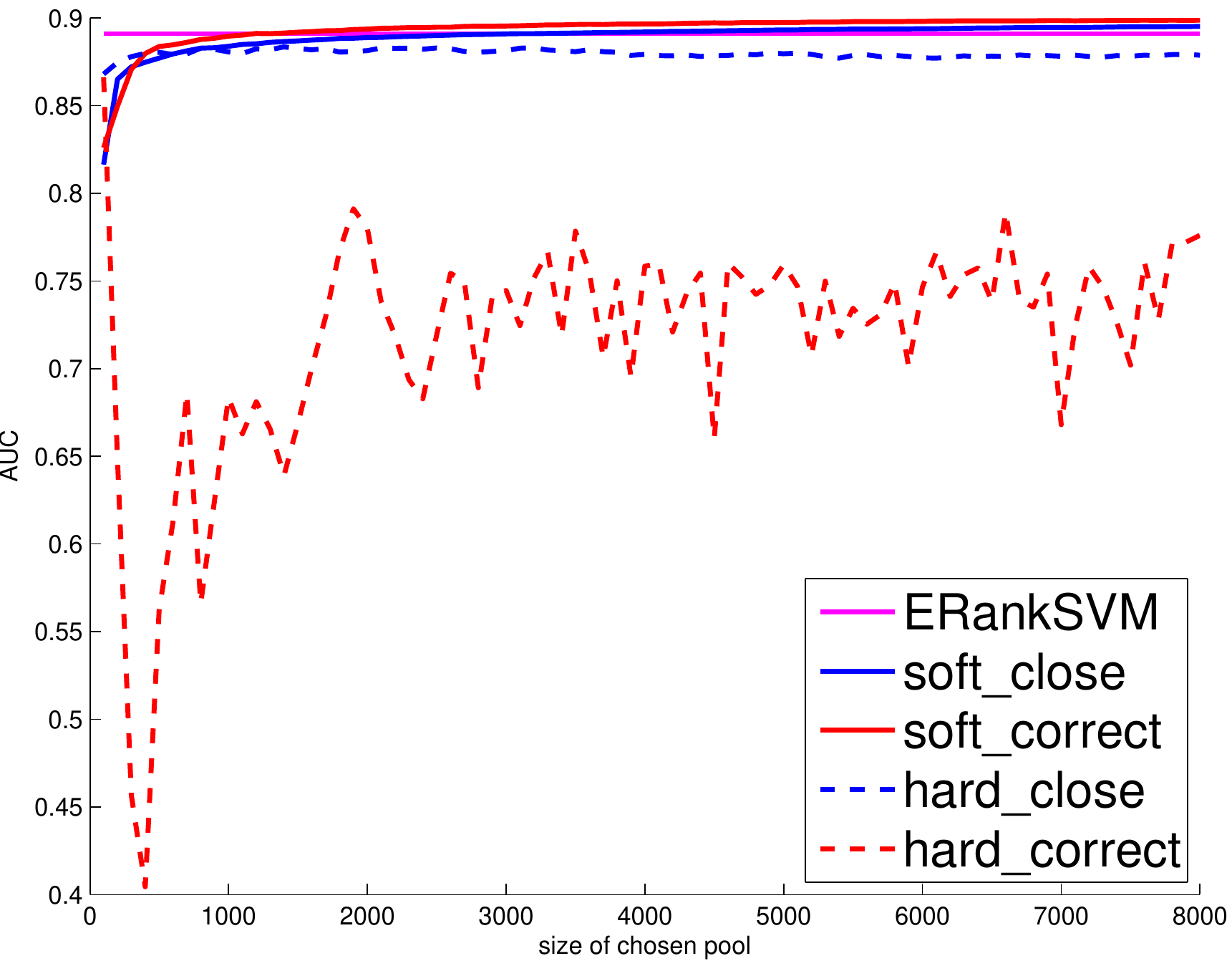}}
	\caption{Performance Curves for Hard Version Samplings}
	\label{fig:hard}
\end{figure}

\begin{table}[t]
	\centering
	\caption{Results Summary under 95\% t-test (win/loss/tie)}
	\label{tab:hard}
		\begin{tabular}{| c | c | c |}
			\hline
			              & Close & Correct \\
			\hline
			Soft v.s Hard &  10/3/1  &  13/0/1 \\
			\hline
		\end{tabular}
\end{table}

\subsubsection{Soft-Version versus Hard-Version}

We will discuss the time difference between the soft- and hard-versions of sampling in Table~\ref{tab:time} of Section~\ref{sub:time}. The soft-versions are both coupled with bias correction. Intuitively, the soft version is much faster than the hard version. Here we examine the performance difference between the two versions first.
In Table~\ref{tab:hard}, we compare the soft- and hard-versions of closeness and correctness sampling under the $t$-test of 95\% confidence level.
For closeness sampling, the soft version performs better than the hard version on 9 data sets and ties with 3; 
for correctness sampling, the soft version performs better than the hard version on 12 data sets and ties with 1.
The results justify that the soft version is a better choice than the hard-version in terms of AUC performance. 

Fig.~\ref{fig:hard} further show how the AUC changes as $|\mathcal{L}^*|$ grows for different versions of sampling, along with the baseline ERankSVM algorithm. We see that hard-correctness-sampling always leads to unsatisfactory performance. One possible reason is that hard correctness-sampling can easily suffer from sampling the \textit{noisy} pairs, which come with larger hinge loss. On the other hand, hard-closeness-sampling is competitive to the soft-versions (albeit slower), but appears to be saturating to less satisfactory model in Fig.~\ref{fig:hard}(b). The saturation corresponds to a known problem of uncertainty sampling in active learning because of the restricted view of the non-perfect model used for sampling~\cite{hintsvm}. The soft-version, on the other hand, has some probability of escaping from the restricted view, and hence enjoys better performance.

\begin{table*}[t]
	\centering
	\caption{The Benefit of Bias Correction at $|\mathcal{L}^*|=8000, b=100, \gamma=1.0$}
	\label{tab:biased}
	\begin{tabular}{ | c | c | c | c | c | c | c | c |}
		\hline
		& letter & protein & news20 & rcv1 & a9a & bank & ijcnn1 \\
		\hline
		Close & -1.0540e-03 & -1.0400e-03 & -3.2210e-03 & -5.4200e-04 & -3.3000e-05 & -2.2160e-03 & -2.9800e-04 \\
		\hline
		Correct & -2.4430e-03 & 9.3760e-03 & -2.9550e-03 & -9.6000e-05 & 2.1680e-03 & 4.5100e-04 & 7.3300e-04 \\
		\hline
	\end{tabular}
	
	\begin{tabular}{ | c | c | c | c | c | c | c | c | c |}
		\hline
		& shuttle & mnist & connect & acoustic & real-sim & yahoo1 & covtype & url \\
		\hline
		Close & -9.3600e-04 & 2.7600e-04 & -7.1700e-04 & -2.6960e-03 & -8.2700e-04 & -1.0650e-03 & -1.7220e-03 & 3.8000e-05\\
		\hline
		Correct & 3.0700e-04 & 2.4700e-04 & 3.2030e-03 & 4.9000e-05 & -1.8710e-03 & 5.4300e-04  & 2.8380e-03 & -2.7900e-04\\
		\hline
	\end{tabular}

\end{table*}

\subsubsection{Bias Correction for Soft Version Sampling}

Next, we show the AUC difference between doing bias correction (see Section~\ref{sec:sampling}) and not doing so for soft-version sampling in Table~\ref{tab:biased}. A positive difference indicates that doing bias correction leads to better performance. First of all, we see that the difference of the bias correction
is relatively small. For soft-close sampling, performing bias correction is slightly worse in 12 data sets; for soft-correct sampling,
performing bias correction is slightly better in 9 data sets.
Note correctness sampling is inheritedly more biased towards the noisy pairs as discussed during hard-version sampling. Thus, performing bias correction can be necessary and helpful in ensuring the stability, as justified by the better performance in those 9 data sets.

\subsubsection{Value Functions for Soft Version Sampling}

We show how the AUC changes as $|\mathcal{L}^*|$ grows throughout the active sampling steps of ASRankSVM in Fig.~\ref{fig:auc}. 
For WSVM, ERankSVM and CRR, we plot a horizontal line on the AUC achieved when using the whole training set. 
We also list the final AUC with the standard deviation of all the algorithms in Table~\ref{tab:auc}.

\begin{table*}
	\caption{AUC (mean$\pm$std) at $|\mathcal{L}^*|=8000, b=100, \gamma=1.0$}
	\label{tab:auc}
	\centering
	\begin{tabular}{| c | l | l | l | l | l | l |}
		\hline
		     &      &          &     &        & \multicolumn{2}{c|}{ASRankSVM} \\
		\cline{6-7}
		Data & WSVM & ERankSVM & CRR & Random & Soft-Close & Soft-Correct \\
		\hline
		MQ-08   & .8174 (*)     & \textbf{.864} & .8596    & .8587 $\pm$ .0041      & .8621 $\pm$ .0029     & .8599 $\pm$ .0032 \\
		\hline
		letter & .9808 & .9877 & .9874 & .9883 $\pm$ .0003 & \textbf{.9883 } $\pm$ .0002 & .9874 $\pm$ .0123 \\
		\hline
		yahoo2 & .9544       & \textbf{.9873} & .7885(*) & .9525 $\pm$ .0015      & .9561 $\pm$ .0007     & .9481 $\pm$ .0047 \\
		\hline
		yahoo2\_v2 & .9014   & \textbf{.9513} & .7134(*) & .8963 $\pm$ .0015 (*)  & .8999 $\pm$ .0008     & .8974 $\pm$ .0011 \\
		\hline
		protein & \textbf{.8329} & .8302 & .8306 & .8229 $\pm$ .0031 & .8240 $\pm$ .0016 & .8233 $\pm$ .0028 \\
		\hline
		news20 & .9379 (*) & .9753 (*) & .9743 (*) & .9828 $\pm$ .0008 (*) & .9836 $\pm$ .0006 (*) & \textbf{.9903} $\pm$ .0003\\
		\hline
		rcv1 & .9876 (*) & .9916 (*) & .9755 (*) & .9920 $\pm$ .0004 (*) & .9923 $\pm$ .0003 (*) & \textbf{.9944} $\pm$ .0002\\
		\hline
		a9a & .9008 & \textbf{.9047} & .8999 (*) & .9003 $\pm$ .0006 & .9012 $\pm$ .0004 & .9007 $\pm$ .0006\\
		\hline
		bank & .8932 (*) & .9023 (*) & .8972 (*) & .9051 $\pm$ .0010 (*) & .9057 $\pm$ .0011 (*) & \textbf{.9083} $\pm$ .0007\\
		\hline
		ijcnn1 & .9335 (*) & .9343 (*) & .9336 (*) & .9342 $\pm$ .0004 (*) & .9345 $\pm$ .0006 & \textbf{.9348} $\pm$ .0003\\
		\hline
		MQ-07   & .8238 (*)  & .8951 (*)      & .8894(*) & .8886 $\pm$ .0028 (*)  & .8922 $\pm$ .0024 (*) & \textbf{.897 $\pm$ .0017} \\
		\hline
		shuttle & .9873 (*) & .9876 (*) & .9888 (*) & .9894 $\pm$ .0001 (*) & .9896 $\pm$ .0001 (*) & \textbf{.9907} $\pm$ .0000\\
		\hline
		mnist & \textbf{.9985} & .9983 & .9973 (*) & .9967 $\pm$ .0004 (*) & .9979 $\pm$ .0001 & .9976 $\pm$ .0002 \\
		\hline
		connect & .8603 & \textbf{.8613} & .8532 (*) & .8594 $\pm$ .0008 (*) & .8604 $\pm$ .0007 & .8603 $\pm$ .0009 \\
		\hline
		acoustic & .8881 (*) & .8911 (*) & .8931 (*) & .8952 $\pm$ .0005 (*) & .8952 $\pm$ .0005 (*) & \textbf{.8988} $\pm$ .0004 \\
		\hline
		real-sim & .9861 (*) & .9908 & .9907 & .9908 $\pm$ .0003 & .9915 $\pm$ .0002 & \textbf{.9934} $\pm$ .0064 \\
		\hline
		yahoo1  &  .955 (*)  & \textbf{.9609} & .941 (*) &  .9524 $\pm$ .001 (*)  & .9554 $\pm$ .0007 (*) & .9571 $\pm$ .0006 \\
		\hline
		yahoo1\_v2 & .806    & \textbf{.8234} & .7653(*) & .8014 $\pm$ .0010 (*)  & .8038 $\pm$ .0008 (*) & .8049 $\pm$ .0011 \\
		\hline
		covtype & .8047 (*) & .8228 (*) & .8189 (*) & .8238 $\pm$ .0008 (*) & .8239 $\pm$ .0007 (*) & \textbf{.8249 } $\pm$ .0006 \\
		\hline
		url & .9963 (*) & .9967 (*) & .9956 (*) & .9940 $\pm$ .0003 (*) & .9961 $\pm$ .0001 (*) & \textbf{.9984 } $\pm$ .0015\\ 
		\hline
		Total (ASRankSVM win/loss/tie) &  12/5/3 &  9/9/2 &  16/1/3 &  14/1/5 &  10/4/6 & --\\ 
		\hline
	\end{tabular}
        \begin{center}
        (*) marks the case where ASRankSVM wins significantly under the $t$-test under a 95\% significance level.
        \end{center}

\end{table*}

From Fig.~\ref{fig:auc} and Table~\ref{tab:auc}, we see that soft-correct sampling is generally the best.
We also conduct the right-tail $t$-test for soft-correct against the others, and list the results in Table~\ref{tab:auc} to show whether the improvement of soft-correct sampling is significant. 

First, we compare soft-correct with random sampling and discover that soft-correct performs better on 10 data sets and ties with 4, which shows that active sampling is working reasonably well.
While comparing soft-close with soft-correct in Table~\ref{tab:auc}, we find that soft-correct outperforms soft-close on 7 data sets and ties with 5. 
Moreover, Fig.~\ref{fig:auc} shows the strong performance of soft-correct comes from the early steps of active sampling. 
Finally, when comparing soft-correct with other algorithms, we discover that soft-correct performs the best on 8 data sets: 
it outperforms ERankSVM on 8 data sets, WSVM on 9 data sets, and CRR on 11 data sets. 
The results demonstrate that even when using a pretty small sampling budget of $8,000$ pairs, ASRankSVM with soft-correct sampling can achieve significant improvement over those state-of-the-art ranking algorithms that use the whole training data set. 
Also, the tiny standard deviation shown in Table~\ref{tab:auc} and the significant results from the $t$-test suggest the robustness of ASRankSVM with soft-correct in general.



Nevertheless, we observe a potential problem of soft-correct sampling from Fig.~\ref{fig:auc}. 
In data sets letter and mnist, the performance of soft-correct increases faster than soft-close in the beginning, but starts dropping in the middle. 
The possible reason, similar to the hard-version sampling, is the existence of noisy pairs that shall better not to be put into the chosen pool. When sampling more pairs, the probability that some noisy pairs (which come with larger hinge loss) are sampled by soft-correct sampling is higher, and can in term lead to degrading of performance. The results suggest a possible future work in combining the benefits of soft-close and soft-correct sampling to be more noise-tolerant.

\begin{table*}
	\centering
	\caption{CPU Time Table under 8000 Pair Budget(Seconds)}
	\label{tab:time}
	{\scriptsize
		\begin{tabular}{|c|c|c|c|c|c|c|c|c|}
			\hline
			Data & Random & Soft-Close & Soft-Correct & Hard-Close & Hard-Correct & WSVM & ERankSVM & CRR \\
			\hline
			MQ-08 & 0.662 & 0.72 & 1.3192 & 14.6544 & 8.6444 & 0.092 & 1.506 & 0.256 \\
			\hline
			letter    & 0.1616 & 0.306 & 3.9204 & 38.5342 & 8.5106 & 0.058 & 0.426 & 0.174\\
			\hline
			yahoo2    & 19.372 & 19.83 & 39.1 & 180.097 & 91.761 & 11.39 & 28.05 & 7.68\\
			\hline
			yahoo2\_v2 & 22.732 & 22.828 & 35.059 & 95.662 & 99.154 & 11.42 & 34.05 & 7.83\\
			\hline
			protein & 3.136 & 2.943 & 4.29 & 12.128 & 8.315 & 0.85 & 3.05 & 0.99\\
			\hline
			news20 & 11.7188 & 11.3012 & 13.2788 & 36.8466 & 25.6112 & 4.02 & 10.454 & 3.938\\
			\hline
			rcv1 & 1.1744 & 1.2636 & 4.3578 & 22.8056 & 7.0516 & 0.366 & 2.186 & 0.82\\
			\hline
			a9a & 0.374 & 0.504 & 1.065 & 30.384 & 19.537 & 0.28 & 1.96 & 0.31\\
			\hline
			bank & 0.3914 & 0.4602 & 0.9288 & 20.8064 & 13.8512 & 0.142 & 3.368 & 0.3\\
			\hline
			ijcnn1 & 0.1914 & 0.3016 & 0.8062 & 21.4004 & 15.864 & 0.138 & 1.768 & 0.324\\
			\hline
			MQ-07 & 1.9086 & 1.9132 & 2.2412 & 44.915 & 37.3924 & 0.512 & 11.618 & 1.42\\
			\hline
			shuttle & 0.146 & 0.288 & 3.02 & 26.307 & 17.577 & 0.18 & 0.62 & 0.36\\
			\hline
			mnist & 1.61 & 2.851 & 56.604 & 205.135 & 50.174 & 4.92 & 14.13 & 2.75\\
			\hline
			connect4 & 0.5468 & 0.6458 & 1.0986 & 23.4094 & 24.2718 & 0.5 & 9.156 & 0.732\\
			\hline
			acoustic & 0.488 & 0.624 & 1.03 & 33.39 & 41.167 & 1.82 & 6.66 & 1.93\\
			\hline
			real-sim & 0.805 & 0.87 & 2.3296 & 63.7404 & 27.8084 & 0.63 & 4.79 & 1.77\\
			\hline
			yahoo1 & 42.306 & 42.284 & 62.08 & 803.822 & 517.441 & 73.29 & 500.01 & 53.26\\
			\hline
			yahoo1\_v2 & 38.886 & 37.779 & 41.868 & 412.988 & 848.064 & 50.07 & 585 & 33.16\\
			\hline
			covtype & 0.294 & 0.3602 & 0.5478 & 160.108 & 1147.99 & 1.282 & 29.206 & 3.122\\
			\hline
			url & 6.2052 & 6.2022 & 32.6044 & 2440.91 & 5707.88 & 23.322 & 536.146 & 56.474\\
			\hline


		\end{tabular}
	}
\end{table*}

\subsection{Efficiency Comparison}
\label{sub:time}
First, we study the efficiency of soft active sampling by checking the average number of rejected samples before passing the probability threshold during rejection sampling. 
The number is plotted against the size of $\mathcal{L}^*$ in Fig.~\ref{fig:eff}. 
The soft-close strategy usually needs fewer than $10$ rejected samples, while the soft-correct strategy generally needs an increasing number of rejected samples. 
The reason is that when the ranking performance becomes better throughout the iterations, the probability threshold behind soft-correct
could be pretty small. 
The results suggest that the soft-close strategy is generally efficient, while the soft-correct strategy may be less efficient as $|\mathcal{L}^*|$ grows.

Next, we list the CPU time consumed for all algorithms under $8,000$ pairs budget in Table~\ref{tab:time}, and the data sets are ordered ascendantly by its size.
We can see that WSVM and CRR run fast but give inferior performance; ERankSVM performs better but the training time grows fast as the data size increases.
The result is consistent with the discussion in Section~\ref{sec:intro} that conducting bipartite ranking efficiently and accurately at the same time is challenging.

For ASRankSVM, random runs the fastest, then soft-close, and soft-correct is the slowest. The results reflect the average number of rejected samples discussed above.
In addition, not surprisingly, the soft version samplings are usually much faster then the corresponding hard versions, which validate that the time consuming enumerating or sorting steps do not
fit our goal in terms of efficiency.

More importantly, when comparing soft-correct with ERankSVM, soft-correct runs faster on 7 data sets, which suggests ASRankSVM is as efficient as 
the state-of-the-art ERankSVM on large-scale data sets in general. 
Nevertheless, we can find that the CPU time of soft-correct grows much slower than ERankSVM as data size increases because the time complexity of ASRankSVM mainly depends on the budget $B$ and the step size $b$, not the size of data.

\begin{figure}[t]
	\centering
	\subfloat[protein:B=80000,b=500]{\includegraphics[clip=true,trim= 2cm 7cm 1.5cm 9cm,width=0.23\textwidth]{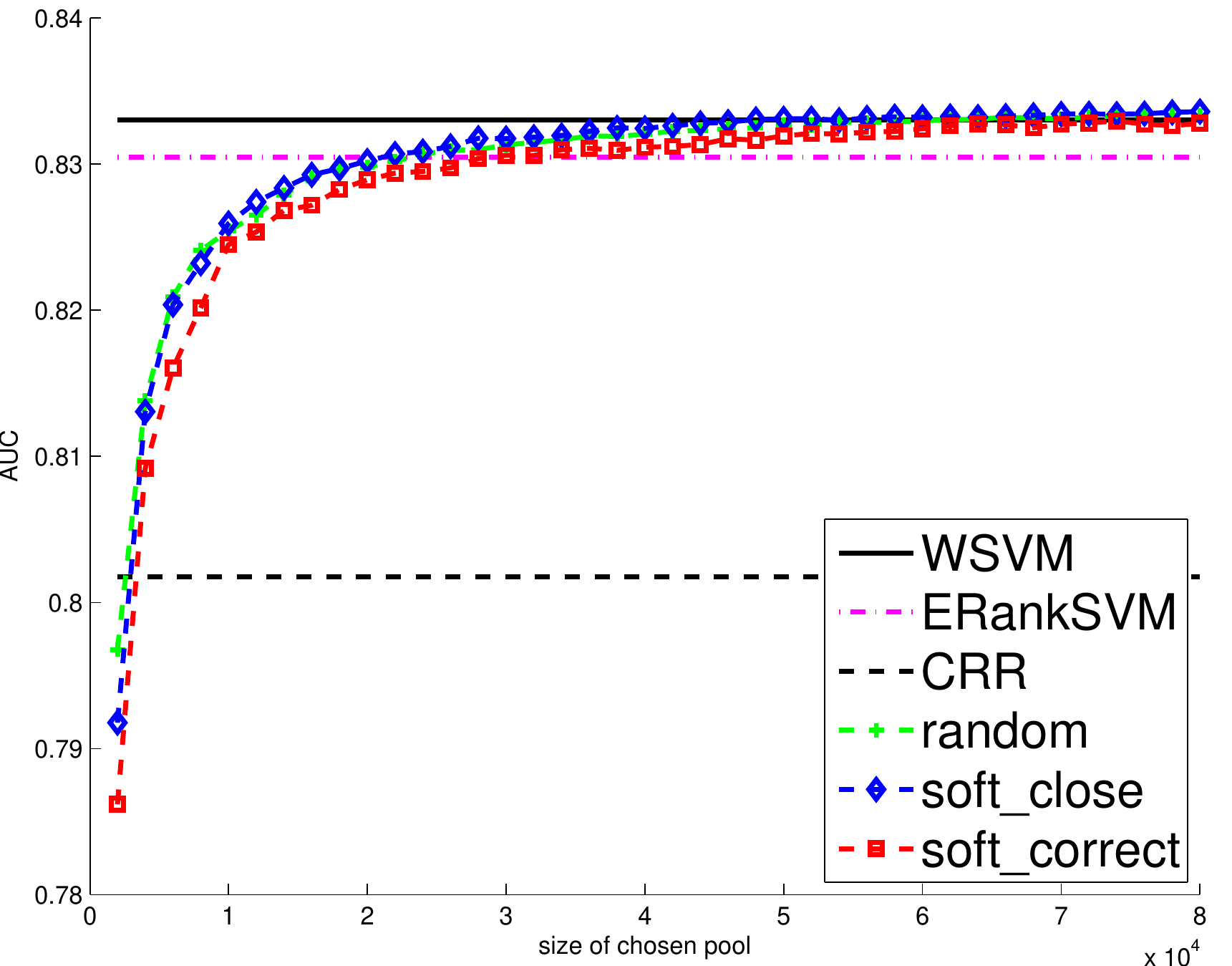}}
	\subfloat[bank:B=45000,b=1125]{\includegraphics[clip=true,trim= 2cm 7cm 1.5cm 9cm,width=0.23\textwidth]{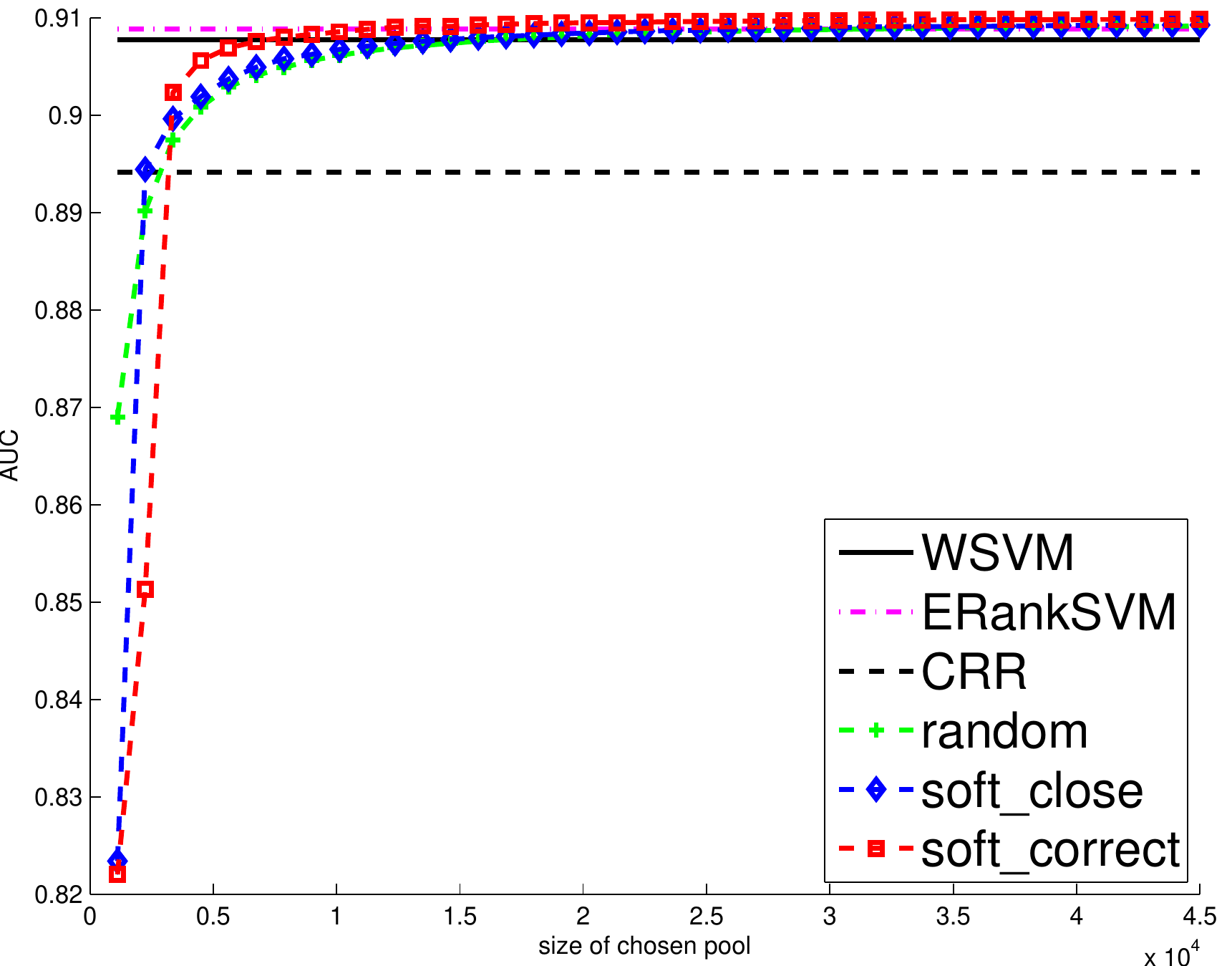}}
	\caption{Performance on Larger Budget under $\gamma=1.0$}
	\label{fig:large_budget}
\end{figure}

\begin{figure}[t]
	\centering
	\subfloat[AUC Curve]{\includegraphics[clip=true,trim= 2cm 7cm 1.5cm 9cm,width=0.23\textwidth]{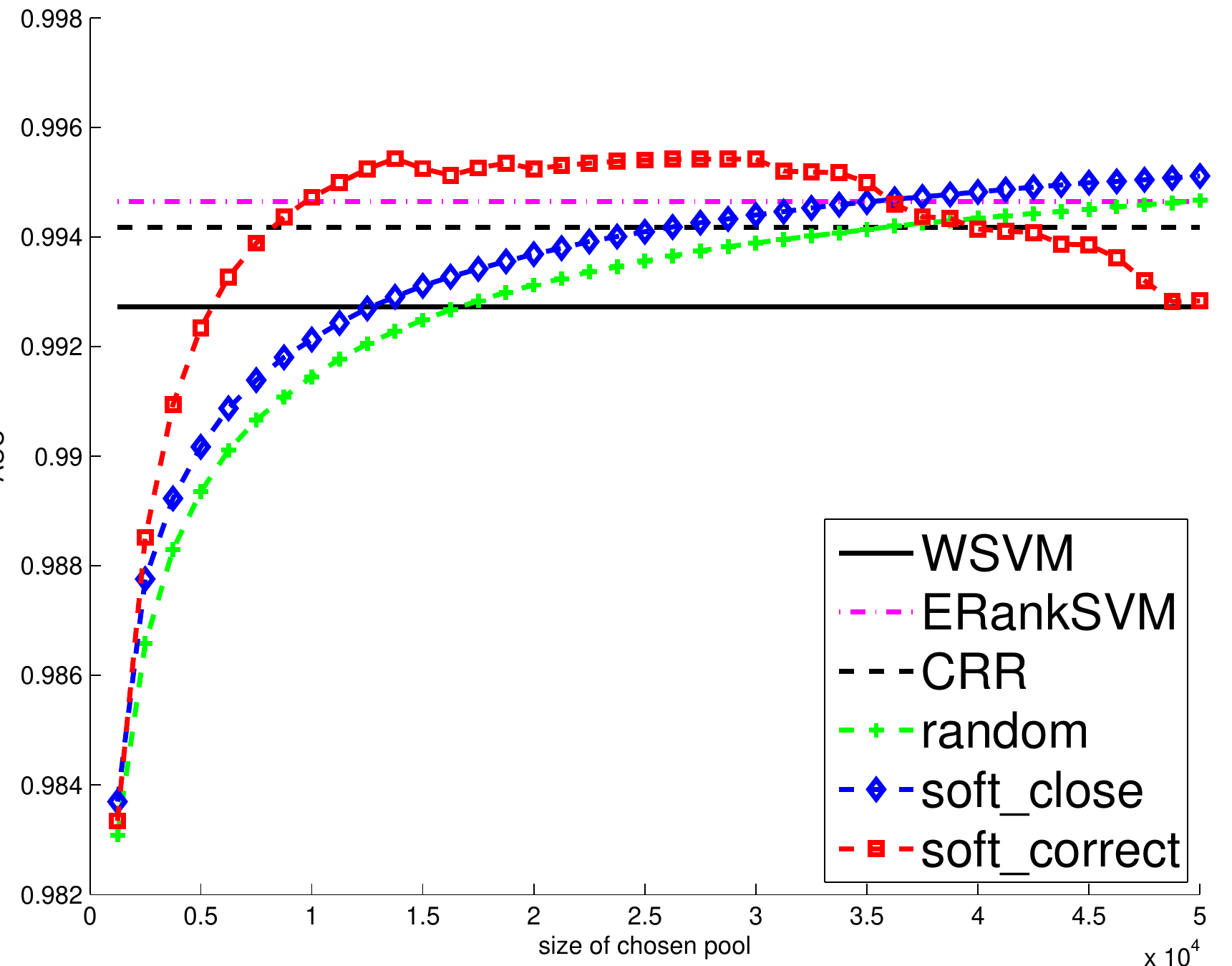}}
	\subfloat[Number of Pairs Rejected]{\includegraphics[clip=true,trim= 2cm 7cm 1.5cm 9cm,width=0.23\textwidth]{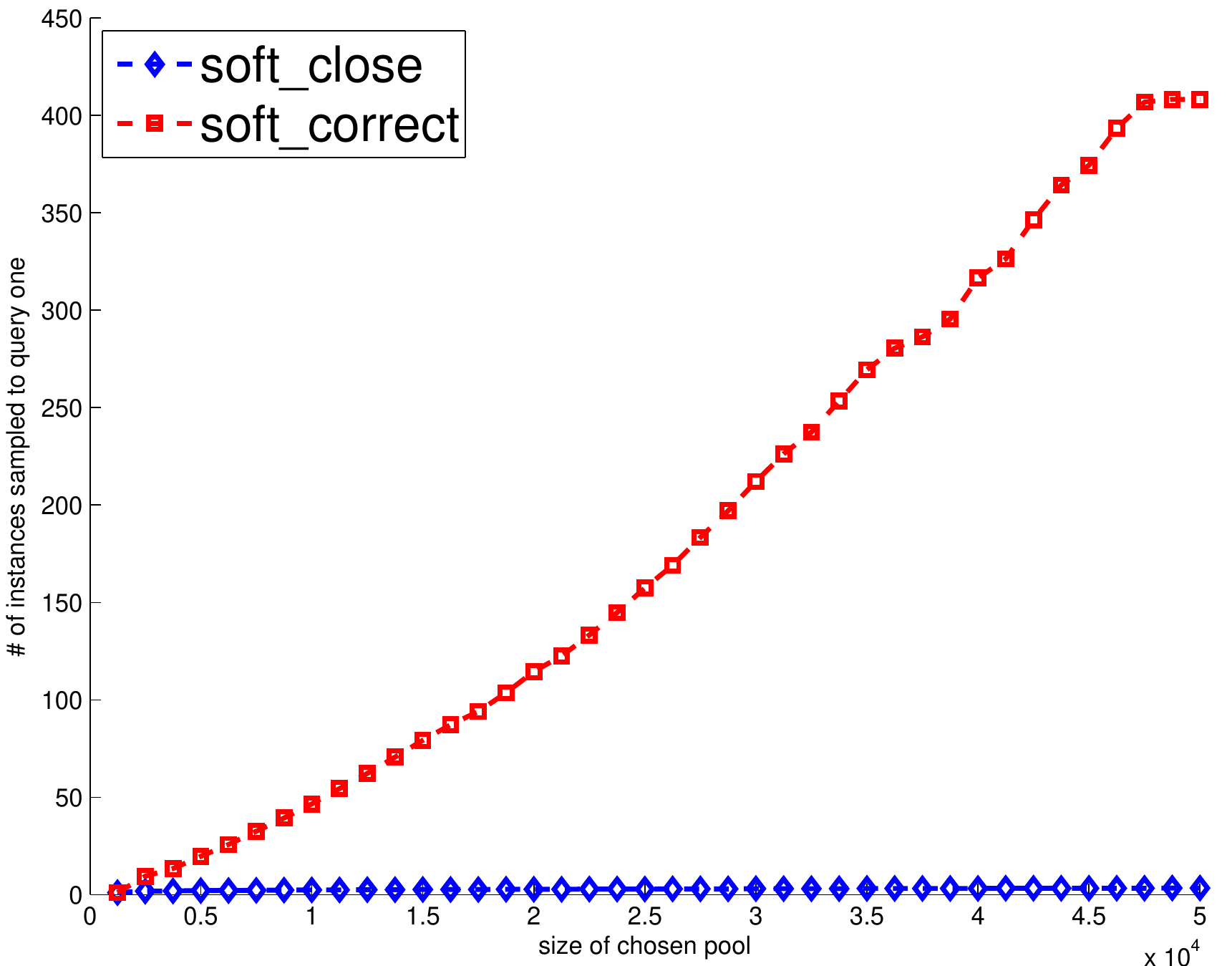}}
	\caption{Potential Problem with Larger Budget}
	\label{fig:large_problem}
\end{figure}

\subsection{The Usefulness of Larger Budget}
\label{sub:budget}
From the previous experiments, we have shown that ASRankSVM with a budget of $8,000$ pairs can perform better than other competitors on large-scale data sets.
Now, we check the performance of ASRankSVM with different budget size. 
In Figure~\ref{fig:large_budget}, we show the AUC curves with much larger budgets on two data sets. 
Then, we find that the performance of ASRankSVM can be improved or maintained as the budget size increases.
For example, in data set protein, we can match the performance of WSVM with around 40,000 pairs and surpass it slightly with around 80,000 pairs.
Nevertheless, in most data sets, we find that the slope of AUC curves become flat around 10,000 pairs, and eventually converge as the budget increases. That is, increasing the budget in ASRankSVM leads to consistent but marginal improvements.

Note that the potential problem of sampling noisy pairs within the soft-correct sampling can be more serious when the budget size increases. 
Fig.~\ref{fig:large_problem} illustrates the problem with the real-sim data when $B=50000$, $b=1250$, and $\gamma=1.0$, where the performance of soft-correct degrades and rejects many more pairs in the latter sampling iterations. On the other hand, soft-close maintains the robustness and the efficiency as the budget increases, and improves the performance consistently throughout the iterations. Thus, if a larger budget is used, soft-close can be a better choice than soft-correct.

\begin{table}[t]
	\centering
	\caption{Optimal $\gamma$ on Different Datasets}
	\label{tab:gamma}

	\begin{tabular}{|c|c|c|}
		\hline
		Data & Soft-Close & Soft-Correct \\
		\hline
		letter &\textbf{uniform},\textbf{uniform-th}&\textbf{0.9},\textbf{0.9-th}\\
		\hline
		protein &1&0.4-th\\
		\hline
		news20 &\textbf{uniform},\textbf{1},\textbf{uniform-th}&\textbf{uniform},\textbf{1},\textbf{uniform-th}\\
		\hline
		rcv1 &\textbf{uniform},\textbf{1},\textbf{uniform-th}&\textbf{uniform},\textbf{1},\textbf{uniform-th}\\
		\hline
		a9a &uniform,uniform-th&0.6-th\\
		\hline
		bank &\textbf{1}&\textbf{uniform},\textbf{1},\textbf{uniform-th}\\
		\hline
		ijcnn1 &1,uniform-th&\textbf{0.8},\textbf{0.9},\textbf{1},\textbf{uniform-th},\textbf{0.9-th}\\
		\hline
		shuttle &\textbf{0.1-th},\textbf{0.2-th}&\textbf{uniform-th}\\
		\hline
		mnist &\textbf{0.1}&0.6\\
		\hline
		connect &uniform,1,uniform-th&uniform,1,uniform-th\\
		\hline
		acoustic &\textbf{1}&\textbf{uniform},\textbf{1},\textbf{uniform-th}\\
		\hline
		real-sim &\textbf{uniform},\textbf{1},\textbf{uniform-th}&\textbf{uniform},\textbf{uniform-th}\\
		\hline
		yahoo1 & uniform & uniform-th, 0.9-th \\
		\hline
		covtype &\textbf{uniform},\textbf{1}&\textbf{uniform},\textbf{0.9},\textbf{1},\textbf{uniform-th},\textbf{0.9-th}\\
		\hline
		url &uniform,1,uniform-th&\textbf{0.8}\\
		\hline

	\end{tabular}
\end{table}

\subsection{The Usefulness of the CRC Framework}
\label{sub:crc}

Next, we study the necessity of the CRC framework by comparing the performance of soft-closeness and soft-correctness under different choices of $\gamma$.
We report the best $\gamma$ under a 95\% significance level within $\{\mathit{uniform},0.1,0.2,...,1.0\}$, where $\mathit{uniform}$ means balancing the influence
of actual pairs and pseudo-pairs by $\gamma=\frac{|\mathcal{D}_{pair}|}{|\mathcal{D}^*|}$. 
Moreover, we check whether CRC-threshold can be useful.
Table~\ref{tab:gamma} shows the best~$\gamma$ and formulation for each sampling strategy. 
The entries with ``-th'' indicates CRC-threshold. The bold entries indicates that the setting outperforms ERankSVM significantly.
The table suggests three observations.
Firstly, the choice of sampling strategy does not effect the optimal $\gamma$ much, and most data sets have similar optimal $\gamma$ for both soft-closeness and soft-correctness sampling.
Secondly, we find that adding a threshold term for CRC can sometimes reach better performance. 
Last, we see that using $\gamma = 1$ (real pairs only) performs well in most data sets, while a smaller $\gamma$ or $\mathit{uniform}$ can sometimes reach better performance. 
The results justify that the real pairs are more important than the pseudo-pairs, while the latter can sometimes be helpful. 


In summary, a special case of the proposed ASCRC algorithm that only samples actual pairs (ASRankSVM) works reasonably well for a budget of $8,000$ when coupled with soft-correct sampling. 
The setting significantly outperforms WSVM, ERankSVM, CRR and soft-close on most of the data sets, also the execution time shown the efficiency of soft-correct sampling is comparable with ERankSVM. 
While $\gamma = 1$ leads to promising performance on most of the data sets, further tuning with a smaller $\gamma$ or adding a threshold term helps in some data sets. 

\begin{figure*}[h]
	\centering
	\subfloat[a9a]{\includegraphics[clip=true,trim= 2cm 7cm 1.5cm 9cm,width=0.23\textwidth]{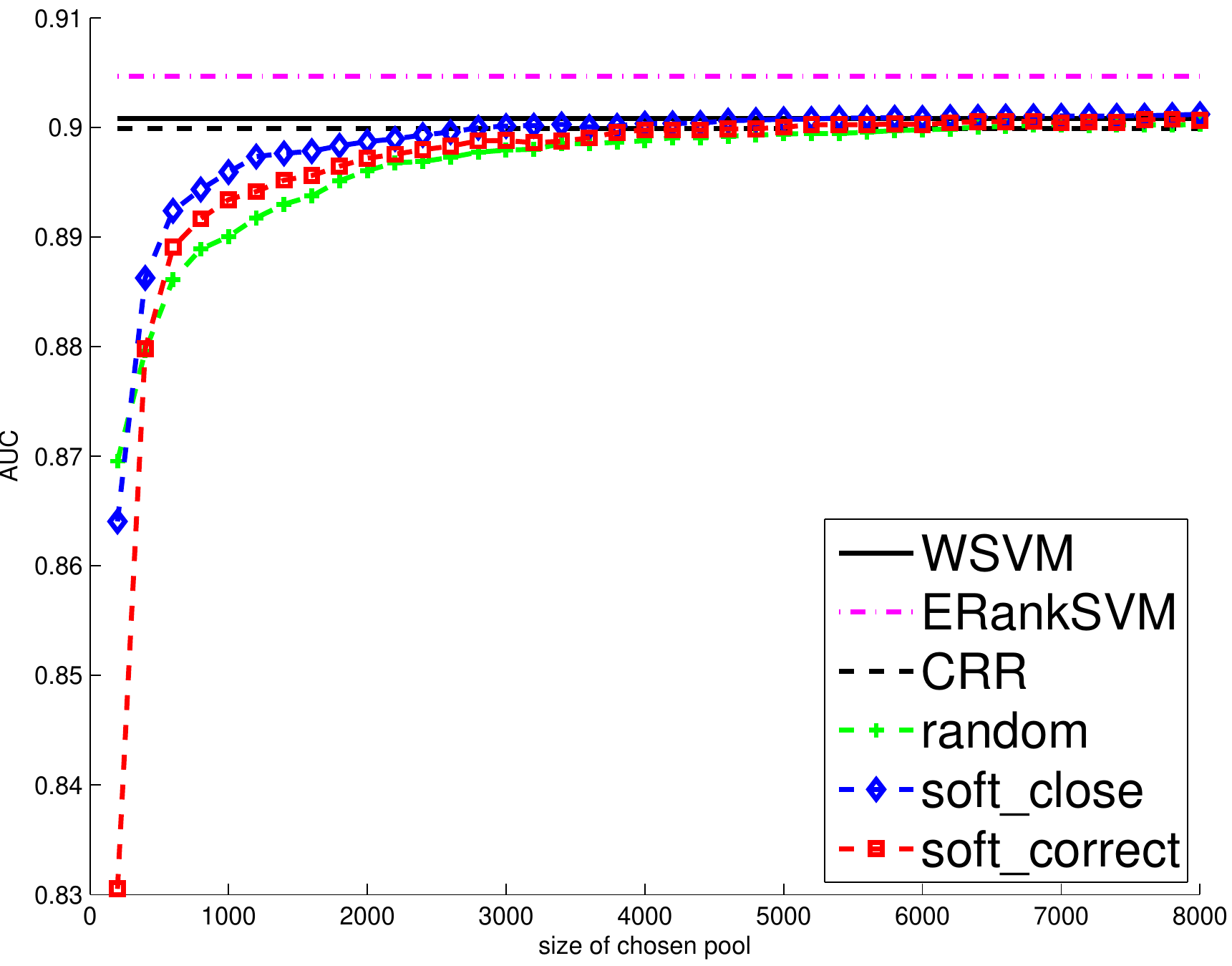}}
	\subfloat[acoustic]{\includegraphics[clip=true,trim= 2cm 7cm 1.5cm 9cm,width=0.23\textwidth]{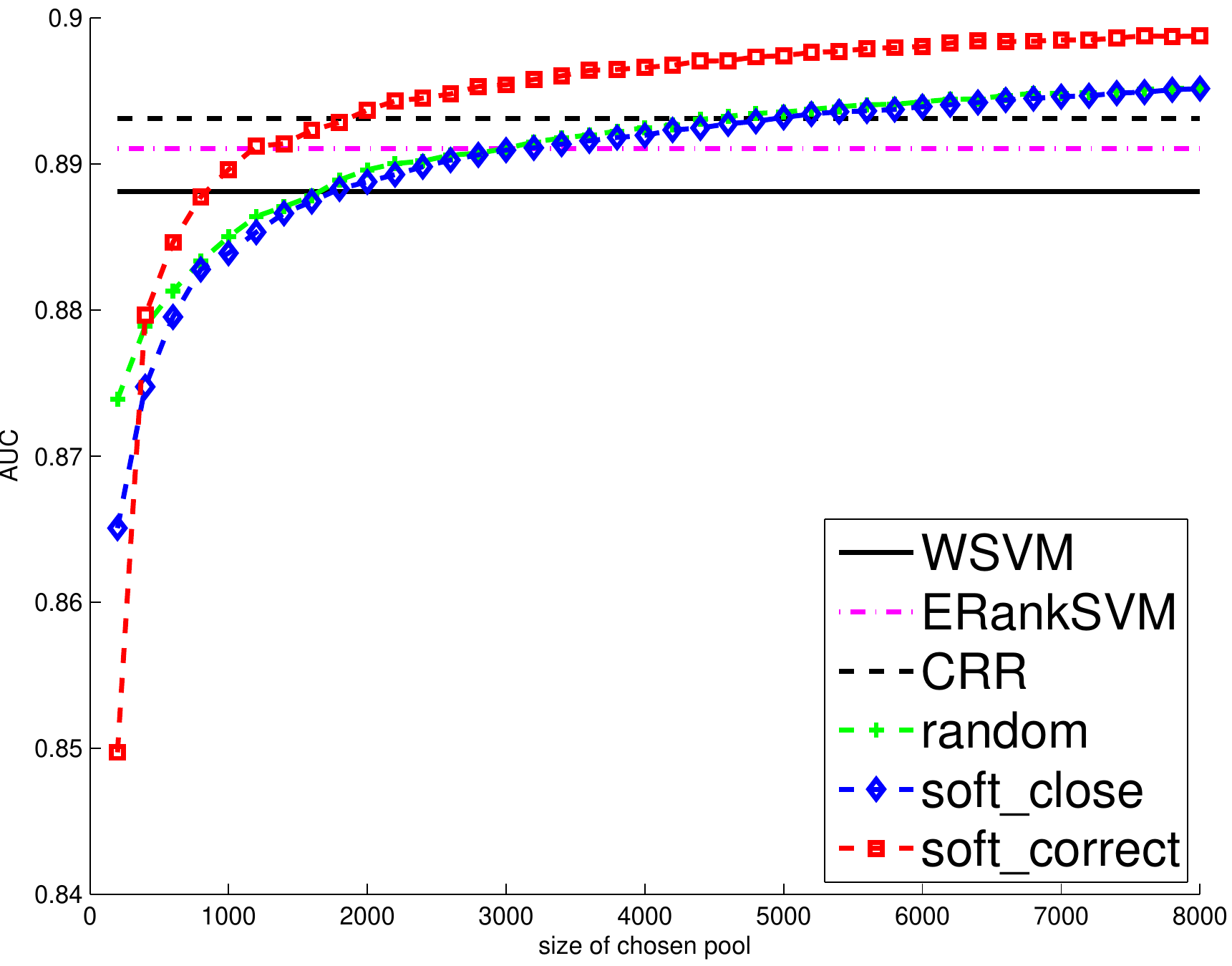}} 
	\subfloat[bank]{\includegraphics[clip=true,trim= 2cm 7cm 1.5cm 9cm,width=0.23\textwidth]{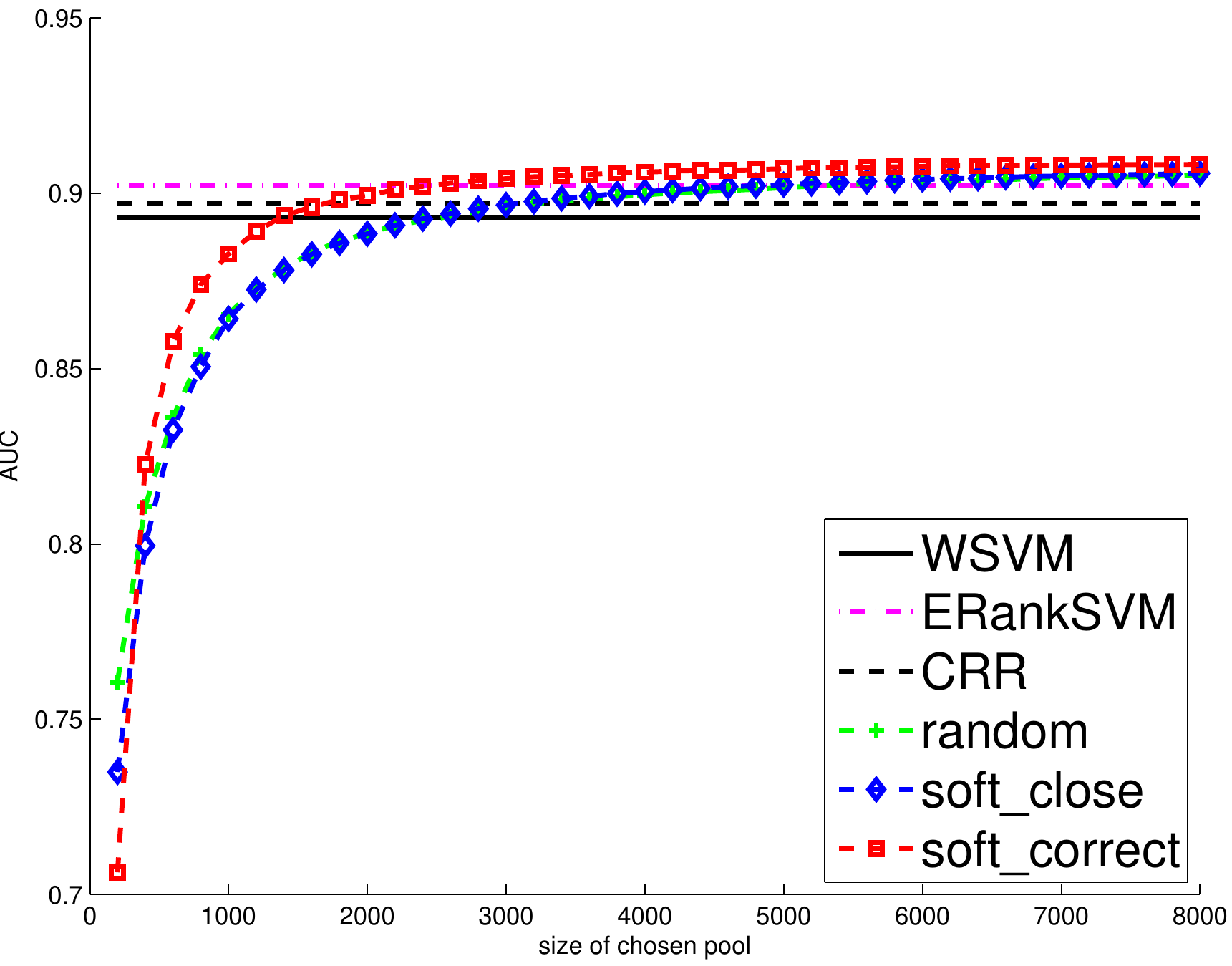}}	
	\subfloat[connect]{\includegraphics[clip=true,trim= 2cm 7cm 1.5cm 9cm,width=0.23\textwidth]{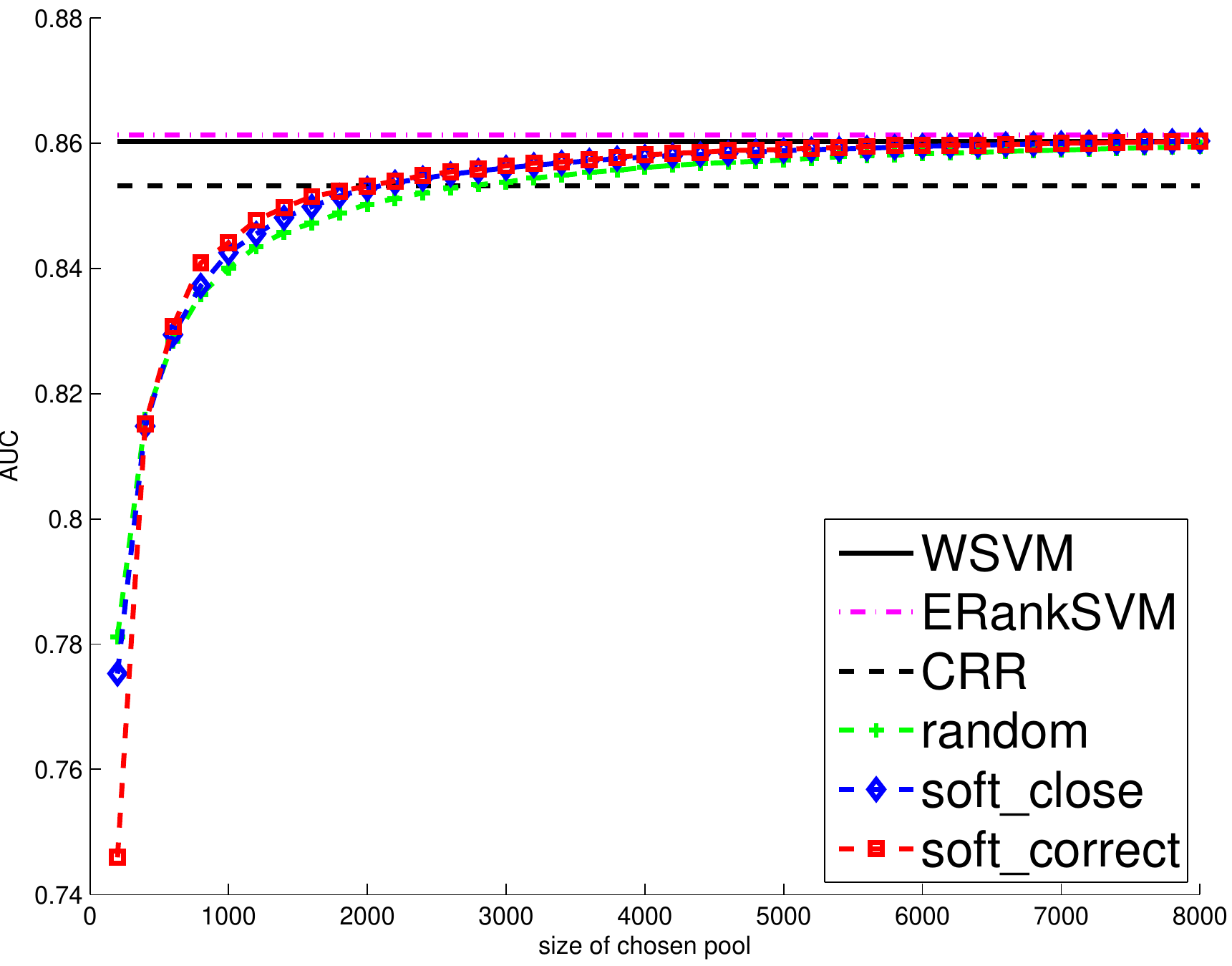}}
\\
	\subfloat[covtype]{\includegraphics[clip=true,trim= 2cm 7cm 1.5cm 9cm,width=0.23\textwidth]{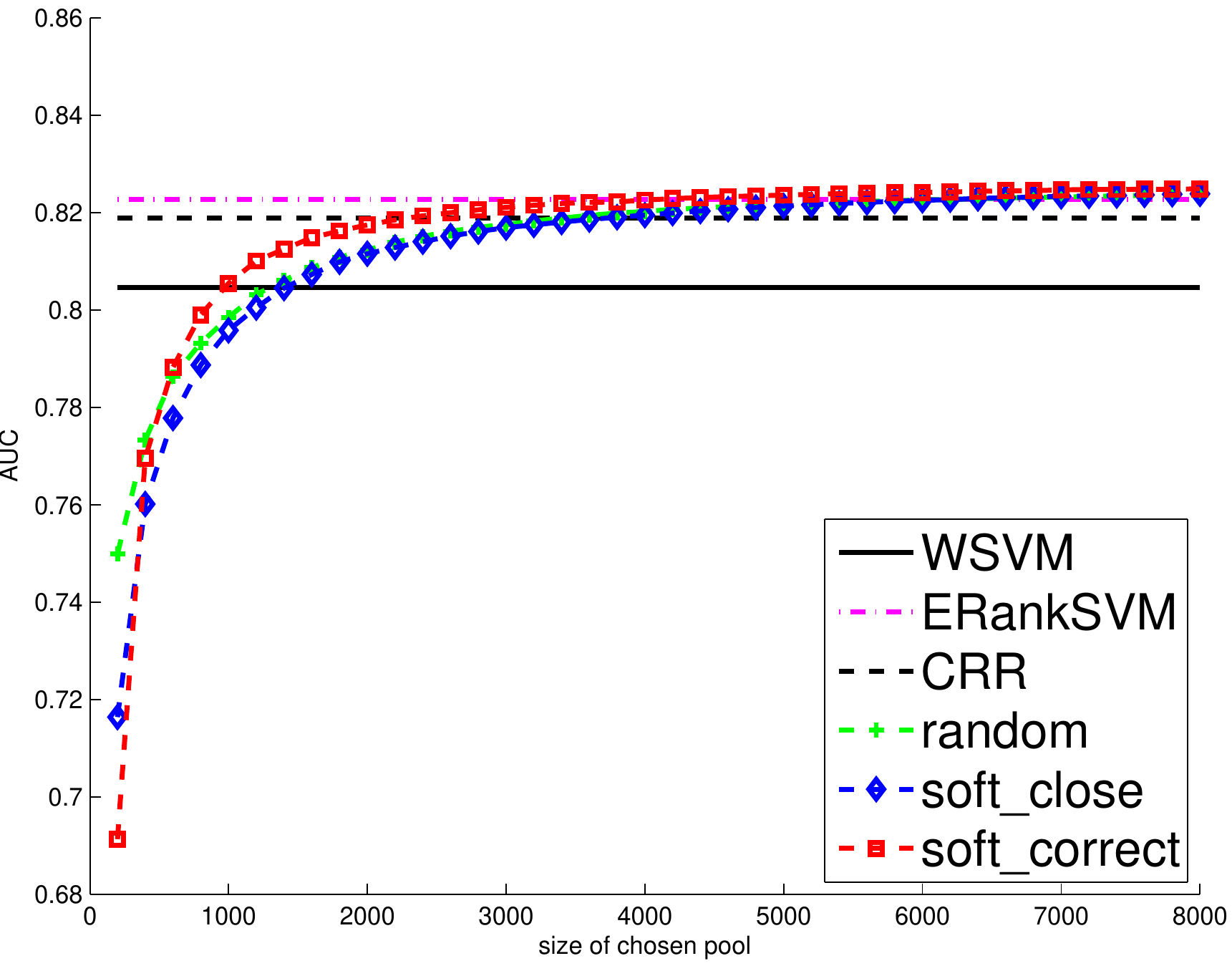}}
	\subfloat[ijcnn1]{\includegraphics[clip=true,trim= 2cm 7cm 1.5cm 9cm,width=0.23\textwidth]{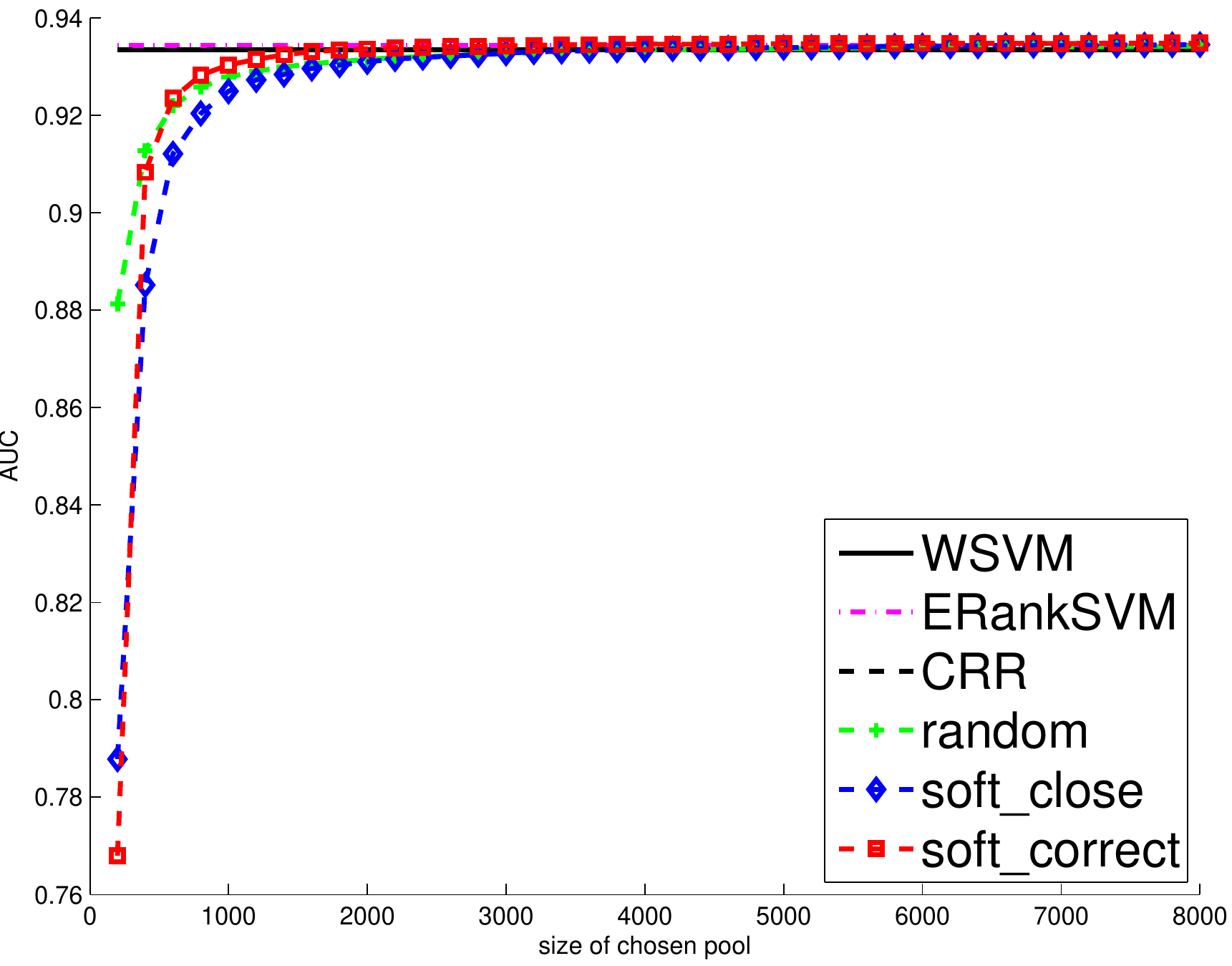}}
	\subfloat[letter]{\includegraphics[clip=true,trim= 2cm 7cm 1.5cm 9cm,width=0.23\textwidth]{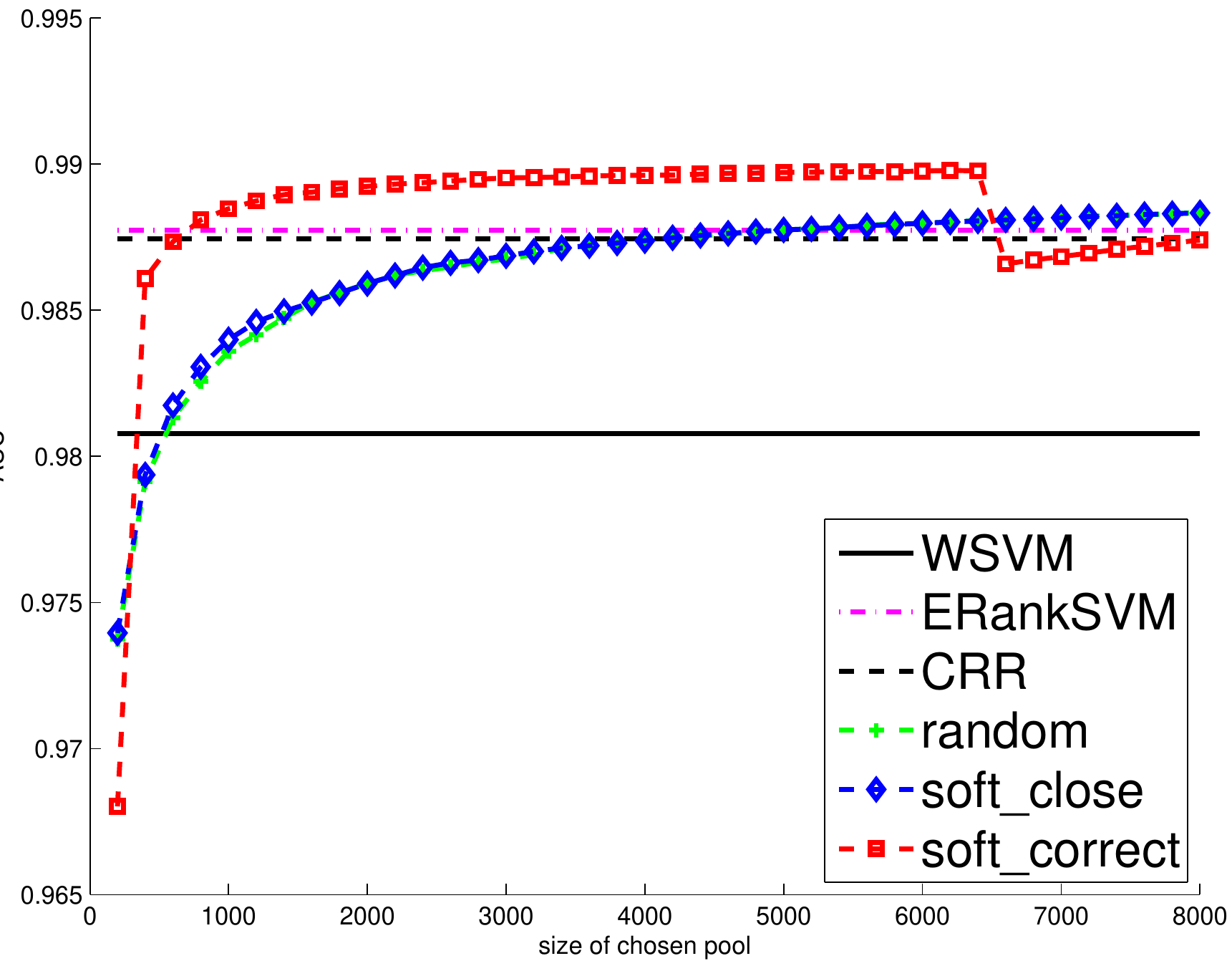}}
	\subfloat[mnist]{\includegraphics[clip=true,trim= 1.5cm 7cm 1.5cm 9cm,width=0.23\textwidth]{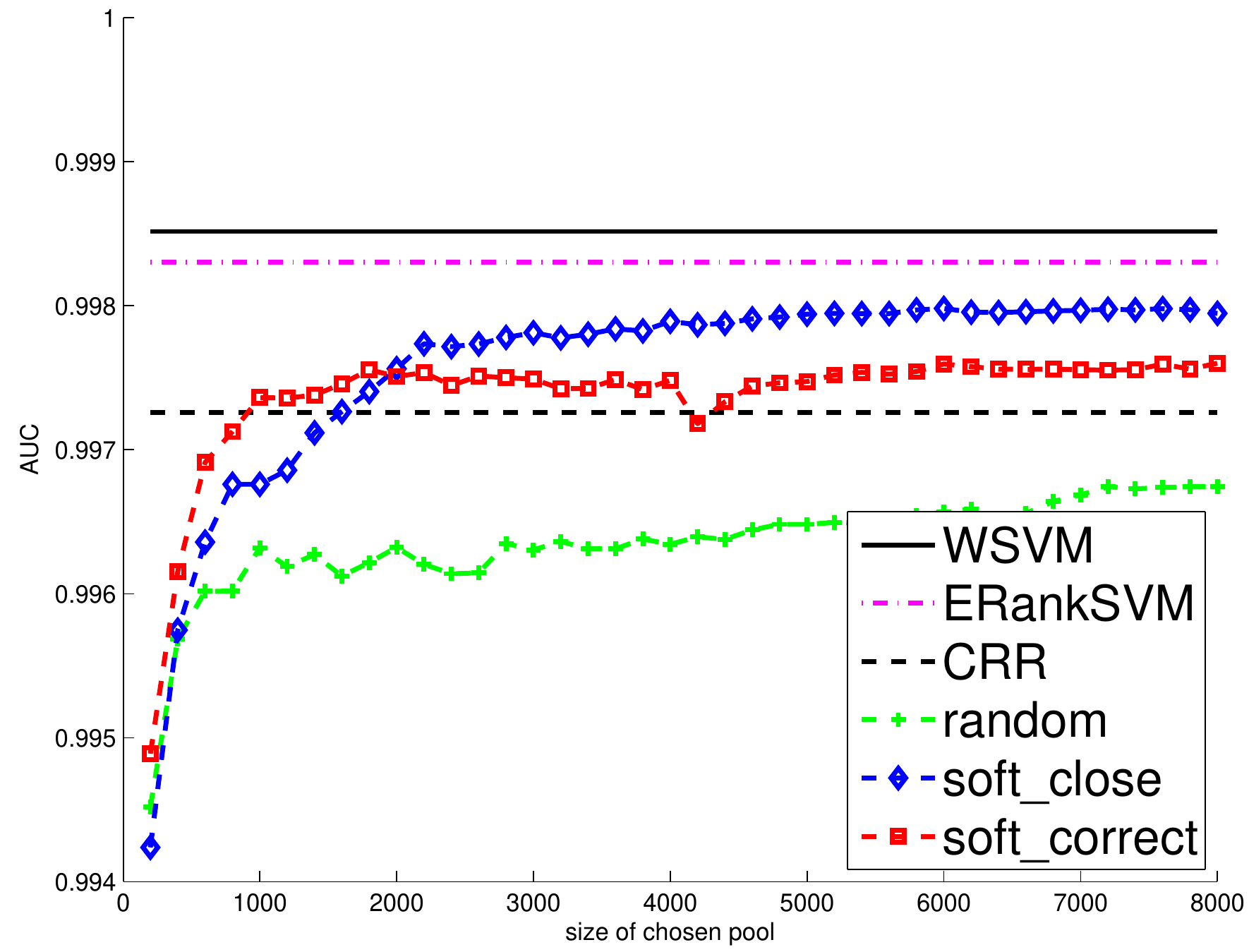}}
\\
	\subfloat[news20]{\includegraphics[clip=true,trim= 2cm 7cm 1.5cm 9cm,width=0.23\textwidth]{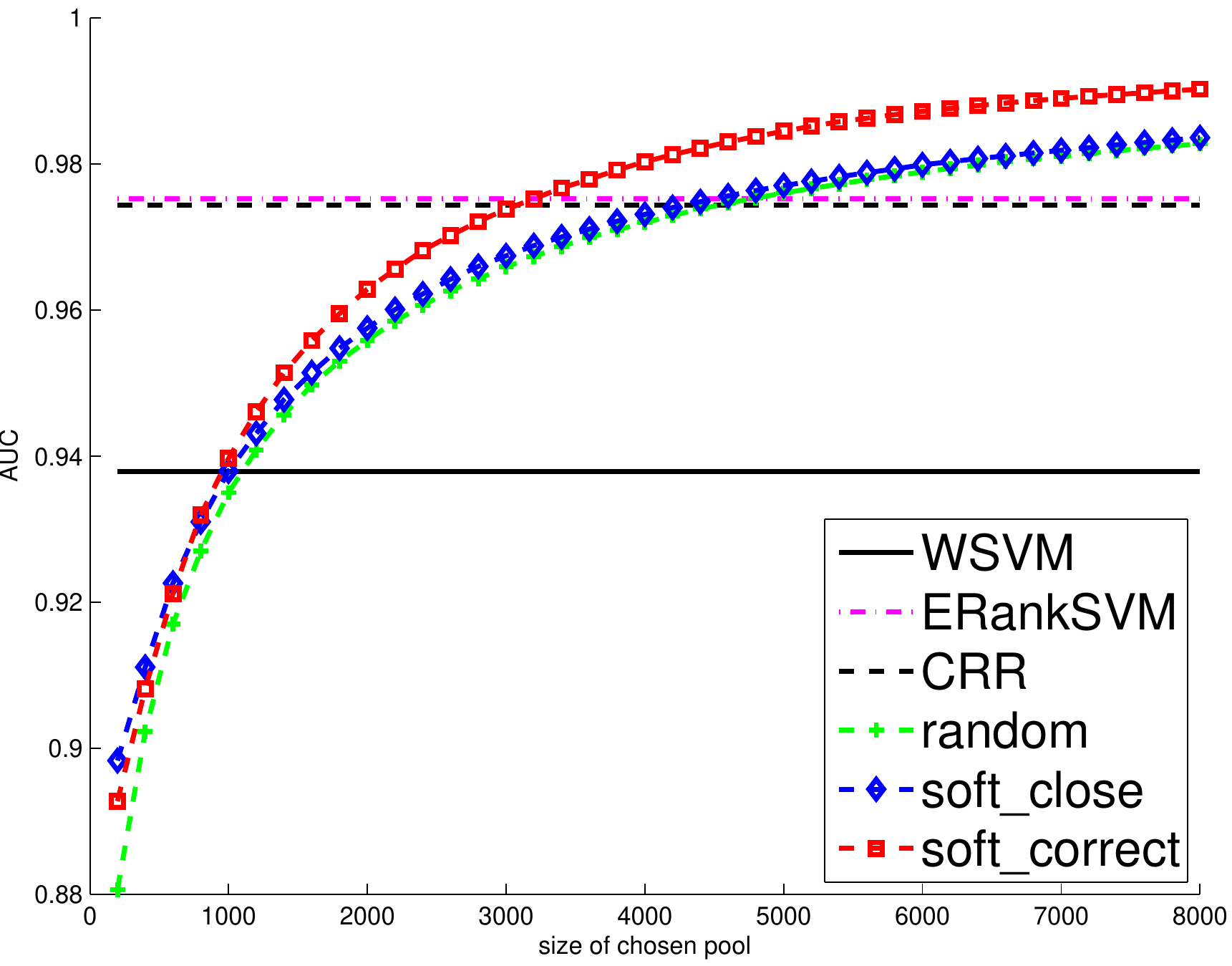}}
	\subfloat[protein]{\includegraphics[clip=true,trim= 2cm 7cm 1.5cm 9cm,width=0.23\textwidth]{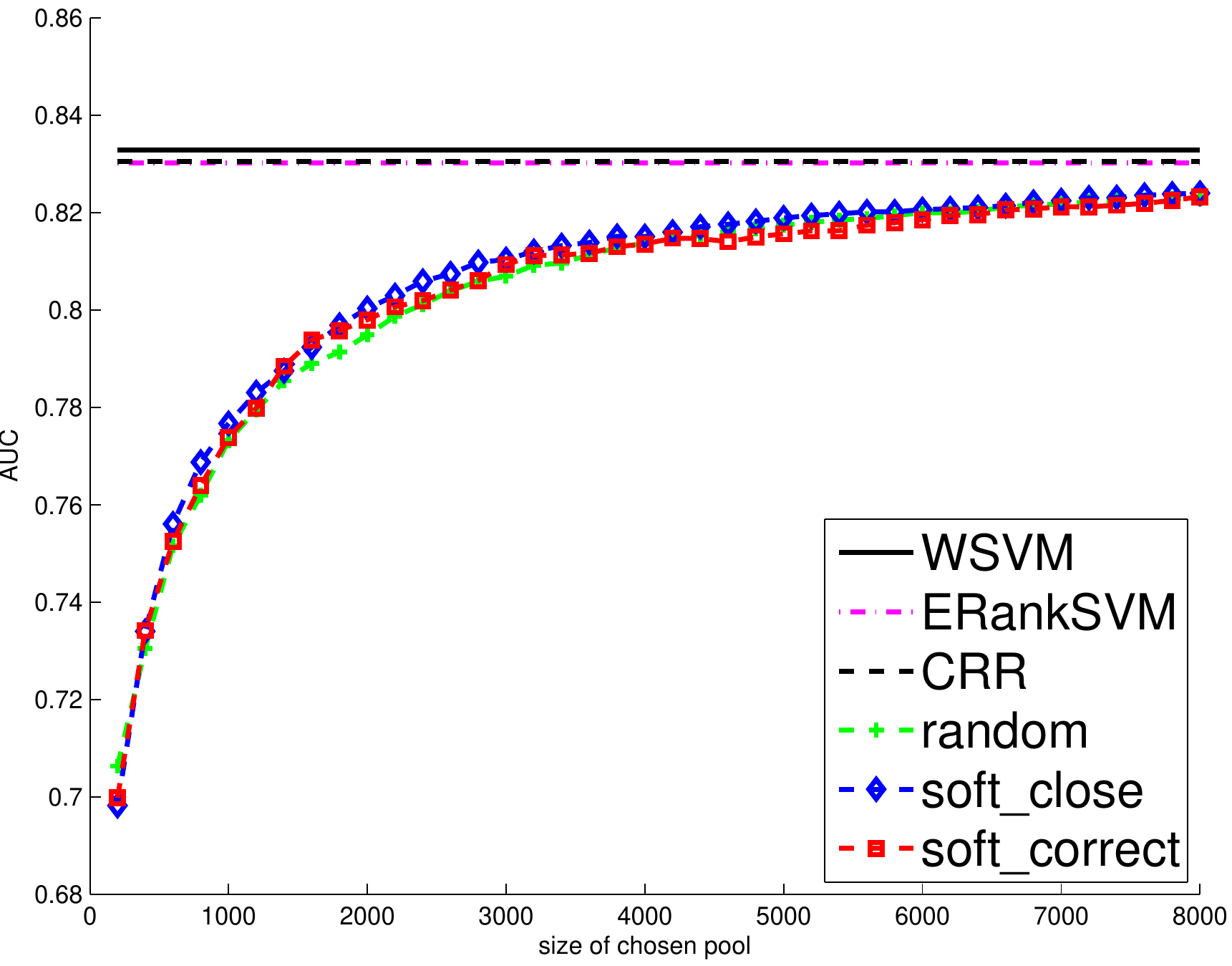}}
	\subfloat[rcv1]{\includegraphics[clip=true,trim= 2cm 7cm 1.5cm 9cm,width=0.23\textwidth]{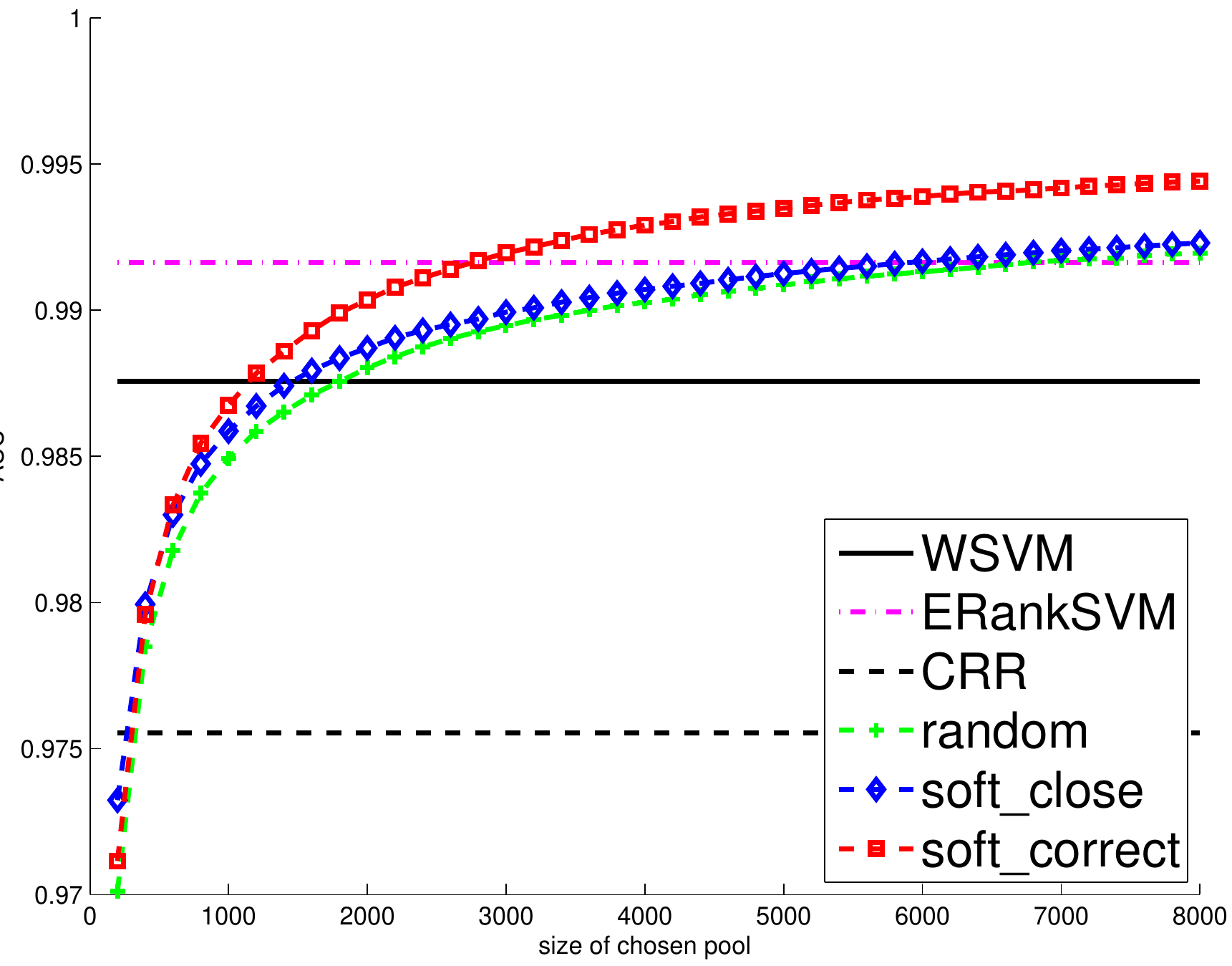}}
	\subfloat[real-sim]{\includegraphics[clip=true,trim= 2cm 7cm 1.5cm 9cm,width=0.23\textwidth]{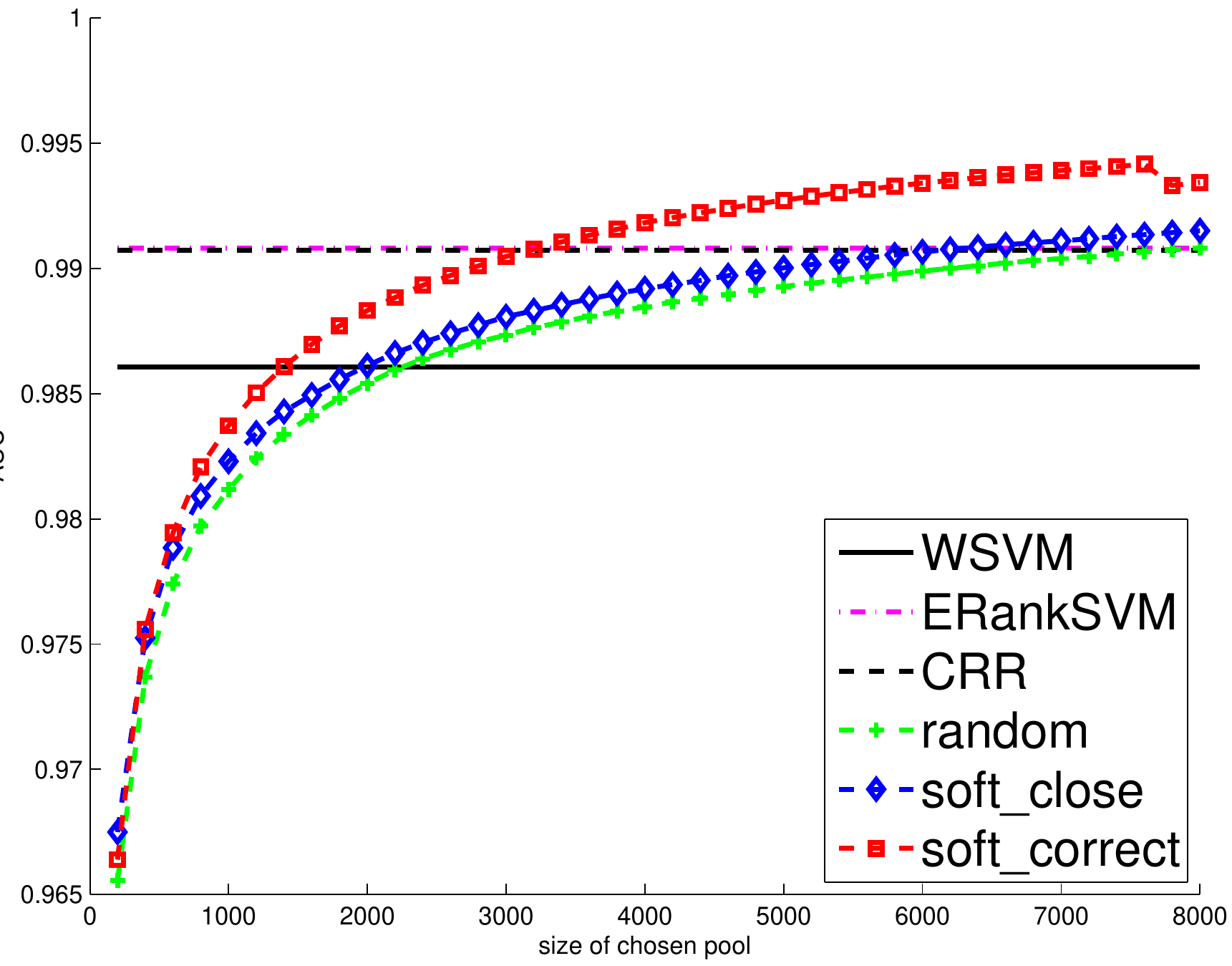}}
\\
	\subfloat[shuttle]{\includegraphics[clip=true,trim= 2cm 7cm 1.5cm 9cm,width=0.23\textwidth]{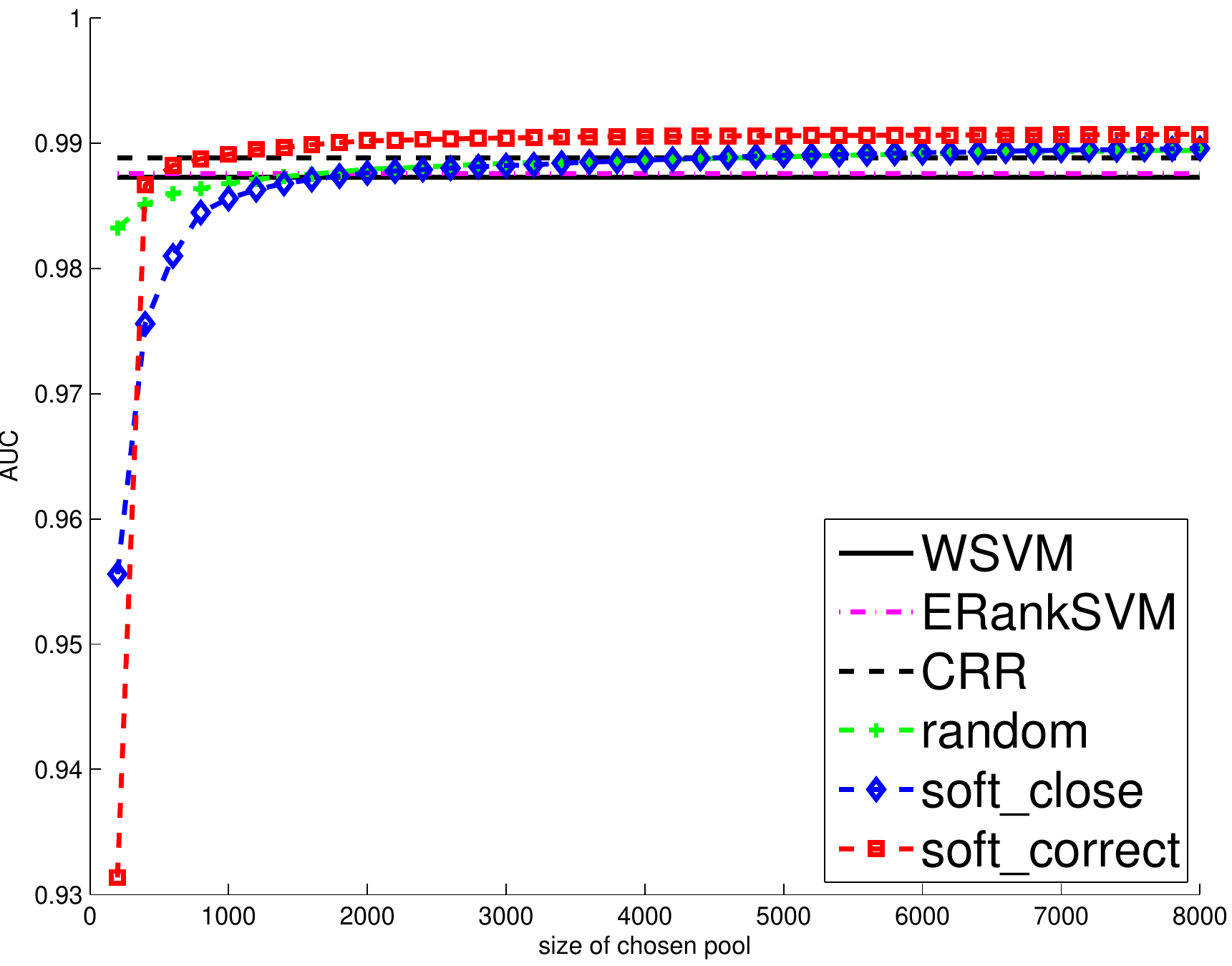}}
	\subfloat[url]{\includegraphics[clip=true,trim= 1.5cm 7cm 1.5cm 9cm,width=0.23\textwidth]{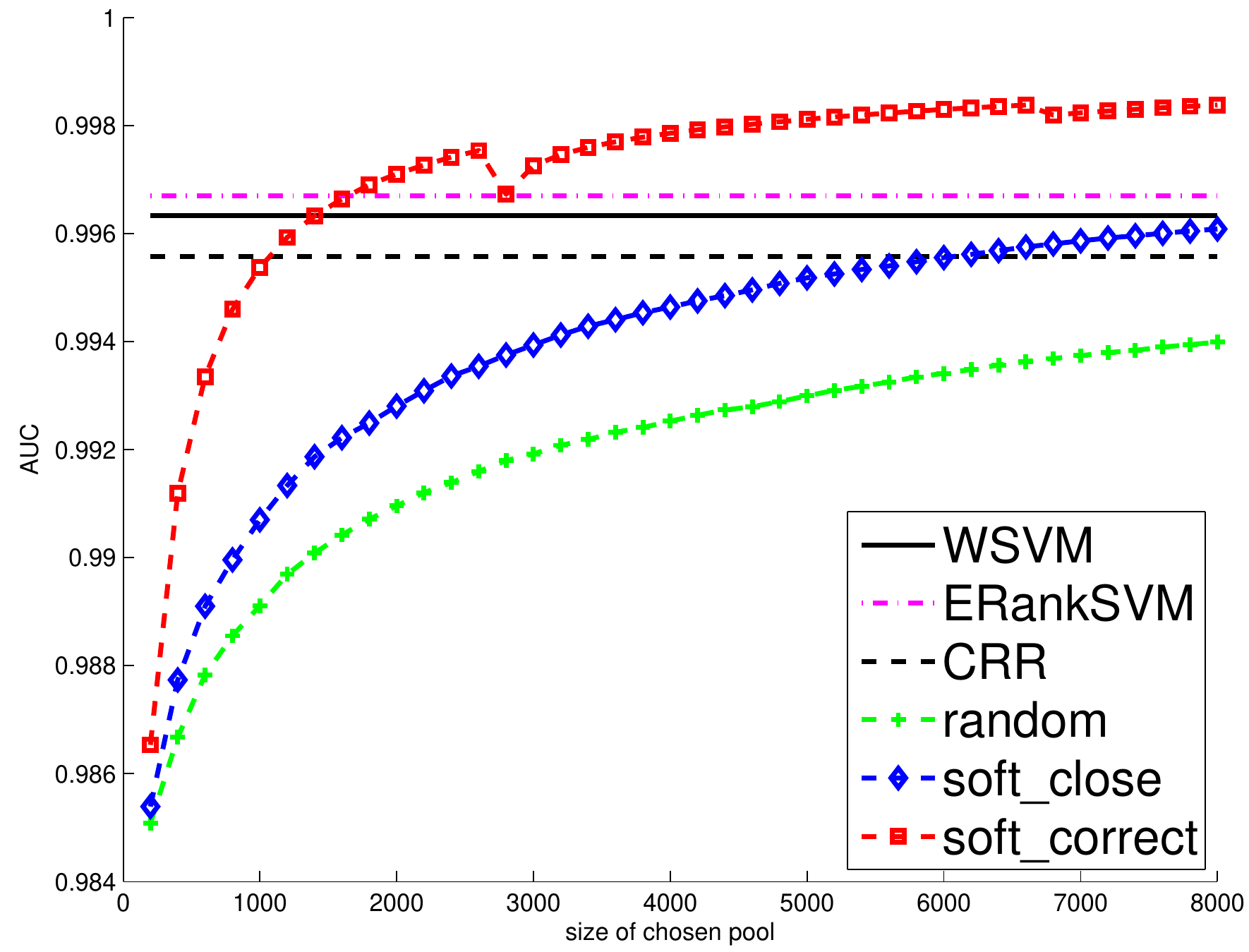}}
	\subfloat[yahoo1]{\includegraphics[clip=true,trim= 1.5cm 7cm 1.5cm 9cm,width=0.23\textwidth]{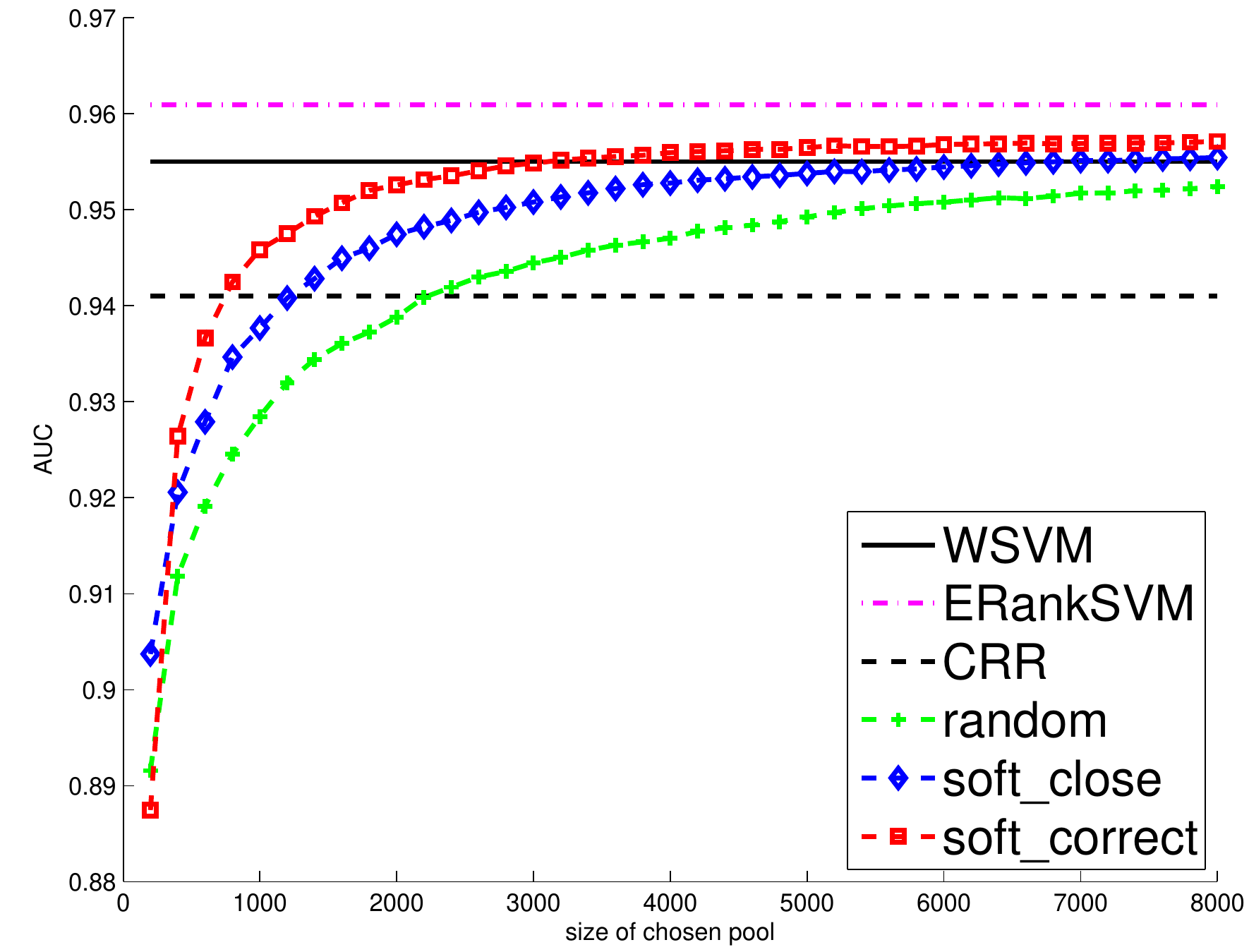}}

	\caption{Performance Curves on Different Datasets}
	\label{fig:auc}
\end{figure*}

\begin{figure*}[h]
	\centering
	\subfloat[a9a]{\includegraphics[clip=true,trim= 2cm 7cm 1.5cm 9cm,width=0.23\textwidth]{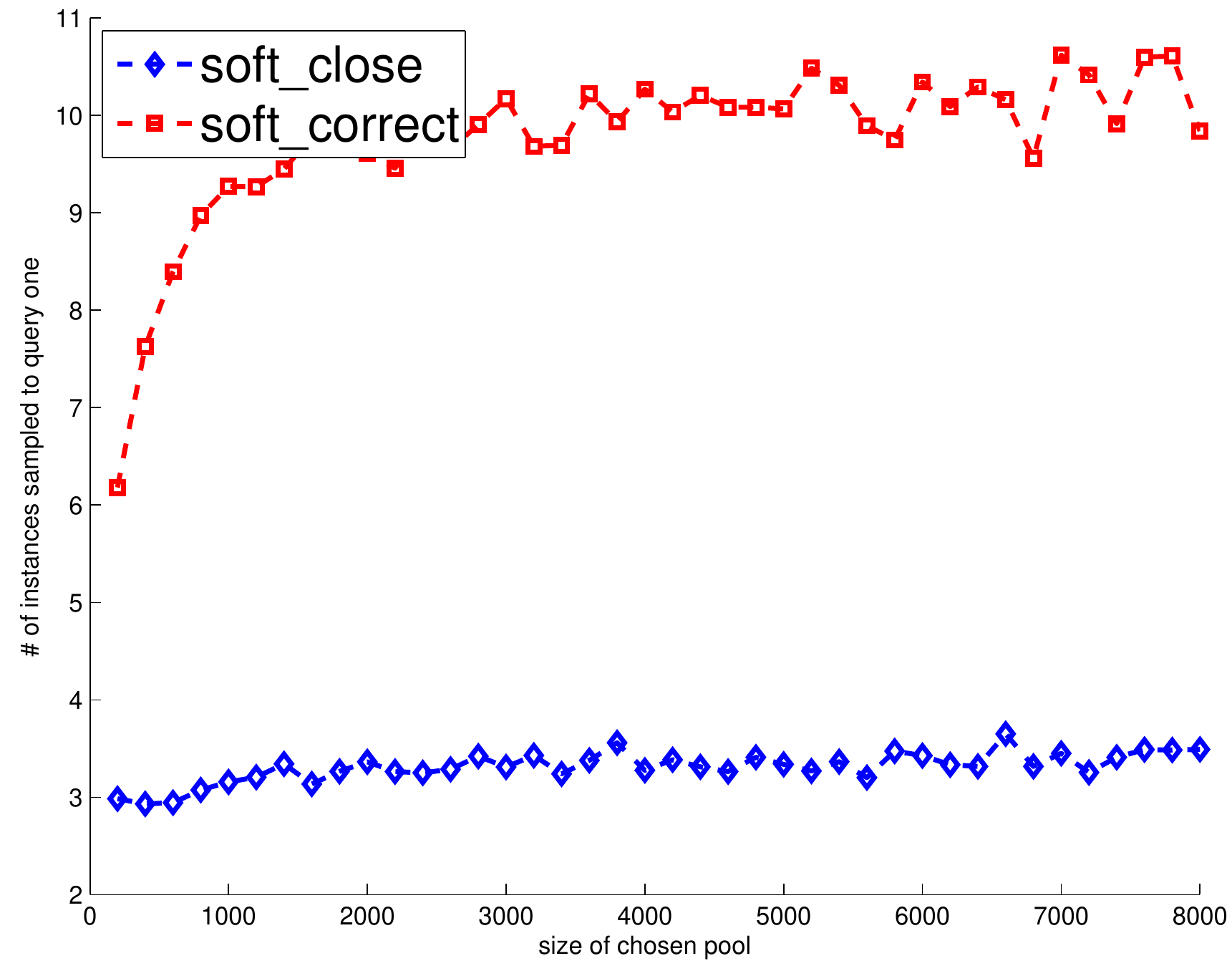}}
	\subfloat[acoustic]{\includegraphics[clip=true,trim= 2cm 7cm 1.5cm 9cm,width=0.23\textwidth]{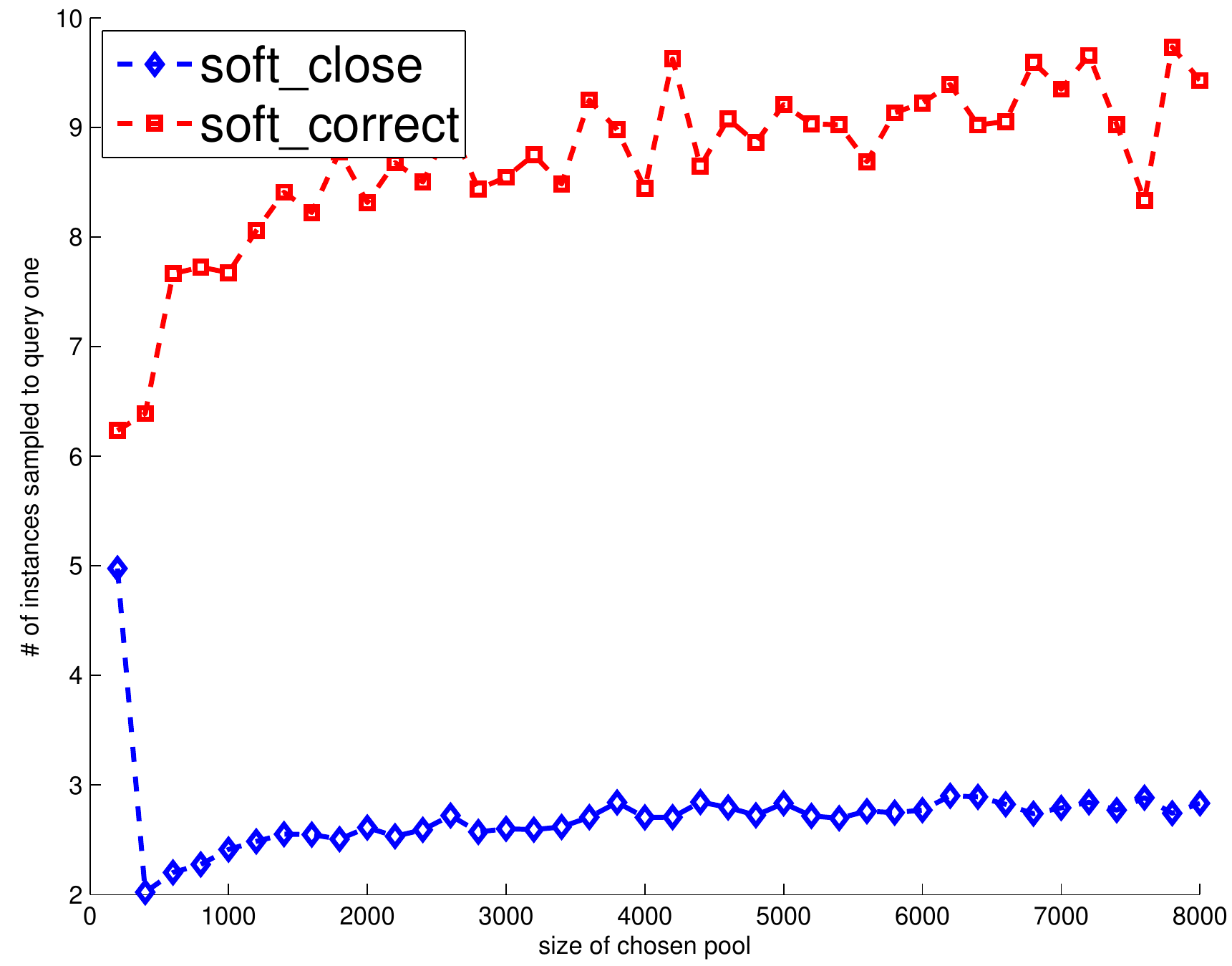}}
	\subfloat[bank]{\includegraphics[clip=true,trim= 2cm 7cm 1.5cm 9cm,width=0.23\textwidth]{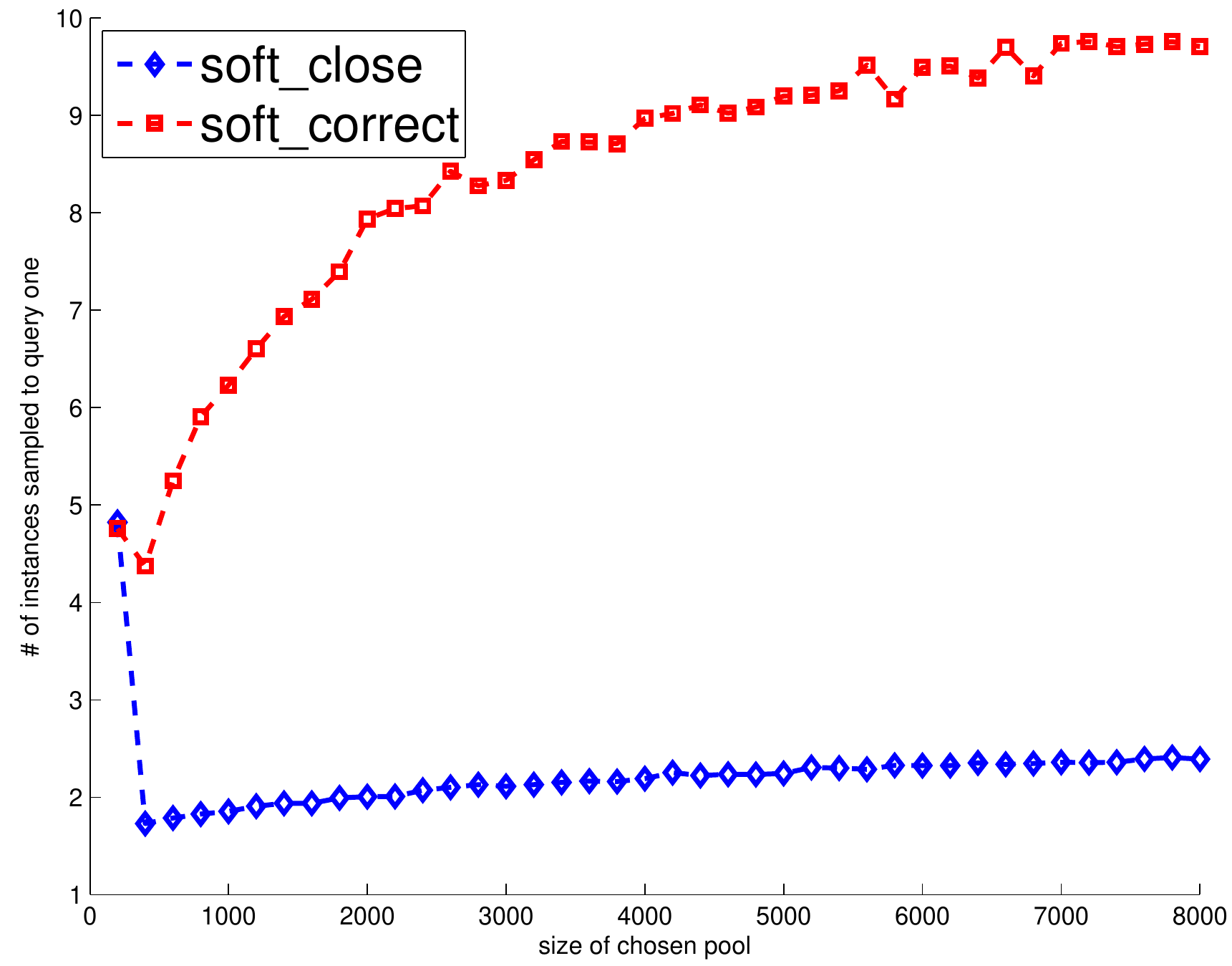}}
	\subfloat[connect]{\includegraphics[clip=true,trim= 2cm 7cm 1.5cm 9cm,width=0.23\textwidth]{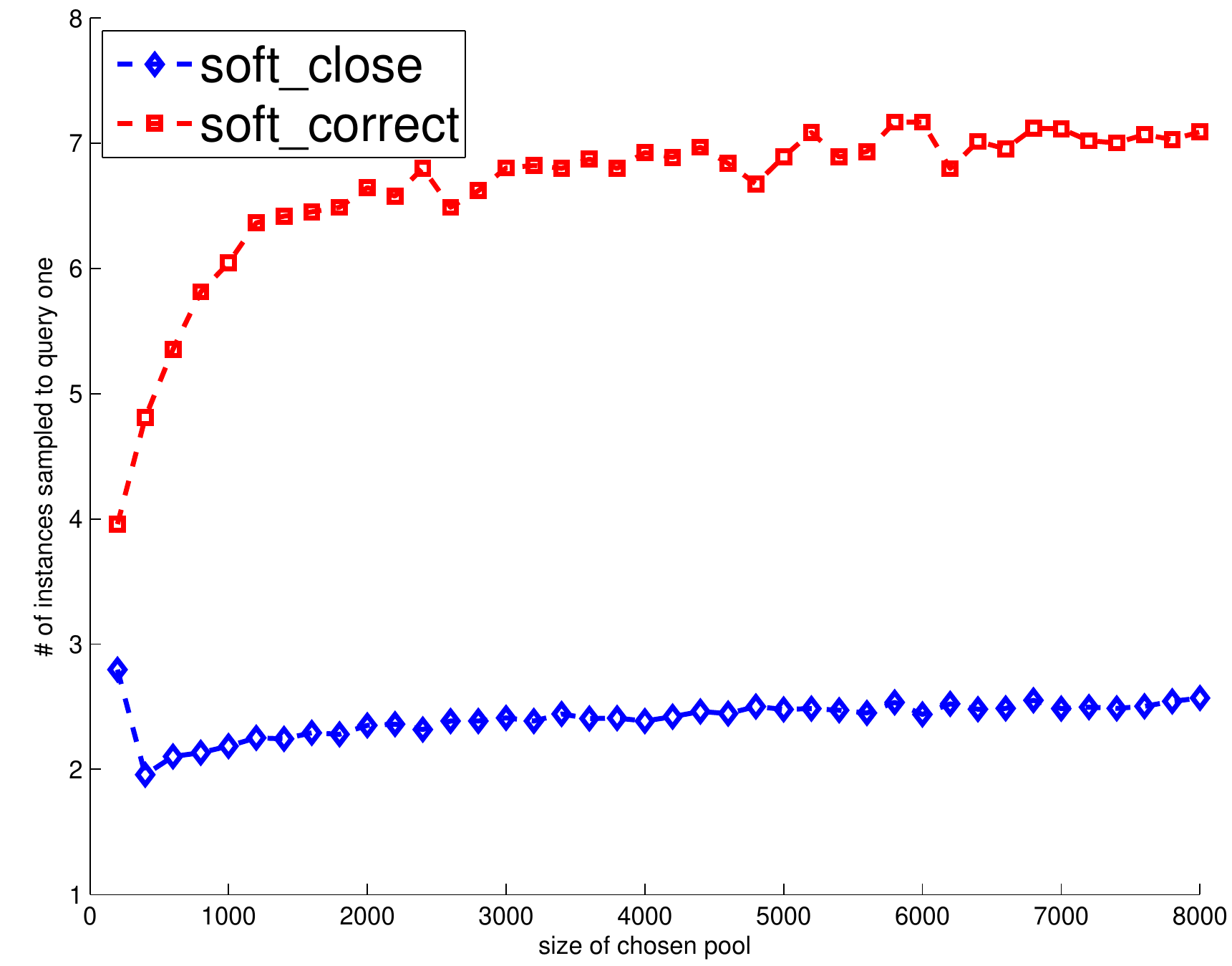}}
	\\
	\subfloat[covtype]{\includegraphics[clip=true,trim= 2cm 7cm 1.5cm 9cm,width=0.23\textwidth]{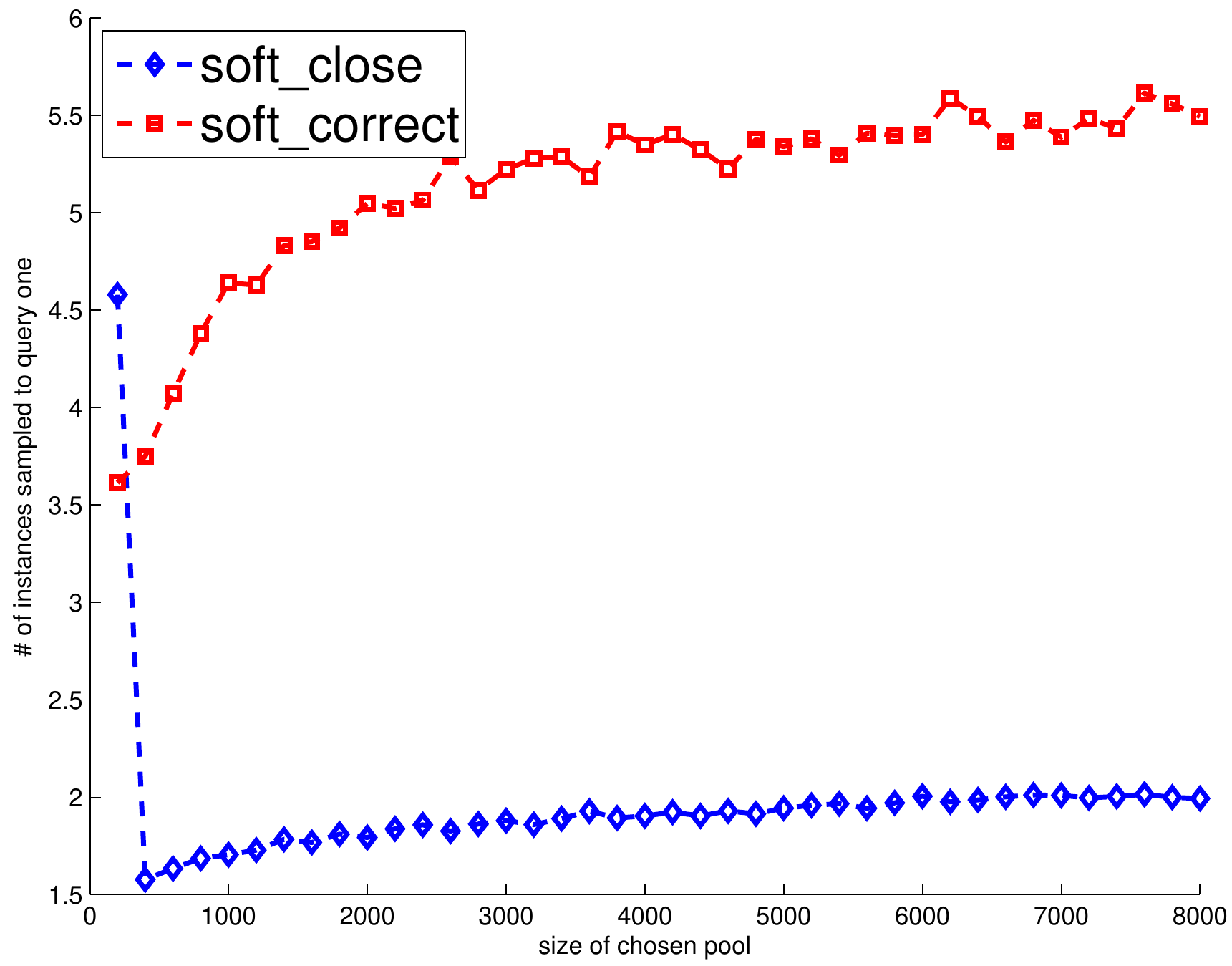}}
	\subfloat[ijcnn1]{\includegraphics[clip=true,trim= 2cm 7cm 1.5cm 9cm,width=0.23\textwidth]{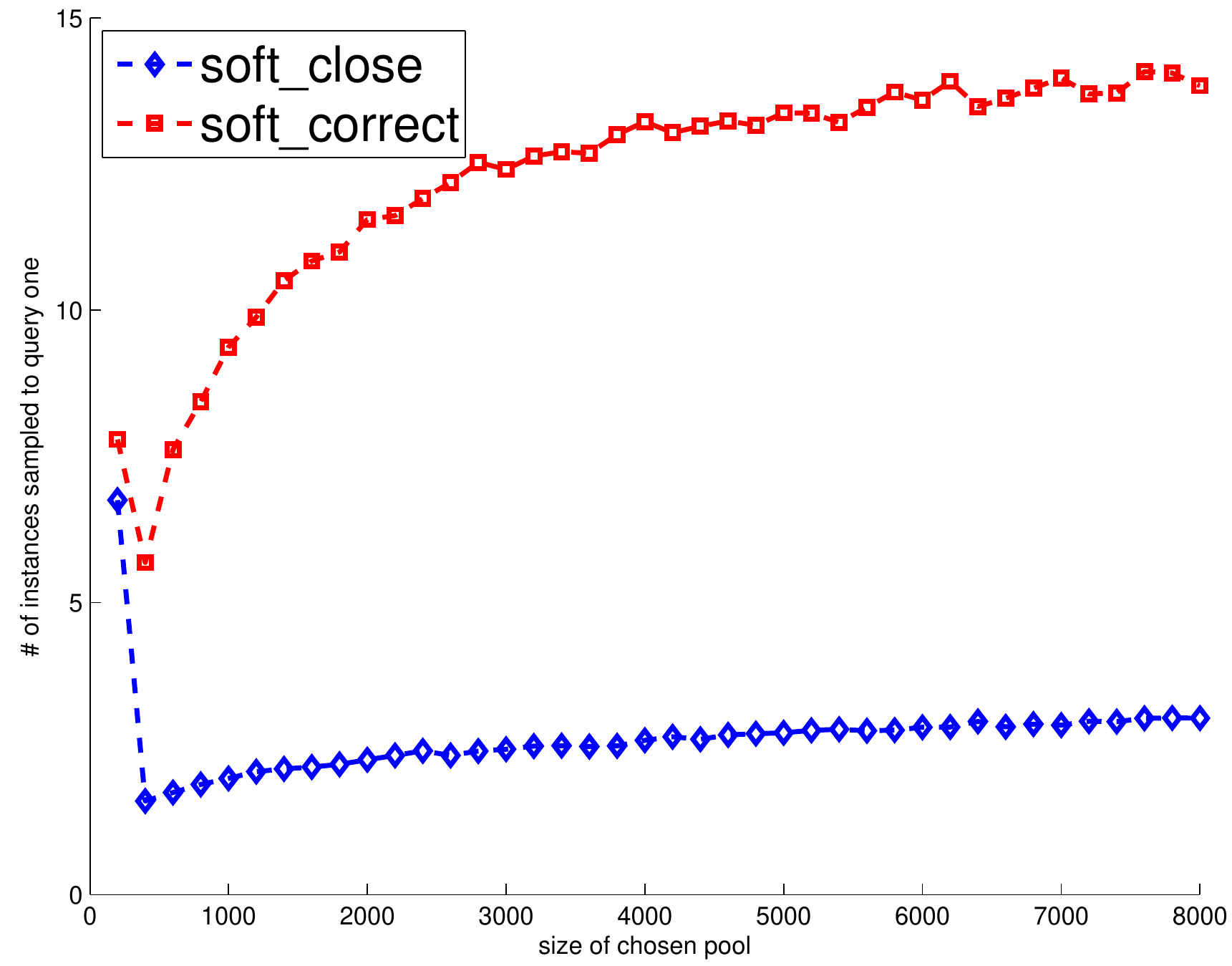}}
	\subfloat[letter]{\includegraphics[clip=true,trim= 2cm 7cm 1.5cm 9cm,width=0.23\textwidth]{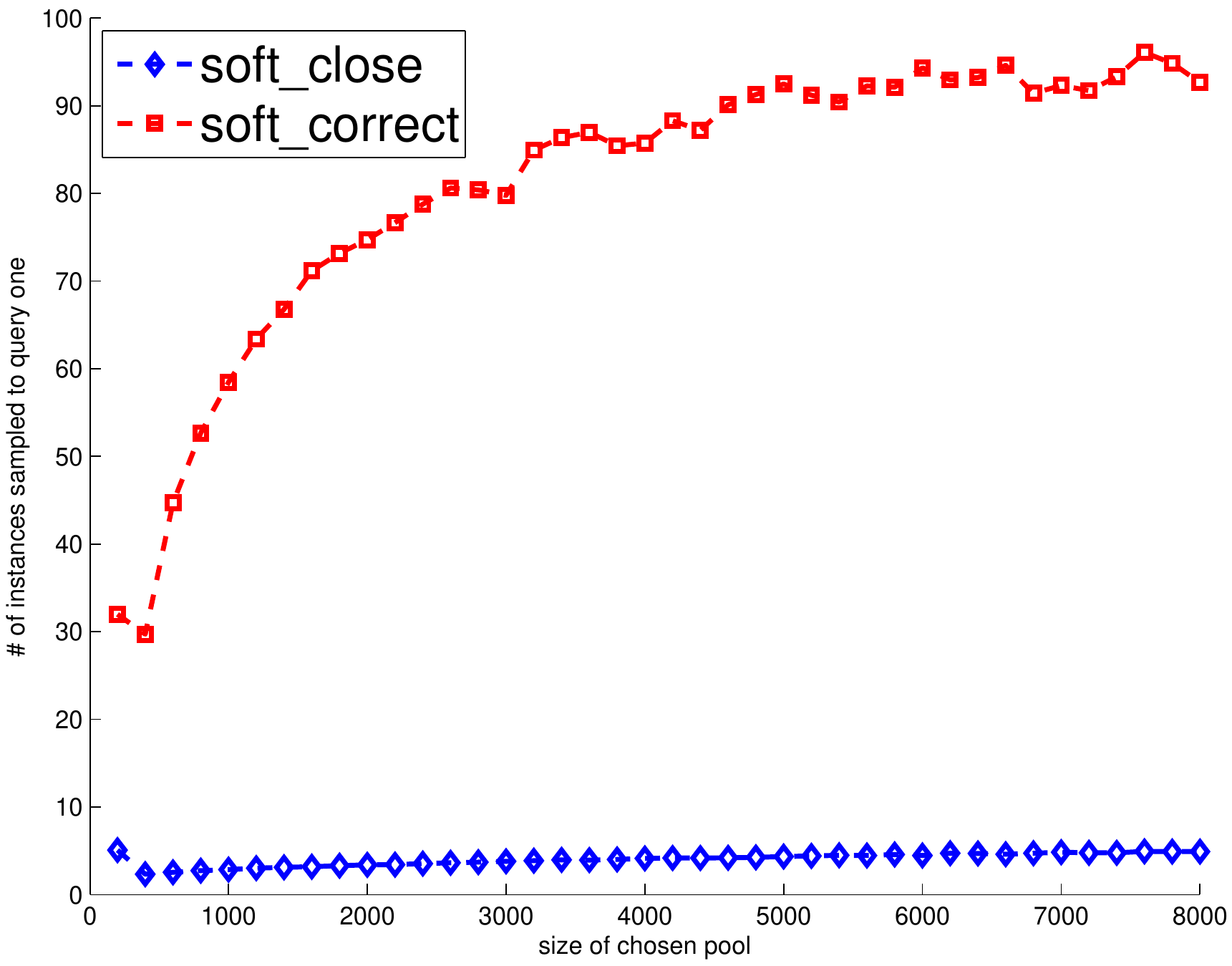}}
	\subfloat[mnist]{\includegraphics[clip=true,trim= 2cm 7cm 1.5cm 9cm,width=0.23\textwidth]{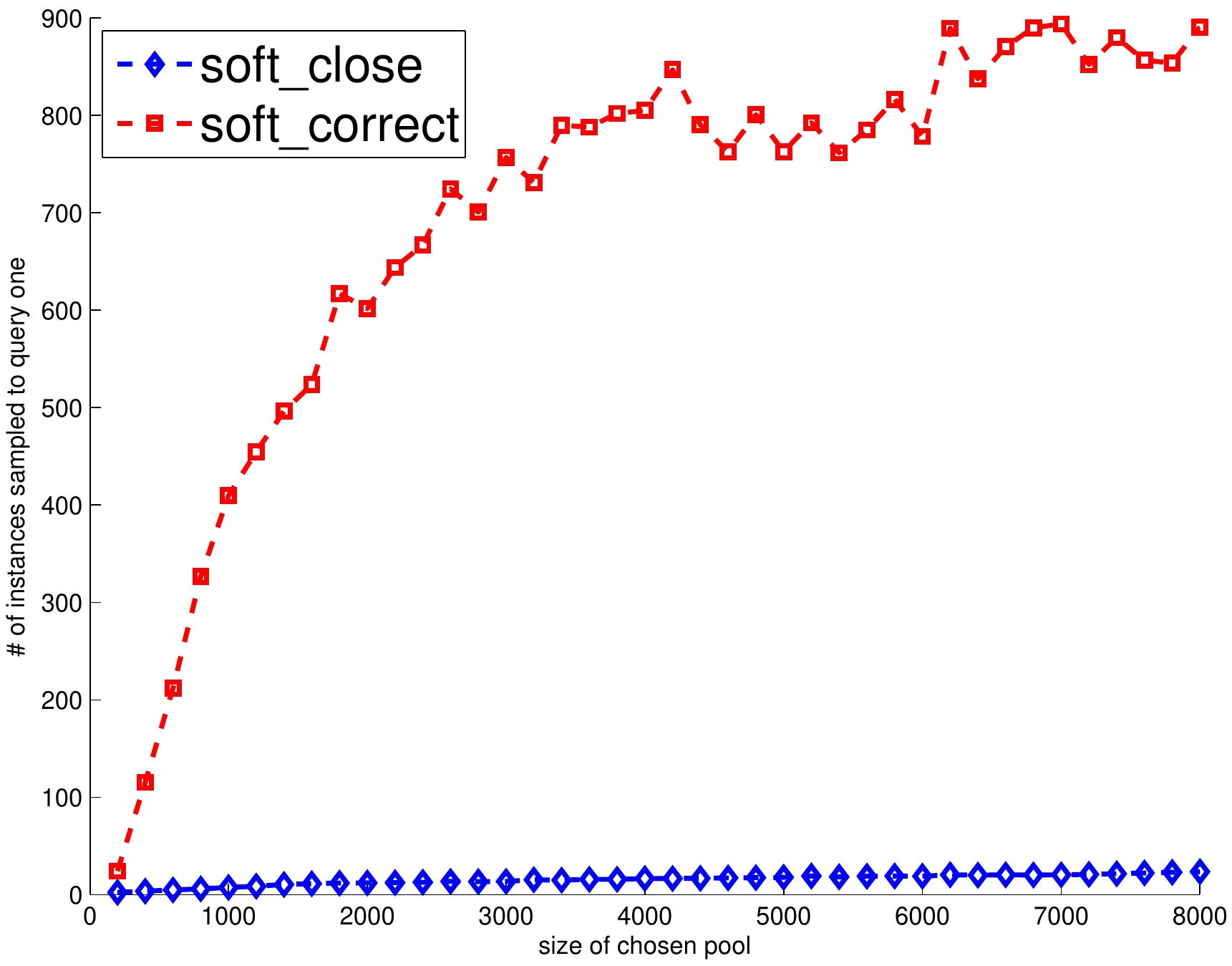}}
\\
	\subfloat[news20]{\includegraphics[clip=true,trim= 2cm 7cm 1.5cm 9cm,width=0.23\textwidth]{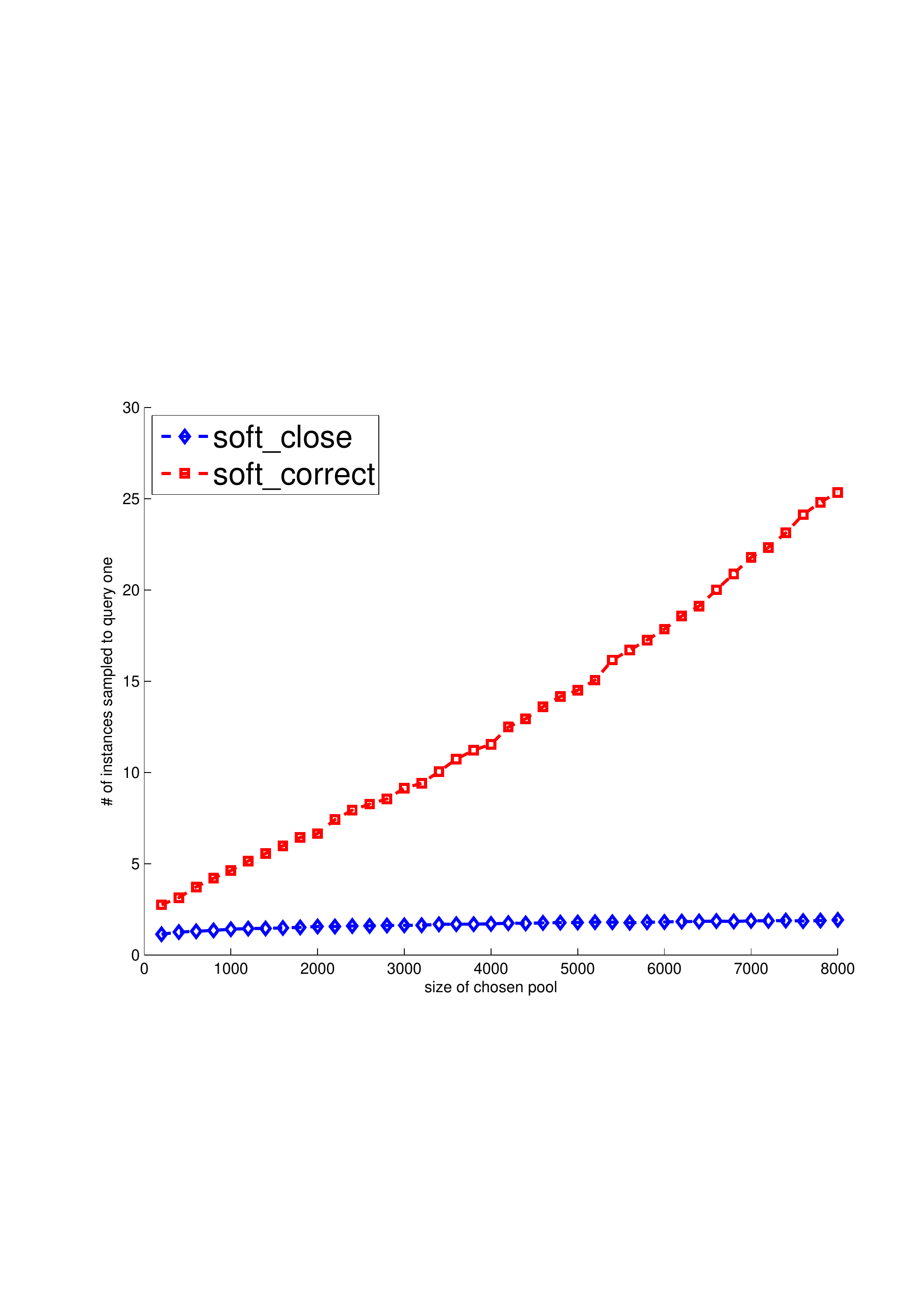}}
	\subfloat[protein]{\includegraphics[clip=true,trim= 2cm 7cm 1.5cm 9cm,width=0.23\textwidth]{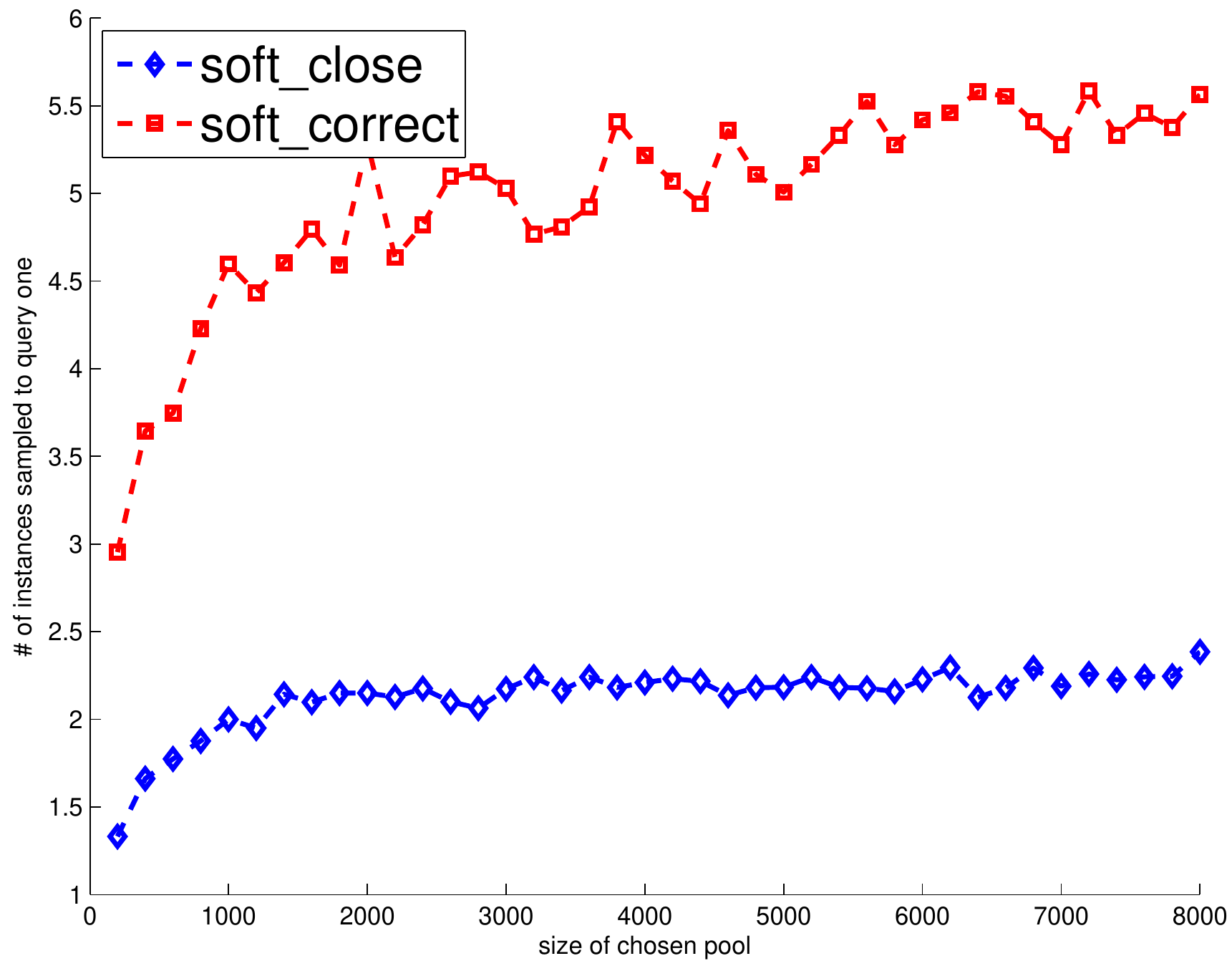}}
	\subfloat[rcv1]{\includegraphics[clip=true,trim= 2cm 7cm 1.5cm 9cm,width=0.23\textwidth]{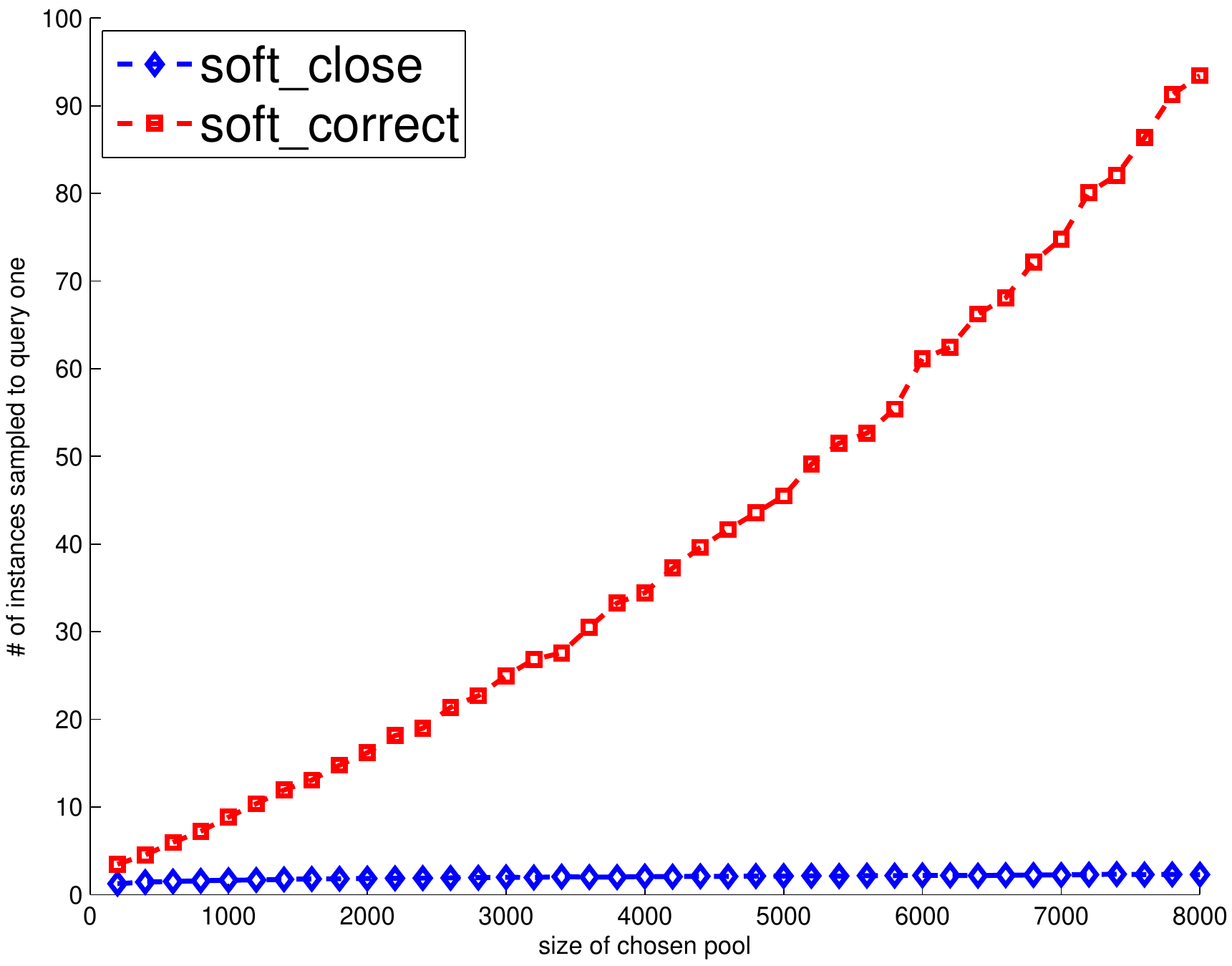}}
	\subfloat[real-sim]{\includegraphics[clip=true,trim= 2cm 7cm 1.5cm 9cm,width=0.23\textwidth]{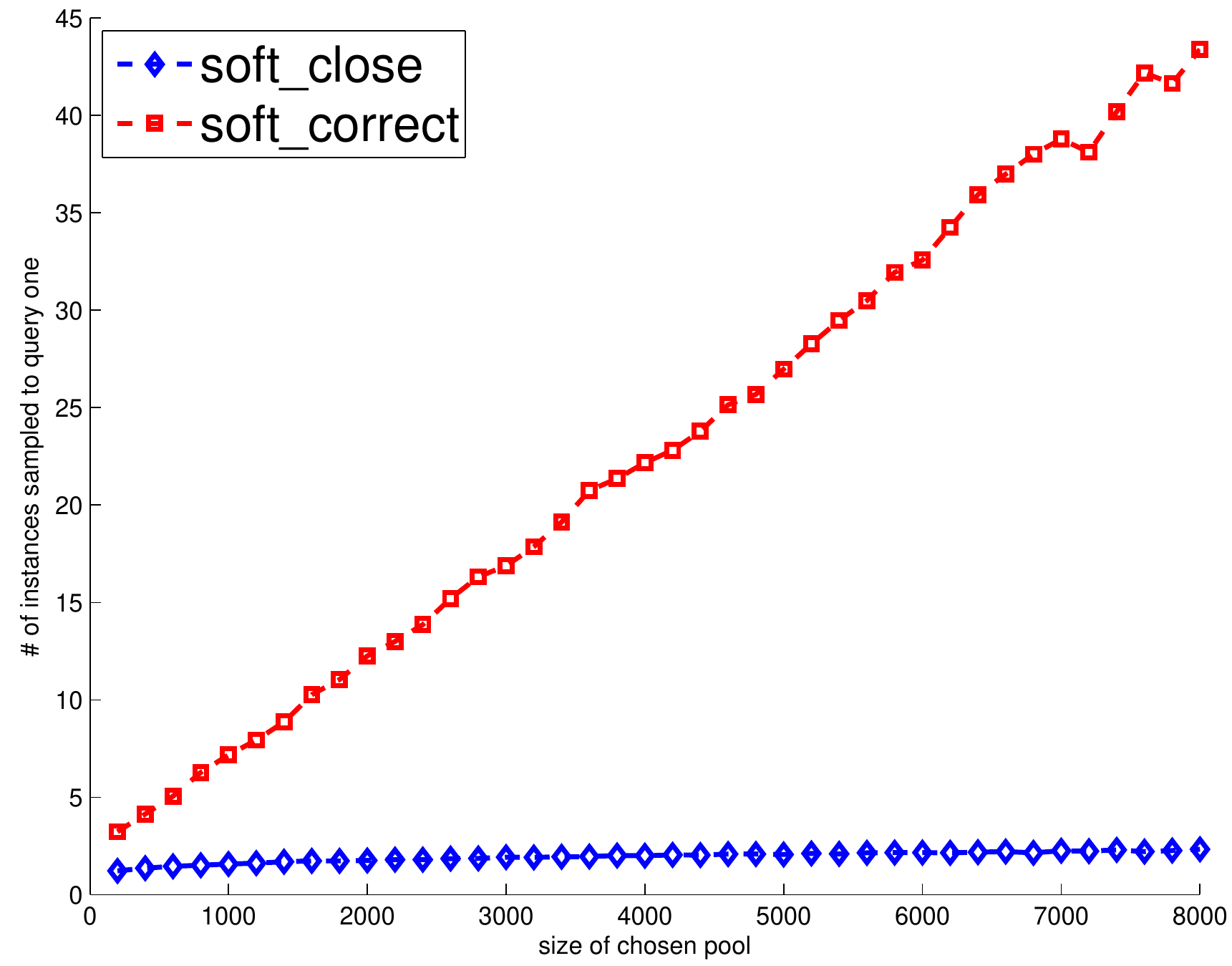}}
	\\
	\subfloat[shuttle]{\includegraphics[clip=true,trim= 2cm 7cm 1.5cm 9cm,width=0.23\textwidth]{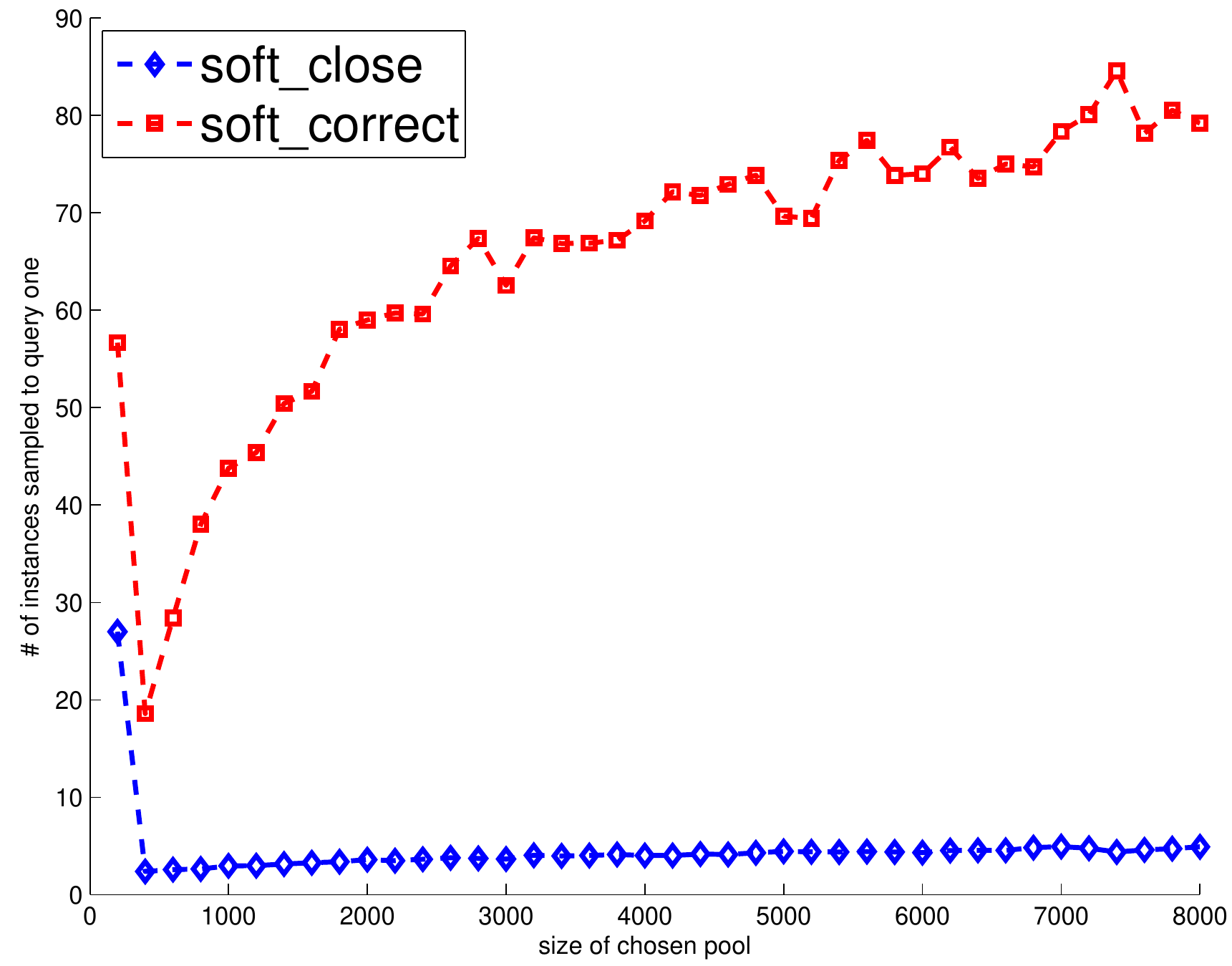}}
	\subfloat[url]{\includegraphics[clip=true,trim= 2cm 7cm 1.5cm 9cm,width=0.23\textwidth]{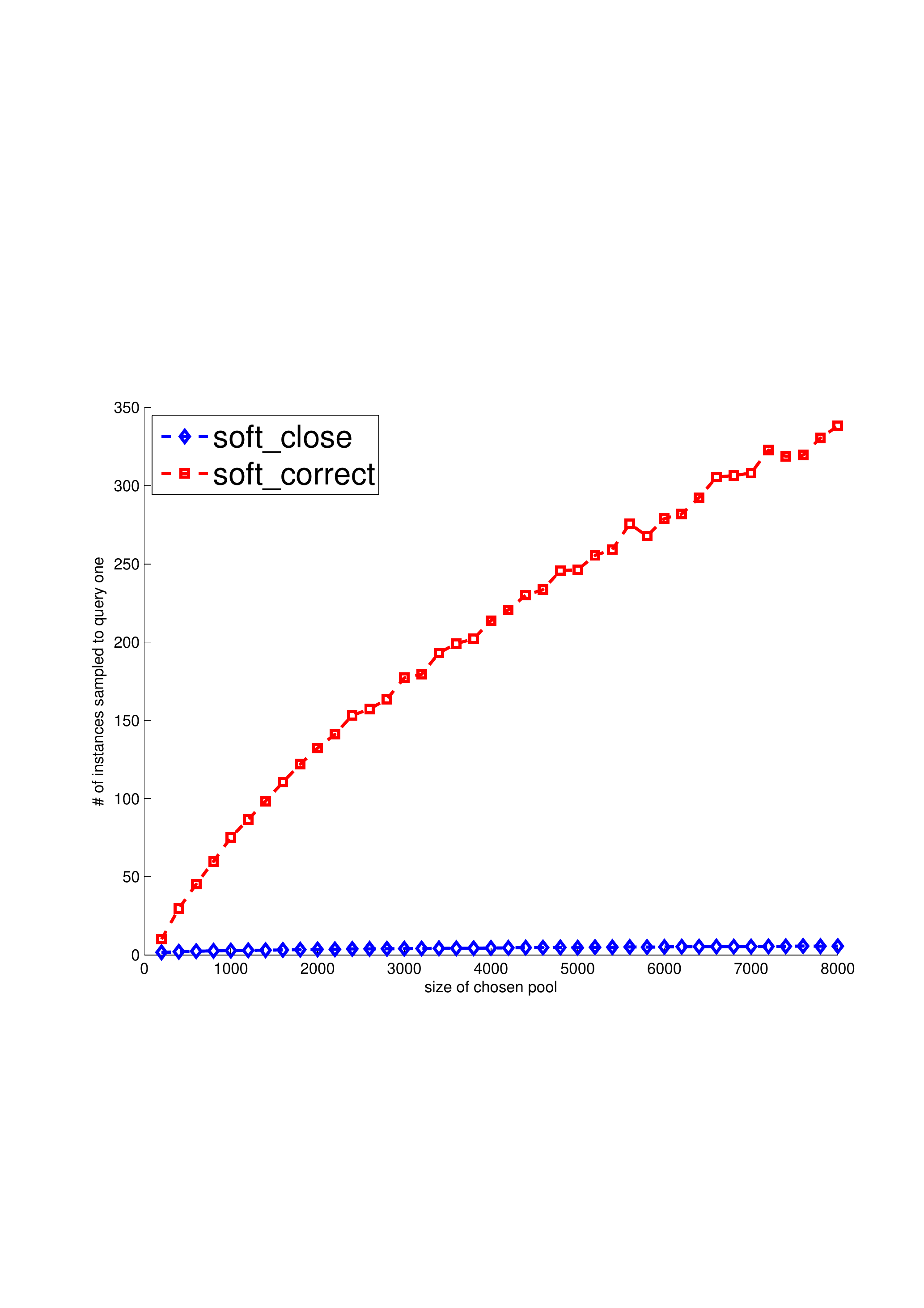}}
	\subfloat[yahoo1]{\includegraphics[clip=true,trim= 2cm 7cm 1.5cm 9cm,width=0.23\textwidth]{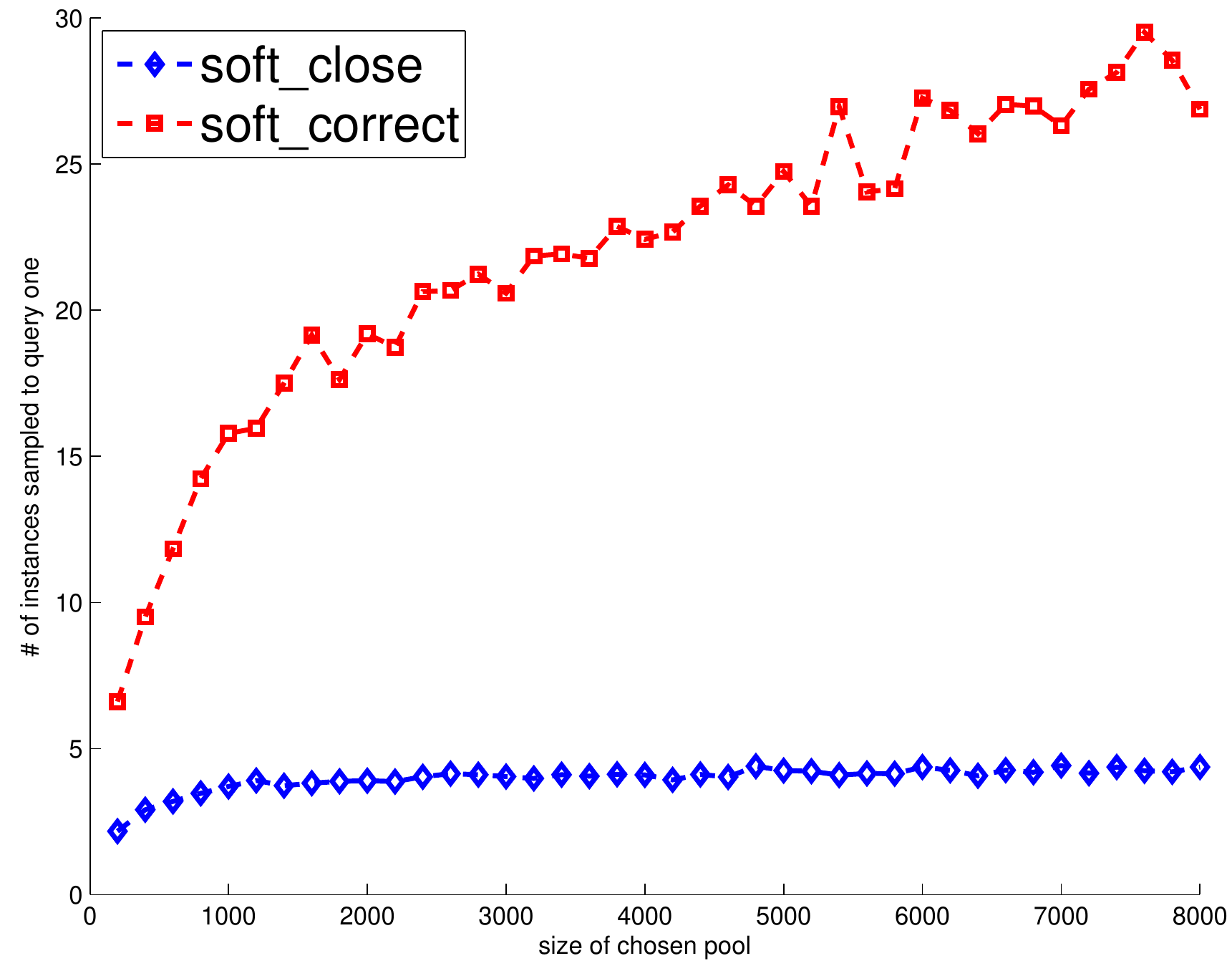}}

	\caption{Number of Pairs Rejected on Different Datasets}
	\label{fig:eff}
\end{figure*}

\section{Conclusion} \label{sec:conclusion}

We propose the algorithm of Active Sampling (AS) under Combined Ranking and Classification (CRC) based on the linear SVM. 
There are two major components of the proposed algorithm. 
The AS scheme selects valuable pairs for training and resolves the computational burden in large-scale bipartite ranking. 
The CRC framework unifies the concept of point-wise ranking and pair-wise ranking under the same framework, and can perform better than pure point-wise ranking or pair-wise ranking.
The unified view of pairs and points (pseudo-pairs) in CRC allows using one AS scheme to select from both types of pairs. 



Experiments on 14 real-world large-scale data sets demonstrate the promising performance and efficiency of the ASRankSVM and ASCRC algorithms.
The algorithms usually outperform state-of-the-art bipartite ranking algorithms, including the point-wise SVM, the pair-wise SVM, and the combined ranking and regression approach. 
The results not only justify the validity of ASCRC, but also shows the valuable pairs or pseudo-pairs can be helpful for large-scale bipartite ranking.


\begin{thebibliography}{10}

\bibitem{ailon2012active}
N.~Ailon.
\newblock An active learning algorithm for ranking from pairwise preferences
  with an almost optimal query complexity.
\newblock {\em JMLR}, 13:137--164, 2012.

\bibitem{ailon2007efficient}
N.~Ailon and M.~Mohri.
\newblock An efficient reduction of ranking to classification.
\newblock {\em arXiv preprint arXiv:0710.2889}, 2007.

\bibitem{balcan2007robust}
M.-F. Balcan, N.~Bansal, A.~Beygelzimer, D.~Coppersmith, J.~Langford, and G.~B.
  Sorkin.
\newblock Robust reductions from ranking to classification.
\newblock {\em Machine learning}, 72(1-2):139--153, 2008.

\bibitem{baum1988supervised}
E.~B. Baum and F.~Wilczek.
\newblock Supervised learning of probability distributions by neural networks.
\newblock In {\em NIPS}, pages 52--61, 1988.

\bibitem{brefeld2005auc}
U.~Brefeld and T.~Scheffer.
\newblock {AUC} maximizing support vector learning.
\newblock In {\em ICML Workshop on {ROC} Analysis in Machine Learning}, 2005.

\bibitem{burges2005learning}
C.~Burges, T.~Shaked, E.~Renshaw, A.~Lazier, M.~Deeds, N.~Hamilton, and
  G.~Hullender.
\newblock Learning to rank using gradient descent.
\newblock In {\em ICML}, pages 89--96, 2005.

\bibitem{quoc2007learning}
C.~J. Burges, Q.~V. Le, and R.~Ragno.
\newblock Learning to rank with nonsmooth cost functions.
\newblock {\em NIPS}, 19:193--200, 2007.

\bibitem{caruana2004kdd}
R.~Caruana, T.~Joachims, and L.~Backstrom.
\newblock {KDD-C}up 2004: results and analysis.
\newblock {\em ACM SIGKDD Explorations Newsletter}, 6(2):95--108, 2004.

\bibitem{clemenccon2008ranking}
S.~Clemen{\c{c}}on, G.~Lugosi, and N.~Vayatis.
\newblock Ranking and empirical minimization of {U}-statistics.
\newblock {\em The Annals of Statistics}, 36(2):844--874, 2008.

\bibitem{cortes2004auc}
C.~Cortes and M.~Mohri.
\newblock {AUC} optimization vs. error rate minimization.
\newblock {\em NIPS}, 16(16):313--320, 2004.

\bibitem{donmez2008optimizing}
P.~Donmez and J.~G. Carbonell.
\newblock Optimizing estimated loss reduction for active sampling in rank
  learning.
\newblock In {\em ICML}, pages 248--255, 2008.

\bibitem{donmez2009active}
P.~Donmez and J.~G. Carbonell.
\newblock Active sampling for rank learning via optimizing the area under the
  {ROC} curve.
\newblock {\em Advances in Information Retrieval}, pages 78--89, 2009.

\bibitem{duchi2010consistency}
J.~Duchi, L.~Mackey, and M.~I. Jordan.
\newblock On the consistency of ranking algorithms.
\newblock In {\em ICML}, pages 327--334, 2010.

\bibitem{ertekin2011equivalence}
{\c{S}}.~Ertekin and C.~Rudin.
\newblock On equivalence relationships between classification and ranking
  algorithms.
\newblock {\em JMLR}, 12:2905--2929, 2011.

\bibitem{liblinear}
R.-E. Fan, K.-W. Chang, C.-J. Hsieh, X.-R. Wang, and C.-J. Lin.
\newblock {LIBLINEAR}: A library for large linear classification.
\newblock {\em JMLR}, 9:1871--1874, 2008.

\bibitem{fawcett2006introduction}
T.~Fawcett.
\newblock An introduction to {ROC} analysis.
\newblock {\em Pattern Recognition Letters}, 27(8):861--874, 2006.

\bibitem{freund2003efficient}
Y.~Freund, R.~Iyer, R.~E. Schapire, and Y.~Singer.
\newblock An efficient boosting algorithm for combining preferences.
\newblock {\em JMLR}, 4:933--969, 2003.

\bibitem{freund1997decision}
Y.~Freund and R.~E. Schapire.
\newblock A decision-theoretic generalization of on-line learning and an
  application to boosting.
\newblock {\em Journal of Computer and System Sciences}, 55(1):119--139, 1997.

\bibitem{furnkranz2008multilabel}
J.~F{\"u}rnkranz, E.~H{\"u}llermeier, E.~L. Menc{\'\i}a, and K.~Brinker.
\newblock Multilabel classification via calibrated label ranking.
\newblock {\em Machine Learning}, 73(2):133--153, 2008.

\bibitem{herbrich2000large}
R.~Herbrich, T.~Graepel, and K.~Obermayer.
\newblock Large margin rank boundaries for ordinal regression.
\newblock {\em NIPS}, pages 115--132, 1999.

\bibitem{horvitz1952generalization}
D.~G. Horvitz and D.~J. Thompson.
\newblock A generalization of sampling without replacement from a finite
  universe.
\newblock {\em Journal of the American Statistical Association},
  47(260):663--685, 1952.

\bibitem{joachims2006training}
T.~Joachims.
\newblock Training linear svms in linear time.
\newblock In {\em KDD}, pages 217--226, 2006.

\bibitem{uci}
M.~L. Kevin~Bache.
\newblock {UCI} machine learning repository, 2013.

\bibitem{kotllowski2011bipartite}
W.~Kot{\l}lowski, K.~Dembczynski, and E.~H{\"u}llermeier.
\newblock Bipartite ranking through minimization of univariate loss.
\newblock In {\em ICML}, pages 1113--1120, 2011.

\bibitem{lewis1994sequential}
D.~D. Lewis and W.~A. Gale.
\newblock A sequential algorithm for training text classifiers.
\newblock In {\em SIGIR}, pages 3--12, 1994.

\bibitem{hintsvm}
C.-L. Li, C.-S. Ferng, and H.-T. Lin.
\newblock Active learning with hinted support vector machine.
\newblock In {\em ACML}, 2012.

\bibitem{liu2009learning}
T.-Y. Liu.
\newblock Learning to rank for information retrieval.
\newblock {\em Foundations and Trends in Information Retrieval}, 3(3):225--331,
  2009.

\bibitem{rajaram2007diverse}
S.~Rajaram, C.~K. Dagli, N.~Petrovic, and T.~S. Huang.
\newblock Diverse active ranking for multimedia search.
\newblock In {\em CVPR}, pages 1--8, 2007.

\bibitem{roy2001toward}
N.~Roy and A.~McCallum.
\newblock Toward optimal active learning through monte carlo estimation of
  error reduction.
\newblock In {\em ICML}, pages 441--448, 2001.

\bibitem{rudin2009margin}
C.~Rudin and R.~E. Schapire.
\newblock Margin-based ranking and an equivalence between {AdaBoost} and
  {RankBoost}.
\newblock {\em JMLR}, 10:2193--2232, 2009.

\bibitem{sculley2010combined}
D.~Sculley.
\newblock Combined regression and ranking.
\newblock In {\em KDD}, pages 979--988, 2010.

\bibitem{settles2010active}
B.~Settles.
\newblock Active learning literature survey.
\newblock {\em U. of Wisconsin, Madison}, 2010.

\bibitem{settles2008multiple}
B.~Settles, M.~Craven, and S.~Ray.
\newblock Multiple-instance active learning.
\newblock In {\em NIPS}, 2008.

\bibitem{shen2013active}
W.-Y. Shen and H.-T. Lin.
\newblock Active sampling of pairs and points for large-scale linear bipartite
  ranking.
\newblock In {\em ACML}, pages 388--403, 2013.

\bibitem{steck2007hinge}
H.~Steck.
\newblock Hinge rank loss and the area under the {ROC} curve.
\newblock {\em ECML}, pages 347--358, 2007.

\bibitem{tsoumakas2007multi}
G.~Tsoumakas and I.~Katakis.
\newblock Multi-label classification: {A}n overview.
\newblock {\em International Journal of Data Warehousing and Mining},
  3(3):1--13, 2007.

\bibitem{vapnik1999nature}
V.~Vapnik.
\newblock {\em The nature of statistical learning theory}.
\newblock Springer, 1999.

\bibitem{wu2012two}
K.-W. Wu, C.-S. Ferng, C.-H. Ho, A.-C. Liang, C.-H. Huang, W.-Y. Shen, J.-Y.
  Jiang, M.-H. Yang, T.-W. Lin, C.-P. Lee, et~al.
\newblock A two-stage ensemble of diverse models for advertisement ranking in
  {KDD C}up 2012.
\newblock In {\em ACM SIGKDD KDD-Cup WorkShop}, 2012.

\bibitem{yu2005svm}
H.~Yu.
\newblock S{VM} selective sampling for ranking with application to data
  retrieval.
\newblock In {\em KDD}, pages 354--363, 2005.

\bibitem{yuan2012recent}
G.-X. Yuan, C.-H. Ho, and C.-J. Lin.
\newblock Recent advances of large-scale linear classification.
\newblock {\em Proceedings of the IEEE}, 100(9):2584--2603, 2012.

\end{thebibliography}
\end{document}